\definecolor{dark-red}{rgb}{0.4,0.15,0.15}
\definecolor{dark-blue}{rgb}{0.15,0.15,0.4}
\definecolor{medium-blue}{rgb}{0,0,0.5}
\let\savedalgorithm\algorithm
\let\savedendalgorithm\endalgorithm
\newenvironment{algorithmic}{%
\savedalgorithm
}{%
\savedendalgorithm
}
\newtheorem{dfn}{Definition}   %
\newtheorem{lem}{Lemma}        %
\newtheorem{thm}{Theorem}      %
\def\rboost{{RBoost}\xspace}
\def\randomRank{{RBoost$^{\,\text{rank} }$}\xspace}
\def\randomProj{{RBoost$^{\,\text{proj} }$}\xspace}
\def\brho{{\boldsymbol \rho}}
\def\yi{{y_i}}
\def\bxi{{\bx_i}}
\def\Pyi{{\bP^{(y_i)}}}
\def\Prr{{\bP^{(r)}}}
\def\RR{\Real}
\def\Delta{ \pmb{\delta} }
\def\cone{{\ding{172}}}
\def\ctwo{{\ding{173}}}
\def\cthree{{\ding{174}}}
\def\cfour{{\ding{175}}}
\def\cfive{{\ding{176}}}
\def\csix{{\ding{177}}}
\newcommand{\inner}[2]{\left\langle #1,#2 \right\rangle}
\DeclareMathOperator{\Ycal}{\mathcal{Y}}
\DeclareMathOperator{\Ncal}{\mathcal{N}}
\def\paragraph{\textbf}
\newcommand{\revised}[1]{{\color{blue}#1}}
\renewcommand{\revised}[1]{{\color{black}#1}}
\begin{document}

\title{RandomBoost: Simplified Multi-class Boosting through Randomization}

\author{
         Sakrapee Paisitkriangkrai,
         Chunhua Shen,
         Qinfeng Shi,
         Anton van den Hengel
\thanks
{
}
\thanks
{
The authors are with School of Computer Science,  
The University of Adelaide, SA 5005, Australia.
Correspondence should be addressed to C. Shen 
(e-mail:  chunhua.shen@adelaide.edu.au). 
}
\thanks
{
This work is in part supported by Australian Research Council grants
LP120200485 and FT120100969.
 }
}

\markboth{Working Paper}
{Paisitkriangkrai
\MakeLowercase{\textit{et al.}}: RandomBoost: Simplified Multi-class Boosting through Randomization}

\maketitle

\begin{abstract}

We propose a novel boosting approach to multi-class classification
problems, in which
multiple classes are distinguished by a set of random projection
matrices in essence.
The approach uses random projections to alleviate the proliferation of
binary classifiers typically required to perform multi-class
classification.  The result is a multi-class classifier with a single
vector-valued parameter, irrespective of the number of classes
involved.  Two variants of this approach are proposed.
The first method randomly projects the original data into new spaces,
while the second method randomly projects the
outputs of learned weak classifiers.  These methods are not only
conceptually simple but also effective and easy to implement.  A
series of experiments on synthetic, machine learning and visual
recognition data sets demonstrate that our proposed methods compare
favorably to existing multi-class boosting algorithms in terms of both
the convergence rate and classification accuracy.

\end{abstract}

\begin{IEEEkeywords}
 	Boosting, multi-class classification, randomization,
    column generation, convex optimization. 
\end{IEEEkeywords}

  \tableofcontents
  \clearpage
 
\section{Introduction}

Multi-class classification has not only become an important tool in
statistical data analysis, but also a critical factor in the progress
that is being made towards solving some of the key problems in
computer vision, such as generic object recognition.
Applications of multi-class classification vary, but the objective
in each is to assign the correct class label to each input test
example, whether it be assigning the correct value to a handwritten
digit, or the correct identity to a face.

Boosting is a well-known machine learning algorithm which builds a
strong ensemble classifier
by combining weak learners which are in turn
generated by a base learning oracle.  The fact that a wide variety of
weak learners can be employed makes the algorithm extremely flexible,
yet it has been shown that boosting is robust and seems resistant to
over-fitting in many cases
\cite{Garcia2009Constructing, Sun2010Sparse, Wang2012Fast,MDBoost2010Shen}.
A boosting classifier is made up of a set of weak classification rules
and a corresponding set of coefficients controlling the manner in
which they are combined, and many multi-class variants have been
proposed.
Most of these algorithms reduce the multi-class classification problem
to multiple binary-class problems and learn a coding matrix or a
vector of coefficients for each class
(\eg, \cite{Allwein2001Reducing,Dietterich1995Solving,Guruswami1999Multiclass,Schapire1997Using,Shen2011Totally}).
The main justification for this reduction is the fact that binary
classification problems are well studied and many effective algorithms
have been carefully designed.
In contrast to existing approaches, we propose to learn a single
model with a single vector of coefficients
that is independent of the number of classes.
We achieve this by using random projections as the main tool.
Random projections have been widely used as a dimensionality reduction
technique in many areas, \eg, signal processing \cite{Candes2006Near},
machine learning \cite{Arriaga2006Algorithmic, Fern2003Random},
information retrieval \cite{Fradkin2003Experiments},
data mining \cite{Indyk1998Approximate}, face
recognition \cite{Shi2010Rapid}.  The algorithm is based on the idea
that any input feature spaces can be embedded into a new lower
dimensional space without significantly losing the structure of the
    data or pairwise distances between instances.
We choose random projections since we want to introduce diversity in
the data space (either the original input data space or the weak
classifiers' output space) for multi-class problems while preserving
pairwise relationships.
To our knowledge, this is the first time that random projections are
used to simplify and implement multi-class boosting classification.

Our main contributions are as follows.
\begin{itemize}
\item
    We propose a new form of multi-class boosting which trains a
    single-vector parameterized classifier irrespective of
    the number of classes.
    We illustrate this new approach by incorporating random
    projections and pairwise constraints into the boosting framework.
\item
    Two algorithms are proposed based on this high-level idea. The first
    algorithm randomly projects the original data into new spaces and the
    second algorithm randomly projects the outputs of selected weak
    classifiers.
    We then design multi-class boosting based on the column generation
    technique in convex optimization.

    The first algorithm is optimized in a stage-wise fashion,
    bearing resemblance to RankBoost \cite{Freund2003Efficient} (and
    AdaBoost because of the equivalence of RankBoost and AdaBoost
    \cite{Rudin2009Margin}).
    The optimization procedure of our second method
    is inspired by the totally corrective boosting framework
    \cite{LPBoost,Shen2011Totally}, although
    for the second approach,
    the mechanism for
    generating weak classifiers is entirely different from
    all conventional boosting methods.
    {\em
    Our new design is not only conceptually simple, due to the reduced
    parameter space, but also effective as we  empirically
    demonstrate on various data sets.
    }
\item
    We theoretically justify the use of random projections by
    proving the margin separability in the proposed boosting.
    This theoretical analysis provides some insights in terms of the margin preservation
    and the minimal number of projected dimensions to guarantee margin separability.
\item
    We empirically show that both proposed methods
    perform well. We
    demonstrate some of the
    benefits of the proposed algorithms in a series of experiments.
    In terms of test error rates, our proposed methods are at least as
    well as many existing multi-class algorithms.
    We have made the source code of the proposed boosting methods
    accessible at
    {\url{http://code.google.com/p/boosting/downloads/}}.
\end{itemize}
Next we review the literature related to random projections and
multi-class boosting.

\section{Related work}

\revised{
Random projections have attracted much research interest from various scientific
fields, \eg, signal processing \cite{Candes2006Near},
clustering \cite{Fern2003Random}, multimedia indexing and retrieval \cite{Gionis1999Similarity,Cai2007Scalable,ke2004efficient},
machine learning, \cite{Bingham2001Random, Fradkin2003Experiments} and
computer vision \cite{Shakhnarovich2003Fast, Shi2010Rapid}.
Random projections are a powerful method of dimensionality reduction.
The technique involves taking a high-dimensional data and maps it into
a lower-dimensional space, while providing some guarantee on the approximate
preservation of distance.
Random projections have been successfully applied in many research fields.
One of the most widely used applications of random projections is sparse
signal recovery.
Cand\'es and Tao show that the original signal can be reconstructed within
very high accuracy from a small number of random
measurements \cite{Candes2006Near}.
Random projections have also been applied in data mining as an
efficient approximate nearest neighbour search algorithm.
The search algorithm,  known as locality sensitive hashing (LSH),
approximates the cosine distance in the nearest
neighbour problem \cite{Indyk1998Approximate}.
The basic idea of LSH is to choose a random hyperplane and use
it to hash input vectors to a single bit.
Hash bits of two instances match with probability proportional
to the cosine distance between two instances. Unlike traditional
similarity search, 
LSH has been shown to work
effectively and efficiently for large-scale high-dimensional data.
}

\revised{
In machine learning, random projections have been applied to
both supervised learning and unsupervised clustering problems.
Fern and Broadley show that random projections can be used to
improve the clustering result for high dimensional
data \cite{Fern2003Random}.
Bingham and Manilla compare random projections with several
dimensionality reduction methods on text and image data and
conclude that the random lower dimension subspace yields
results comparable to other conventional dimensionality
reduction techniques  with significantly less computation
time \cite{Bingham2001Random}.
Fradkin and Madigan explore random projections in a supervised
learning context \cite{Fradkin2003Experiments}.
They conclude that random projections offer clear
computational advantage over principal component analysis
while providing a comparable
degree of accuracy. 
Thus far, we are not aware of any
existing works which apply random projections to  multi-class
boosting.
}

Boosting is a supervised learning algorithm which
has attracted significant research attention over the past
decade due to its effectiveness and efficiency.
The first practical  boosting algorithm,
AdaBoost, was introduced  for binary
classification problems \cite{Freund1997Decision}.
Since then, many subsequent works have been
focusing on binary classification problems.
Recently,
however, several multi-class boosting algorithms have been proposed.
Many of these algorithms convert multi-class problems into a set of
binary classification problems.
Here we loosely divide existing work on multi-class boosting into
four categories.

{\em One-versus-all}
The simplest conversion is to reduce the problem of
classifying $k$ classes into $k$ binary problems, where each problem
discriminates a given class from other $k-1$ classes.
Often $k$ binary classifiers are used.
For example, to classify digit `$0$' from all other digits,
one would train the binary classifier with positive samples
belonging to the digit `$0$' and negative samples belonging
to other digits, \ie, `$1$', $\cdots$, `$9$'.
During evaluation, the sample is assigned to the class of
the binary classifier with the highest
confidence.
Despite the  simplicity of  one-versus-all,
Rifkin and Klautau have shown that
one-versus-all can provide performance on par with that of more
sophisticated multi-class classifiers \cite{Rifkin2004Defense}.
An example of one-versus-all boosting is AdaBoost.MH
\cite{Schapire1999Improved}. 

{\em All-versus-all}
In all-versus-all classifiers, the algorithm compares each class to all other classes.
A binary classifier is built to discriminate between each pair of
classes while discarding the rest of the classes. The algorithm thus
builds $\frac{k(k-1)}{2}$ binary classifiers.
During evaluation the class with the maximum number of votes wins.
Allwein \etal conclude that all-versus-all often has 
better generalization performance than one-versus-all algorithm \cite{Allwein2001Reducing}.
The drawback of this algorithm is that the complexity grows
quadratically with the number of classes. Thus it is not scalable in
the number of classes. 

{\em Error correcting output coding (ECOC)}
The above two algorithms are special cases of ECOC.
The idea of ECOC is to associate each class with a codeword which is a row of a coding matrix $\bM \in \Real^{k\times T}$ and $M_{ij} \in \{-1, 0, 1\}$.
The algorithm trains $T$ binary classifiers to distinguish between $k$ different classes.
During evaluation, the output of $T$ binary classifiers (a $T$-bit string) is compared to each codeword and the sample is assigned to the class whose codeword has the minimal hamming distance.
Diettrich and Bakiri report improved generalization ability of this method over the above two techniques \cite{Dietterich1995Solving}.
In boosting, the binary classifier is viewed as weak learner and each is learned one at a time in sequence.
Some well-known ECOC based boostings are AdaBoost.MO, AdaBoost.OC and AdaBoost.ECC \cite{Schapire1997Using, Guruswami1999Multiclass}.
Although this technique provides a simple solution to multi-class
classification, it does not fully exploit the pairwise correlations between
classes.

{\em Learning a matrix of coefficients in a single optimization
problem}
One learns a linear ensemble  for each class.
Given a test example, the label is predicted by
$
 \; {\argmax}_r$ $\sum_{t} W_{rt} h_t(\bx).
$
Each row of the matrix $\bW$ corresponds to one of the classes.
The sample is assigned to the class whose row has the largest value of the weighted combination.
To learn the matrix $\bW$, one can formulate the problem in the
framework of multi-class maximum-margin learning.
Shen and Hao show that the large-margin multi-class boosting can be
implemented
using column generation \cite{Shen2011Direct}.

\revised{
In contrast to previous works on multi-class boosting,
we propose two novel boosting approaches
that learn a single-vector parameterized ensemble classifier to distinguish
between  classes.
We achieve this through the use of random projections and pairwise constraints.
To be  specific, the first algorithm randomly projects
each training datum to a new space.
We then show that the multi-class learning problem can be reduced
to a ranking problem.
For the second algorithm, we train the  multi-class
boosting by randomly projecting the outputs of learned weak classifiers.
}

{\bf Notation}
    We use a bold lowercase letter, \eg, $\bx$, to denote a column
    vector and a bold uppercase letter, \eg, $\bP$, to denote a
    matrix.  The $ij$-th entry of a matrix $\bP$ is written as
    $P_{ij}$.  $P_{i:}$ and $P_{:j}$ are the $i$-th row and $j$-th
    column of $\bP$, respectively.  Let $(\bxi, \yi) \in {\Real}^{d}
    \times \{1,2,\cdots, k\}, i = 1,\cdots, m$ be a set of $m$
    multi-class training samples where $k$ is the number of classes.
    Let $T$ be the maximum number of boosting iterations and the
    matrix, $\bH \in {\Real}^{m \times T}$, denote the weak
    classifiers' response on the training data.  Each column $H_{:t}$
    contains the output of the $t$-th weak classifier $h_t(\cdot)$.
    Each row $H_{i:}$ contains the outputs of all weak classifiers
    from the training instance $\bxi$.

\section{Our approach}
\label{sec:approach}

\begin{figure}[t]
    \centering
        \includegraphics[width=0.48\textwidth]{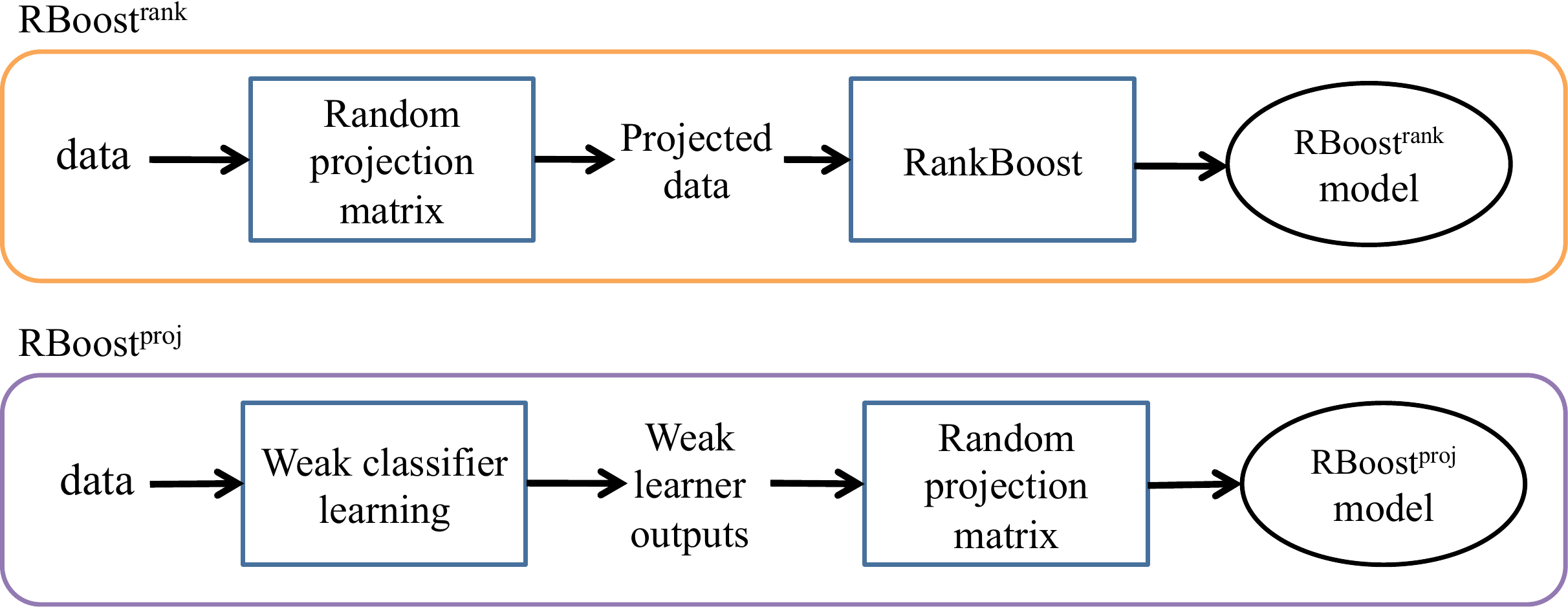}
    \caption{
    Flowchart illustration of \randomRank and \randomProj.
    }
    \label{fig:flowchart}
\end{figure}

\revised{
        Many existing multi-class boosting algorithms learn a strong
        classifier, and a corresponding set of weights, $\bw_r \in
        {\Real}^{T}$, for each class $r$.
        The two novel methods that we propose here, however,
        learn a single vector of weights, $\bw \in
        {\Real}^{T}$, for all classes.
        our
        approaches are  conceptually simple and easy to implement. 
        We illustrate our new approaches by incorporating random projections
        into the boosting framework.
        Since random projections can be applied to either the original raw data
        or other intermediate results (for example, weak classifiers' outputs),
        we can formulate the multi-class problem as:
        $1$) a pairwise ranking
        problem that is based on random projections of the original
        data; and
        $2$) a maximum margin problem that is based on random projections of
        the weak classifiers' outputs.
}

        In our first approach, we generate a random Gaussian
        matrix
        $\bP \in {\Real}^{n \times d}$, whose entry $\bP(i,j)$ is
        $\frac{1}{\sqrt{n}} {a_{ij}}$ where
        $a_{ij}$ is i.i.d.\ random variables from $\Ncal(0,1)$.
        We 
         multiply it with each
        training instance, $\bx \in {\Real}^{d \times 1}$, to obtain a
        projected data vector, $\bP \bx \in
        {\Real}^{n \times 1}$.  The projected vector,
        $ \bP \bx$, approximately preserves all pairwise distances
        of input vector $\bx$,  provided that $\bP$ consists of i.i.d.\
        entries with zero mean and constant variance
        \cite{Arriaga2006Algorithmic}.  In our second approach, we
        generate a random Gaussian matrix, $\bP \in {\Real}^{n
        \times T}$. We then  multiply it with the output of weak learners,
        $\bP [h_1(\cdot), h_2(\cdot), \cdots, h_T(\cdot)]^\T$, to
        obtain a new weak learners' output space, $\bP H_{i:}^{\T} \in
        {\Real}^{n \times 1}$.
\revised{
        In short, both algorithms rely on the use of random
        projections to obtain the single-vector parameterized classifier.
        However, in the first algorithm, the original raw data is randomly
        projected while in the second algorithm,
        the weak learners' outputs are randomly projected.
        A high-level flowchart that illustrates both approaches
        is shown in Fig.~\ref{fig:flowchart}.
}

\subsection{Multi-class boosting by randomly projecting the original
data}

We first formulate the multi-class learning problem as a pairwise
ranking problem.
The basic idea of our approach is to learn a multi-class classifier
from the same instance being projected using $ k $ different random
projection matrices.
We create $k$ pre-defined random projection matrices, $\bP^{(1)},
\bP^{(2)}, \cdots, \bP^{(k)}$, for each class, where the superscript
indicates the class label associated to the random projection matrix.
Given a training instance, $(\bxi, \yi)$, the following condition,
$F(\Pyi \bxi) > F(\Prr \bxi), \forany r \neq \yi$, has to be
satisfied.\footnote{For
simplicity, we omit the model parameter $ \bw$.}
That is to say, the {\em correct} model's response must be larger than
all the {\rm incorrect} models' responses.
We can strengthen this by requiring that
the difference $F(\Pyi \bxi) - F(\Prr \bxi) $ is as large as possible.
Motivated by the large margin principle, we formulate our multi-class problem in
the framework of maximum-margin learning.

\paragraph{Learning using the exponential loss}
Given that we have $m$ training samples with $k$ classes,
the total number of such pairwise relations is $m(k-1)$.
Putting it into the large-margin learning framework,
we can define the {\em margin} associated with the above condition as, $F(\Pyi
\bxi) - F(\Prr \bxi)$,
which can be explicitly  rewritten as,
\begin{align}
    \label{EQ:rank1}
        \rho_{ir} &= F(\Pyi \bxi) - F(\Prr \bxi)  \\ \notag
             &= {\textstyle \sum_{t=1}^T } h_t(\Pyi \bxi) w_t -
                {\textstyle\sum_{t=1}^T } h_t(\Prr \bxi) w_t \\ \notag
             &= {\textstyle\sum_{t=1}^T }  \delta h_t(\Pyi, \Prr, \bxi)  w_t
              \\ \notag
             &= \dH(\Pyi, \Prr, \bxi) ^\T \bw, %
\end{align}
where $\delta h_t(\Pyi, \Prr, \bxi) = h_t(\Pyi \bxi) - h_t(\Prr \bxi)$,
and
\[
\dH (\cdot, \cdot, \cdot) = \bigl[
\delta h_1(\cdot, \cdot, \cdot),
\delta h_2(\cdot, \cdot, \cdot),
\cdots,
\delta h_T(\cdot, \cdot, \cdot)
\bigr] ^\T  \in \Real^ { T \times 1}
\] is a column vector.
    The purpose is to learn a regularized model
    that satisfies as many constraints,
    $ \rho_{ir} > 0  $, as possible. That is to say, we minimize the
    training error of the model with a controlled capacity.
In theory, both of the  two proposed boosting
methods  can employ any convex loss function. We first show how to
derive the boosting iteration using the exponential loss. Later we
generalize it to any convex loss. 

With the exponential loss, the primal problem can be written as,
\begin{align}
    \label{EQ:rankPrimal}
        \min_{ \bw, \boldsymbol \rho}   \quad
        &
        \log \bigl( \sum\nolimits_{ir} \exp \left( -\rho_{ir} \right) \bigr) +
        \nu  {\bf 1}^{\T} \bw   \\ \notag
        \st \quad &
        \rho_{ir} =  \dH(\Pyi, \Prr, \bxi) ^\T  \bw, \forany \, {\rm pair
        } \, (
        ir ); \, \bw \geq 0,
\end{align}
    where $(ir)$ represents the joint index through all of the data
    and all of the classes.  Taking the logarithm of the original cost
    function does not change the nature of the problem
    as $\log( \cdot )$ is
    strictly monotonically increasing. This formulation is similar to
    the binary totally corrective boosting discussed in
    \cite{Shen2011Totally}.  Also we have applied the $ \ell_1 $ norm
    regularization as in
    \cite{Shen2011Totally,LPBoost} to control the model complexity.

    If we can solve the optimization problem \eqref{EQ:rankPrimal},
    the learned model can be easily obtained. Unfortunately,
    the number of weak learners is usually extremely large or even
    infinite, which corresponds to an extremely or infinite
    large number  of variables $ \bw $,
    it is usually intractable to solve \eqref{EQ:rankPrimal}
    {\em directly}.
    Column generation can be used to approximately solve this problem
    \cite{Shen2011Totally, LPBoost}.
    We need to derive a meaningful Lagrange dual problem such that
    column generation can be applied.
    The Lagrangian is
    $ L = \log (\sum_{ir} \exp( - \rho_{ir}  )  )   + \nu {\bf 1} ^ \T \bw
    - \bu ^\T {\boldsymbol \rho }  +  \sum_{ir} u_{ir} \dH (\Pyi, \Prr, \bxi)
    - \bq ^\T \bw.
    $ with $ \bq \geq 0 $.
    The dual problem can be obtained as
    $  \sup_{ \bu }  \inf_{ \bw, \boldsymbol \rho } L   $.

    The Lagrange dual problem can be derived as
    \begin{align}
    \label{EQ:rankDual}
        \min_{ \bu }   \quad
        &
        {\textstyle \sum_{ir}} u_{ir} \log( u_{ir} ) \\ \notag
        \st \quad &
        {\textstyle \sum_{ir}} u_{ir} \dH(\Pyi, \Prr, \bxi) ^\T
               \leq \nu
        {\boldsymbol 1} ^\T,
        \;
        \bu \geq 0, {\bf 1}^{\T} \bu = 1.
\end{align}
As is the case of AdaBoost \cite{Shen2011Totally}, the dual is a Shannon entropy maximization problem.
The objective function of the dual encourages the dual variables, $\bu$, to be uniform.
The Karush-Kunh-Tucker (KKT) optimality condition gives the
relationship between the optimal primal and dual variables:
\begin{align}
    \label{EQ:rankKKT}
        u_{ir} = \frac{ \exp(-\rho_{ir}) }{ \sum_{ir} \exp( -\rho_{ir} )}. %
\end{align}
The primal problem can be solved using an efficient Quasi-Newton
method like L-BFGS-B, and the
dual variables can be obtained using the KKT condition.
From the dual, the subproblem for generating weak classifiers is,
\begin{align}
    \label{EQ:rankWeak}
    h^{\ast}(\cdot)
     = \argmax_{h(\cdot) \in \mathcal{H}}
     {\sum_{ir}} u_{ir} \delta h(\Pyi,\Prr, \bxi).
\end{align}
This corresponds to find the most violated constraint of the dual
problem \eqref{EQ:rankDual}.
The details of our ranking based multi-class boosting algorithm are
given in Algorithm~\ref{ALG:alg1}.
\SetKwInput{KwInit}{Initilaize}
\SetVline
\begin{algorithm}[t]
\caption{\footnotesize Column generation based \randomRank.
}
\begin{algorithmic}
\scriptsize{
   \KwIn{
     \\$-$   A set of examples $( \bx_i,y_i ) $, $i=1 \cdots m$;
     \\$-$   The maximum number of weak classifiers, $T$;
     \\$-$    Random projection matrices, $\Prr \in {\Real}^{n \times d}$, $r = 1 \cdots k$;
   }

   \KwOut{
      The learned multi-class classifier
      $F(\bx) = \argmax_{r=1\cdots k} \sum_{t=1}^{T} w_t h_t( \Prr
      \bx)$. %
}

\KwInit {
   \\$-$      $t \leftarrow 0$;

   \\$-$      Initialize sample weights, $u_{ir} = \frac{1}{(m-1)k}$;
}

\While{ $t < T$ }
{
  \cone\ Train a weak learner, $h_t(\cdot)$, using \eqref{EQ:rankWeak};
  \\ \ctwo\ If the stopping criterion,
  ${\textstyle \sum_{ir}} u_{ir} \delta h_t(\Pyi, \Prr, \bx_i) \leq \nu + \epsilon$,
   has been met, stop the training;
  \\ \cthree\ Add the best weak learner, $h_t(\cdot)$,  into the current
  set, by solving \eqref{EQ:rankWeak};
  \\ \cfour\ Solve the primal problem, \eqref{EQ:rankPrimal}, or dual problem \eqref{EQ:rankDual};
  \\ \cfive\  If the primal problem is solved, update sample weights (dual
  variables) using \eqref{EQ:rankKKT};
  \\ \csix\ $t \leftarrow t + 1$;
}
} %
\end{algorithmic}
\vspace{-.2cm}
\label{ALG:alg1}
\end{algorithm}

\paragraph{Stage-wise boosting}
The advantage of the algorithm outlined above is that it is {\em totally
corrective} in that the primal variables, $\bw$, are updated at each
boosting iteration.  However, the training of this approach can be
expensive when the number of training data and classes are large.  In
this section, we design a more efficient approach by minimizing the
loss function in a stage-wise manner, similar to those derived in
AdaBoost.  Looking at the primal problem \eqref{EQ:rankPrimal}, the
optimal $\bw$ can be calculated analytically as follows.
At iteration $t$, we fix the value of $w_1$, $w_2$, $\cdots$, $w_{t-1}$.
So $w_t$ is the only variable to optimize.
The primal cost function can then be written as\footnote{We can simply
set  $ \nu $ to be zero in stage-wise boosting.
Following the framework of gradient-descent boosting of
\cite{Friedman2000Additive,masonboosting}, we can obtain
the same formulation as described here.
}
\begin{align}
    \label{EQ:stagewise1}
    F_p  = \sum_{ir} Q_{ir} \exp \bigl( - \delta h_t(\Pyi,\Prr,\bxi) w_t \bigr),
\end{align}
where $Q_{ir} = \exp \bigl( -\sum_{j=1}^{t-1}  \delta h_j(\Pyi,\Prr,\bxi) w_j \bigr)$.
If we use discrete weak learners, $h(\cdot) \in \{-1,+1\}$,
then $\delta h_t(\cdot) \in \{-2,0,2\}$, and
$F_p$ can be simplified into:
\begin{align}
    \label{EQ:stagewise2}
    F_p = { \sum_{\delta h_t=0}} Q_{ir} & + { \sum_{\delta h_t=2}} Q_{ir} \exp( -2 w_t )
      + { \sum_{\delta h_t=-2}} Q_{ir} \exp( 2 w_t ).
\end{align}
Let $Q_{+} = \sum_{\delta h_t=2} Q_{ir}$ and $Q_{-} = \sum_{\delta h_t=-2} Q_{ir}$,
then $F_p$ is minimized when
\begin{align}
    \label{EQ:stagewise3}
    w_t = \frac{1}{4} \log \left( \frac{Q_{+}}{Q_{-}} \right).
\end{align}

When real-valued weak learners are used, with the output in $\left[-1, 1\right]$, we
can calculate $w_t$ by minimizing the upper bound of $F_p$ as follows,
\begin{align*}
    F_p \leq \sum_{ir} Q_{ir} \Bigl[ & 0.5 \exp(w_t) \bigl(1 - \delta h_t(\Pyi,\Prr,\bxi) \bigr) \\ \notag
         &+ 0.5 \exp(-w_t) \bigl(1 + \delta h_t(\Pyi,\Prr,\bxi) \bigr) \Bigr].
\end{align*}
Here we have used the fact that $\exp(-w h) \leq 0.5 \exp(w) (1 - h) +
0.5 \exp(-w) (1 + h)$.
Similarly $ F_p $ is minimized when
\begin{align}
    \label{EQ:stagewise5}
    w_t = \frac{1}{2} \log \left( \frac{1+b}{1-b} \right),
\end{align}
where $b =  {\textstyle \sum_{ir}} Q_{ir} \delta h_t(\Pyi,\Prr,\bxi)$.
For the stage-wise boosting algorithm, we simply replace Step \cfour\
 in Algorithm~\ref{ALG:alg1} with \eqref{EQ:stagewise3} or
\eqref{EQ:stagewise5}.  Note that the formulation of our stage-wise
boosting is similar to that of RankBoost proposed by Freund \etal
\cite{Freund2003Efficient}.
Besides the efficiency of optimization at each iteration, this
stage-wise optimization does not have any parameter to tune. One only
needs to determine when to stop. The disadvantage, compared with
totally corrective boosting
\cite{Shen2011Totally}, is that it may need more iterations to
converge.

\paragraph{General convex loss}
The following derivations are based on the important concept of convex conjugate or
Fenchel duality from convex optimization.

\begin{dfn}[Convex Conjugate]
    Let $ f: \Real ^ n \rightarrow \Real $. The function $ F^* : \Real ^ n \rightarrow
    \Real $, defined as
    \begin{equation}
        \label{EQ:App1}
        f^*  ( \bu ) = \sup_{ \bx \in {\rm dom } f }  [  \bu^\T \bx - f( \bx)  ],
    \end{equation}
    is the convex conjugate or Fenchel duality of the function $ f(\cdot) $. The domain of
    the conjugate function consists of $ \bu \in \Real^n$ for which the supremum is
    finite.
\end{dfn}

It is easy to verify that $ f^* (\cdot) $ is always convex since it is the point-wise
supremum of a family of affine functions of $ \bu $. This holds even if $ f(\cdot) $ is
not a convex function.

    If $ f(\cdot) $ is convex and closed, then $ f^{**}  = f$.
    For a point-wise  loss function,
    $ \lambda ( \boldsymbol \rho ) =  \sum_i \lambda ( \rho_i )  $,
    the convex conjugate of the sum is the sum of the convex conjugates:
    \begin{align}
        \lambda^* ( \bu )  = \sum_{\boldsymbol \rho}
        \left\{
                \bu ^\T { \boldsymbol \rho }
                - \sum_i \lambda ( \rho_i  )
        \right\}
        &= \sum_i \sup_{ \rho_i } \{
                        u_i \rho_i - \lambda(\rho_i )
        \}
        \notag
        \\
        &= \sum_i \lambda ^*  ( u_i ).
        \label{EQ:App1B}
    \end{align}
    We consider functions of Legendre type here, which
    means, the gradient $ f'(\cdot) $ is defined on the domain of $ f(\cdot) $
    and is an isomorphism between the domains of $ f(\cdot) $ and $ f^* ( \cdot ) $.

    The general $ \ell_1 $-norm  regularized  optimization problem we want to learn a
    classifier is
    \begin{align}
        \label{EQ:App2}
        \min_{ \bw, \boldsymbol \rho }
        \;
        &
        \;
        \sum_{ ir }  \lambda (   \rho_{ir}   )  + \nu {\bf 1} ^\T \bw
        \notag
        \\
        \st \; & \rho_{ir} = \dH (  \P^{ ( y_i )  } , \P^{  (  r  )  } , \bx_i  ) ^\T \bw ,
        \bw \geq 0.
     \end{align}
        Here $  \lambda( \cdot )   $ is a  convex surrogate of the zero-one loss, \eg,
        the exponential loss, logistic regression loss.
        We assume that $  \lambda( \cdot )   $ is smooth.  

        Although the variable of interest is $ \bw$,
        we need to keep the auxiliary variable $ \boldsymbol \rho $
        in order to derive a meaningful dual problem. The  Lagrangian is
        \begin{align*}
            L & =  \sum_{ir} \lambda( \rho_{ir}  )   + \nu {\bf 1}^\T \bw \notag \\
               & \quad
               - \sum_{ ir } u_{ir}
                ( \rho_{ir}  -
                \dH (  \P^{ ( y_i )  } , \P^{  (  r  )  } , \bx_i  ) ^\T \bw )
                - \bq ^\T
                 \bw
                \\
                & =
                 \Bigl[
                 \nu {\bf 1} ^\T  +
                        \sum_{ ir } u_{ir}
                \dH (  \P^{ ( y_i )  } , \P^{  (  r  )  } , \bx_i  ) ^\T
                - \bq \Bigr] \bw
                 \notag
                \\
               &  \quad
                 - \left[
                 \sum_{ir} u_{ir} \rho_{ir} - \sum_{ir} \lambda( \rho_{ir} )
                 \right].
        \end{align*}
        In order for $ L $ to  have finite infimum over the primal variables,
        the first term of $ L $ must be zero, which leads to
        \begin{equation}
            \label{EQ:App10}
            \nu {\bf 1} ^\T  +
                        \sum_{ ir } u_{ir}
                \dH (  \P^{ ( y_i )  } , \P^{  (  r  )  } , \bx_i  ) ^\T \geq 0.
        \end{equation}
        The infimum of the second term of $ L $ is  $  - \sum_{ir} \lambda ^* ( u_{ir} )  $
        by using \eqref{EQ:App1} and \eqref{EQ:App1B}.
        Therefore the Lagrange dual problem of \eqref{EQ:App2} is
        \begin{align}
            \label{EQ:AppDual1}
                \max_{ \bu }
        \;
        &
        \;
        - \sum_{ir} \lambda^*( u_{ir} ), 
        \;\, \st  \,  \eqref{EQ:App10}.
        \end{align}
        We can reverse the sign of the dual variable $ \bu $
        and rewrite \eqref{EQ:AppDual1} into its equivalent form
            \begin{align}
            \label{EQ:AppDual2}
            \min_{ \bu }
        \;
        &
        \;
        \sum_{ir} \lambda^*( - u_{ir} )
        \notag
        \\
        \st  \; &
        \sum_{ir} u_{ir} \dH (  \P^{ ( y_i )  } , \P^{  (  r  )  } , \bx_i  ) ^\T \leq \nu
        {\bf 1 }^\T.
        \end{align}
        The KKT condition between the primal \eqref{EQ:App2}
        and the dual \eqref{EQ:AppDual2} shows the relation of the primal and dual
        variables
        \begin{equation}
            \label{EQ:KKT-APP}
            u_{ir} = - \lambda'( \rho_{ ir } ),
        \end{equation}
        which holds at optimality.
        The dual variable $\bu$  is the negative gradient of the loss at
        $ \rho_{ir} $. This can be  obtained by setting the first derivative of $ L $ to
        be zeros. Under the assumption that both the primal and dual problems are feasible
        and the Slater's condition satisfies, strong duality holds between the primal and
        dual problems.

        We need to use column generation to {\em approximately} solve the original problem
        because the dimension of the primal variable $ \bw $ can be extremely large or
        infinite.  The high-level idea of column generation is to
         only consider a small subset of the variables in the primal; i.e., only a
        subset of $ \bw $ is considered. The problem solved using this
        subset is called the
        restricted master problem (RMP).
        It is well known that each primal variable corresponds to a
        constraints in the dual problem.
        Solving RMP is equivalent to solving a relaxed version of the
        dual problem. 
        With a finite $ \bw $, the set of constraints in the dual problem are finite,
        and we can solve the dual problem such that it satisfies all
        the existing constraints. If we can prove
        that among all the constraints that we have not added to the dual problem, no
        single constraint is violated, then we can draw the conclusion
        that solving the restricted
        problem is equivalent to solving the original problem. Otherwise, there exists at
        least one constraint that is violated. The violated constraints correspond to
        variables in primal that are not in RMP. Adding these variables to RMP leads to a
        new RMP that needs to be re-optimized.
        To speed up convergence, one typically finds the most violated
        constraint in the dual by solving the following problem,
        according to the constraint in
        \eqref{EQ:AppDual2}:
        \begin{equation}
            \label{EQ:AppWL}
            \max_{ h(\cdot)  }  \sum_{ir} u_{ir} \delta h(\Pyi,\Prr, \bxi).
        \end{equation}

        We only need to change the primal and dual problems involved in
        Algorithm \ref{ALG:alg1} to obtain the column generation based multi-class random boosting with
        a {\em general} convex loss function. Specifically, only two lines need a change
        in Algorithm \ref{ALG:alg1} and the rest remains identical:

        Step \cfour:  Solve the primal problem \eqref{EQ:App2}, or the dual problem
        \eqref{EQ:AppDual2};

        Step \cfive: If the primal problem is solved, update the dual variable $ \bu $
        using \eqref{EQ:KKT-APP}.

        Note that the derivation of the dual problem \eqref{EQ:rankDual} also
        follows the above analysis (using the fact that the convex conjugate of the log-sum-exp function is the Shannon entropy).
        Mathematically the convex conjugate of $ f(\bx) = \log(
        \sum_i x_i ) $ is  $ f^*( \bu ) = \sum_i u_i \log u_i, $ if $ \bu \geq 0 $ and $
        \sum_i u_i = 1$; otherwise  $ f^*( \bu ) = \infty $.

\subsection{Multi-class boosting by randomly projecting  weak
classifiers' outputs}

\label{Sec:app2}

In contrast to the approach proposed in the previous section, where we
randomly project the original data to new spaces, we can also randomly
project the output of weak classifiers, $\bH$, to new spaces.
Our intuition is that if $\bH$ is linearly separable then the randomly projected data, $\bP \bH^{\T}$,
is likely to be linearly separable as well, as long as the random
projection matrices satisfy some mild assumptions
\cite{Fradkin2003Experiments}.
As in the previous approach, we learn a multi-class classifier based
on pairwise comparisons.
We create $k$ pre-defined random projection matrices, $\bP^{(1)}, \bP^{(2)}, \cdots, \bP^{(k)}$, one for each class.
Given a training instance $(\bxi, \yi)$ and the weak classifiers'
responses, $H_{i:}$, the  condition $\Pyi H_{i:}^{\T} \bw  >  \Prr
H_{i:}^{\T} \bw, \forany r \neq \yi$ has to be satisfied.
The intuition is the same as in the previous case:
the correct model's response should be
larger than all the incorrect models' responses.
Note that in this approach, $\bw \in {\Real}^{n}$, \ie, it has a fixed
size and is independent of the number of boosting iterations (as
compared to the previous approach where the size of $\bw$ is equal to
the number of boosting iterations, $\bw \in {\Real}^{T}$).  We define
a margin associated with the above condition as $\rho_{ir} = \Pyi
H_{i:}^{\T} \bw  - \Prr H_{i:}^{\T} \bw$.  Now  the margin has
been defined, and the learning  can be
solved within the large-margin framework, as described in the previous section.
We now apply the
logistic loss due to its robustness in handling noisy data
\cite{Friedman2000Additive}. Again, any other convex surrogate loss
can be used.
Since the projected space, $\bP \bH^{\T}$, can also be much larger
than the original space, $\bH$, we expect that some projected features
might turn out to be irrelevant.
We also apply the $\ell_1$-norm regularization as in
\cite{Shen2011Totally}, resulting in the following learning problem:
\begin{align}
    \label{EQ:logPrimal}
        \min_{ \bw, \brho }   \quad
        &
        \frac{1}{mk} \sum_{i=1}^{m} \sum_{r=1}^{k} \log \bigl( 1 + \exp \left( -\rho_{ir} \right) \bigr) +
        \nu  {\boldsymbol 1}^\T \bw     \\ \notag
        \st \quad &
        \rho_{ir} =  \Pyi H_{i:}^{\T} \bw  - \Prr H_{i:}^{\T} \bw, \forall i, \forall r;
        \quad
        \bw \geq 0.
\end{align}
Note that $\bw \geq 0$ enforces the non-negative constraint on $\bw$.
The Lagrangian of \eqref{EQ:logPrimal} can be written as
\begin{align}
    \label{EQ:log2}
    L =& \frac{1}{mk} \sum_{i,r} \log \bigl( 1 + \exp\left( -\rho_{ir} \right) \bigr) +
            \nu {\boldsymbol 1}^\T  \bw   \\ \notag
        &
        -\sum_{i,r} u_{ir}
        ( \rho_{ir} - \Pyi H_{i:}^{\T} \bw + \Prr H_{i:}^{\T} \bw -  \bp^{\T} \bw ),
\end{align}
with $ \bu \geq 0$ and $\bp \geq 0$.
At optimum, the first derivative of the Lagrangian w.r.t.\ the primal variables, $\bw$, must be zeros,
\begin{align}
    \label{EQ:log3}
    \frac{\partial L}{\partial \bw } =  {\bf 0}
    &\rightarrow
        \sum_{i,r}
        u_{ir}  \left( \Pyi - \Prr \right) H_{i:}^{\T}
        = \bp^{\T} - \nu {\bf 1}^{\T} \\ \notag
    &\rightarrow
        \sum_{i,r}
            u_{ir} \Delta \P (y_{i}, r) H_{i:}^{\T}
            = \bp^{\T} - \nu {\bf 1}^{\T},
\end{align}
where $\Delta \P (y_{i}, r) = \Pyi - \Prr$.
By taking the infimum over the primal variables, $\rho_{ir}$,
\begin{align}
    \label{EQ:log4}
    \frac{\partial L}{\partial \rho_{ir}} = 0 \rightarrow
        \rho_{ir} = -\log\left( \frac{-m k u_{ir}}{m k u_{ir}-1} \right), \forall i, \forall r,
\end{align}
and
\begin{align}
    \label{EQ:log5}
    \inf_{\rho_{ir}} L =
       & \frac{1}{mk}
        \sum_{i,r}  \Bigl[  - (1 + mk u_{ir})  \log\left(1+mk u_{ir}\right)
                 \\ \notag
                 &
                 -mk u_{ir} \log \left( -mk u_{ir} \right)  \Bigr].
\end{align}
Reversing the sign of $\bu$, the Lagrange dual can be written as
\begin{align}
    \label{EQ:logDual}
        \max_{ \bu }   \quad
        &
        -\frac{1}{mk} \sum_{i=1}^{m} \sum_{r=1}^{k}
            \Bigl[mk u_{ir} \log \left( mk  u_{ir} \right)
            +
            \\ \notag & \qquad
            \left(1 - mk u_{ir}\right) \log \left( 1 - mk
            u_{ir} \right) \Bigr]
         \\ \notag
        \st \quad &
        \sum_{i,r}
            u_{ir}  \Delta \P (y_{i}, r) H_{i:}^{\T}
            < \nu {\bf 1} ^{\T}. %
\end{align}
    Note that here the number of constraints is equal to the size of
    the new space ($n$).  At each iteration, we choose the weak
    learner, $h_t(\cdot)$, that most violates the dual constraint in
    \eqref{EQ:logDual}.  The subproblem of generating the weak
    classifier at iteration $t$ can then be expressed as:
\begin{align}
    \label{EQ:logWeak}
    h^{\ast}(\cdot) = \argmax_{h(\cdot), v }
    \;
     \sum_{i,r}  u_{ir}  \left[ H_{i,1:t-1}, h(\bxi) \right] \Delta
     { \bp} (y_{i}, r; v ),
\end{align}
$v = 1, \ldots, n, \,\, \forall h(\cdot) \in \mathcal{H}$ and
\[
\Delta { \bp} (y_{i}, r ; v ) =
\left[ P^{(y_i)}_{v1} - P^{(r)}_{v1},
       \cdots,
       P^{(y_i)}_{vt} - P^{(r)}_{vt}
       \right]^\T \in {\Real}^{T \times 1}.
    \]
Here a significant difference compared with conventional boosting is that
the selection of the current best weak classifier depends on all
previously
selected weak classifiers.

    The idea behind our approach is that  performance improves as more
    weak classifiers, $h(\cdot)$, are added to the constraint.
    This process can continue
    as long as there exists at least one constraint that is violated, \ie,
    \[
    \max \left( {\textstyle \sum}_{i,r} U_{ir} \Delta \P (y_{i}, r) H_{i,1:t}^{\T}
    \right) < \nu + \epsilon,
    \]
    or when adding an additional weak classifiers ceases to have a significant impact on the objective value of \eqref{EQ:logPrimal}, \ie,
    $  \left| \frac{ \Opt_{t-1} - \Opt_{t} }{ \Opt_{t-1} } \right| < \epsilon$.
    In our experiments we use the latter as our stopping criterion.
    Through the KKT optimality condition, the gradient of Lagrangian \eqref{EQ:log2} over primal variables,
    $\brho$, and dual variables, $\bu$, must vanish at the optimum.
    The relationship between the optimal value of $\brho$ and $\bu$ can be expressed as
\begin{align}
    \label{EQ:logKKT}
    u_{ir} = \frac{ \exp(-\rho_{ir}) }{ mk \bigl( 1+\exp(-\rho_{ir}) \bigr) }.
\end{align}
The details of our random projection based multi-class boosting algorithm are given in Algorithm~\ref{ALG:alg2}.

\SetKwInput{KwInit}{Initilaize}
\SetVline

\begin{algorithm}[t]
\caption{\footnotesize Column generation based \randomProj.
}
\begin{algorithmic}
\scriptsize{
   \KwIn{
     \\$ - $    A set of examples $ ( \bx_i,y_i ) $, $i=1 \cdots m$;
     \\$ - $    The maximum number of weak classifiers, $T$;
     \\$ - $    Random projection matrices, $\Prr \in {\Real}^{n \times T}$, $r = 1 \cdots k$;
   }

   \KwOut{
      A multi-class classifier \\
      $F(\bx) = \argmax_r \Prr [h_1(\bx), \cdots, h_T(\bx)]^{\T} \bw$. %
}

\KwInit {
   \\$ - $      $t \leftarrow 0$;

   \\$ - $      $H = \emptyset	$;

   \\$ - $      Initialize sample weights, $u_{ir} = \frac{1}{mk}$;
}
\While{ $t < T$ }
{
  \cone\     Train a weak learner, $h_t(\cdot)$, using \eqref{EQ:logWeak};
  \\ \ctwo\  If the stopping criterion,
   $  \left| \frac{ \Opt_{t-1} - \Opt_{t} }{ \Opt_{t-1} } \right| < \epsilon$,
   has been met, stop the training;
  \\ \cthree\ Add the best weak learner, $h_t(\cdot)$, into the current set $H$;
  \\ \cfour\  Solve the primal problem, \eqref{EQ:logPrimal}, \eg, using
  Quasi-Newton methods such as L-BFGS-B;
  \\ \cfive\  Update sample weights (dual variables) using \eqref{EQ:logKKT};
  \\ \csix\  $t \leftarrow t + 1$;
}
} %
\end{algorithmic}
\label{ALG:alg2}
\vspace{-.2cm}
\end{algorithm}

The use of general convex loss here follows the similar generalization
procedure as shown in the previous section. 

\begin{table*}
  \begin{center}
  \begin{tabular}{l|cc}
  \hline
   &  \randomRank   &  \randomProj  \\
  \hline
  \hline
    Pre-processing step for decision stumps  & $O\bigl(n mk \log(mk)\bigr)$ & $O(d m \log m)$ \\
    At each iteration    &   &      \\
    \hspace{3mm}Train the weak learner (decision stump)  & $O(nmk)$ &  $O(nmk + nmd)$ \\
    \hspace{3mm}Solve the optimization problem (\rboost) &  $O(mk)$ &  $O(n^{3})$ \\
  \hline
    Total computational complexity & $O\bigl(n mk \log(mk) + nmkT \bigr)$ &
        $O\bigl(d m \log m + nm(k+d)T + n^{3}T \bigr)$ \\
  \hline
  \end{tabular}
  \end{center}
  \caption{Computational complexity of \rboost.
  $m$ is the number of training samples.
  $n$ is the number of projected dimensions.
  $k$ is the number of classes.
  $d$ is the dimension size of the original data.
  $T$ is the number of iterations.
  Note that weak classifier training (learning decision stumps) 
take up most of the computation time for both methods.
  }
  \label{tab:time_comp}
\end{table*}

\subsection{Computational complexity}
\revised{
    We analyze the complexity of our new approaches in this section.
    For the sake of completeness, we also analyze the computational
    complexity of training weak classifiers.  For simplicity, we use a
    decision stump as our weak classifier.  Note that any weak
    classifier algorithms can be applied here.  For fast training of
    decision stumps, we first sort feature values and cache sorted
    results in memory.
At each boosting iteration, all decision stumps' thresholds will be searched
and the optimal decision stump $h^{\ast}(\cdot)$, which satisfies
\eqref{EQ:rankWeak} or \eqref{EQ:logWeak}, will be saved as the weak learner for the $t$-iteration.
For \randomRank (Algorithm~\ref{ALG:alg1}), the total number of pairwise relationships is $m(k-1)$.
We first sort features in each projected dimension.
This pre-processing step  requires $O\bigl(n mk \log(mk)\bigr)$ for sorting $n$ dimensions.
In Step \cone\, we train decision stumps for each projected dimension.
Step \cone\ takes $O(nmk)$.
Step \cfour\ can simply be ignored since it can be solved efficiently
using \eqref{EQ:stagewise3} or \eqref{EQ:stagewise5}.
Let the maximum number of iterations be $T$, the time complexity is $O(nmkT)$.
The total time complexity for \randomRank is  $ O\bigl(n mk \log(mk) +
nmkT \bigr)$.

For \randomProj (Algorithm~\ref{ALG:alg2}), the time required to sort $d$ features is
$O(d m \log m)$.
Step \cone\ finds the optimal weak learner that satisfies \eqref{EQ:logWeak}.
The multiplication, $u_{ir} \Delta {\bp}(y_{i}, r; v )$,  in \eqref{EQ:logWeak} takes $O(nmk)$
for all $n$ dimensions.
Training decision stumps requires $O(nmd)$.
Hence, Step \cone\ requires $O(nmk + nmd)$.
In Step \cfour\, we solve $n$ variables at each iteration.
Let us assume the computational complexity of L-BFGS-B is roughly cubic.
Hence, the time complexity for $T$ boosting iterations is $O\bigl(nm(k+d)T + n^{3}T \bigr)$ and
the total time complexity for \randomProj is $ O\bigl(d m \log m + nm(k+d)T + n^{3}T \bigr)$.
The computational complexity of both approaches is summarized in Table~\ref{tab:time_comp}.
Note that weak classifier training (learning decision stumps) 
take up most of the computation time for both methods.
}

\begin{figure*}[t]
    \begin{center}
        \includegraphics[width=0.23\textwidth,clip]{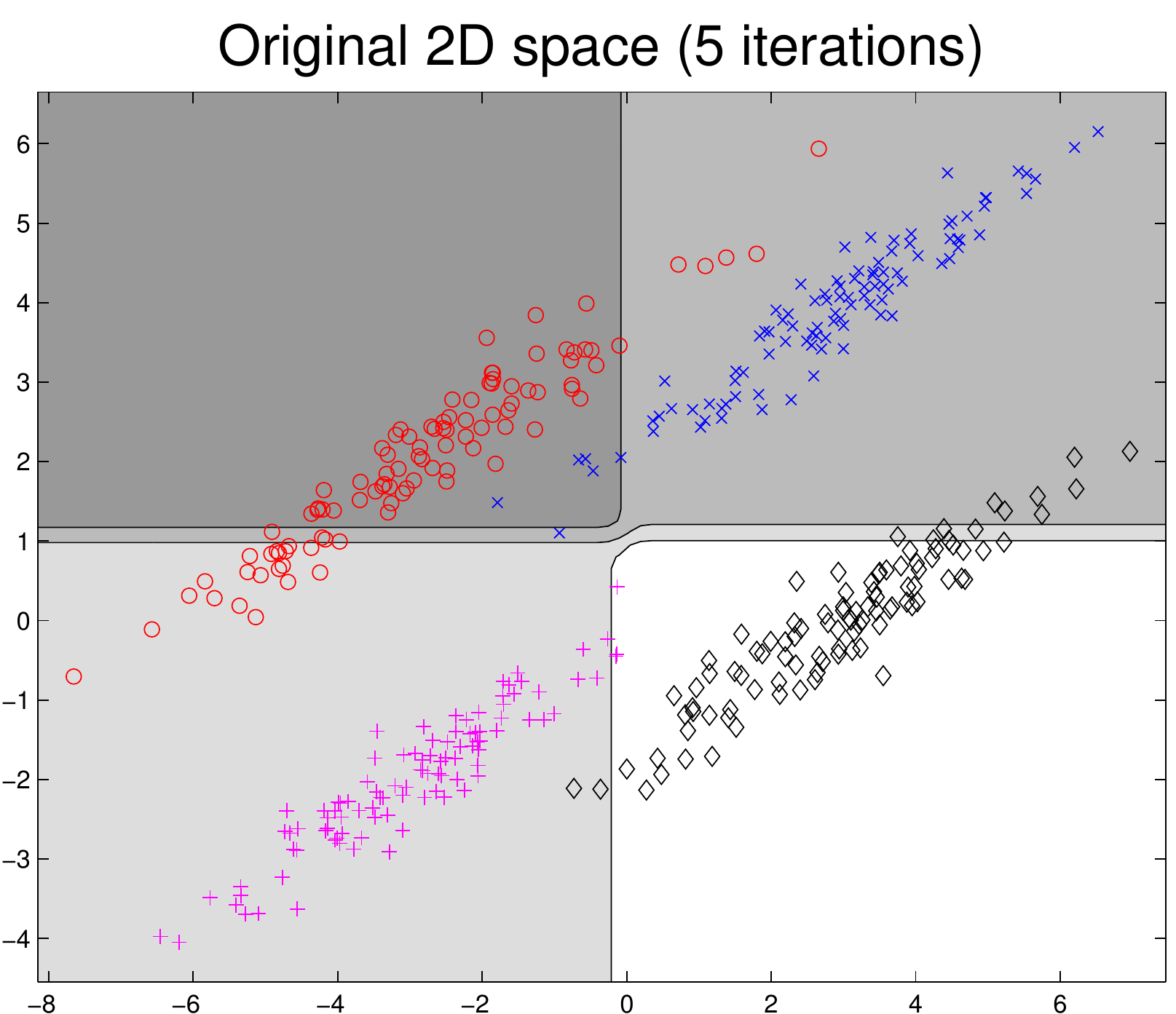}
        \includegraphics[width=0.23\textwidth,clip]{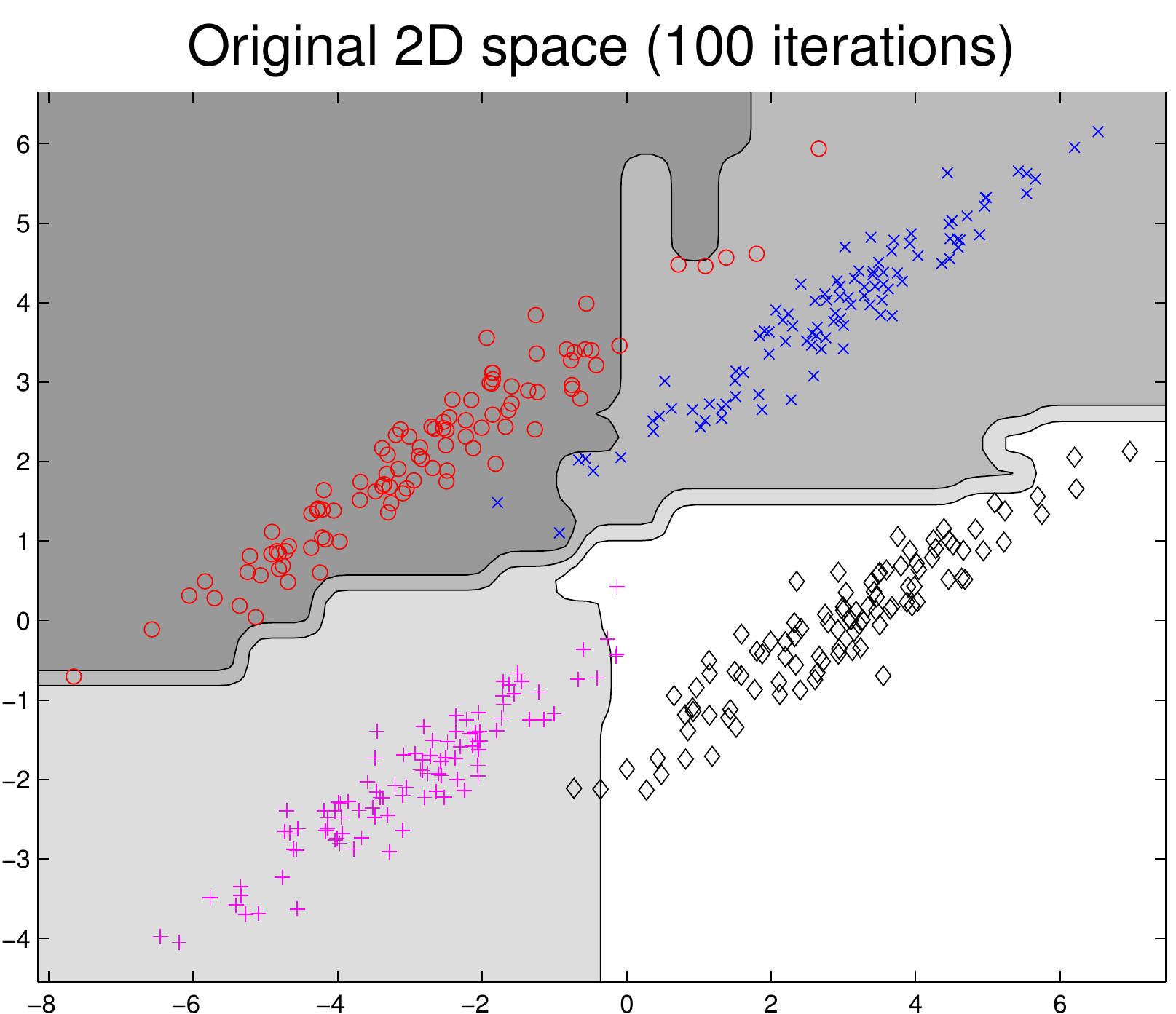}
        \includegraphics[width=0.23\textwidth,clip]{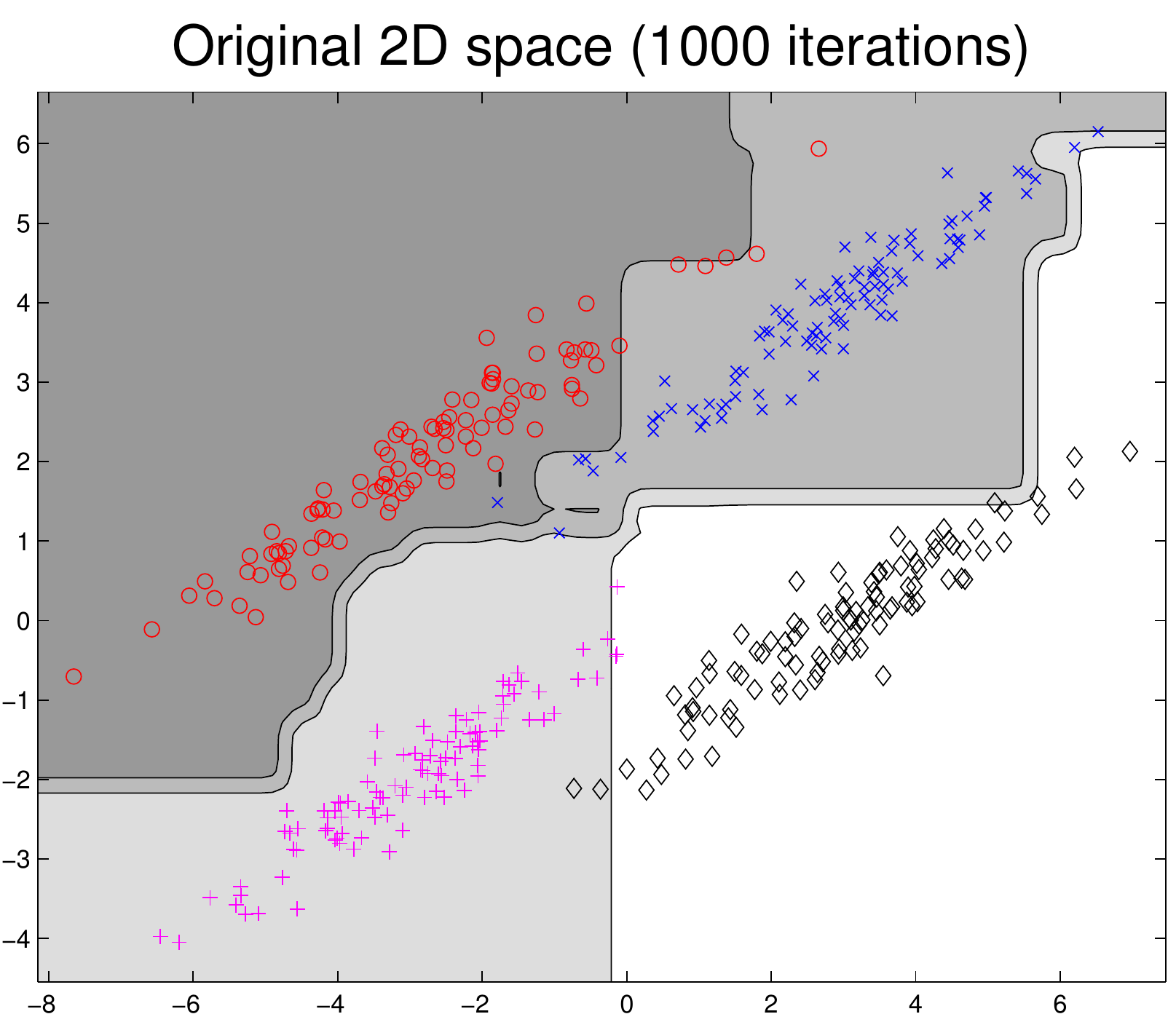}
        \includegraphics[width=0.23\textwidth,clip]{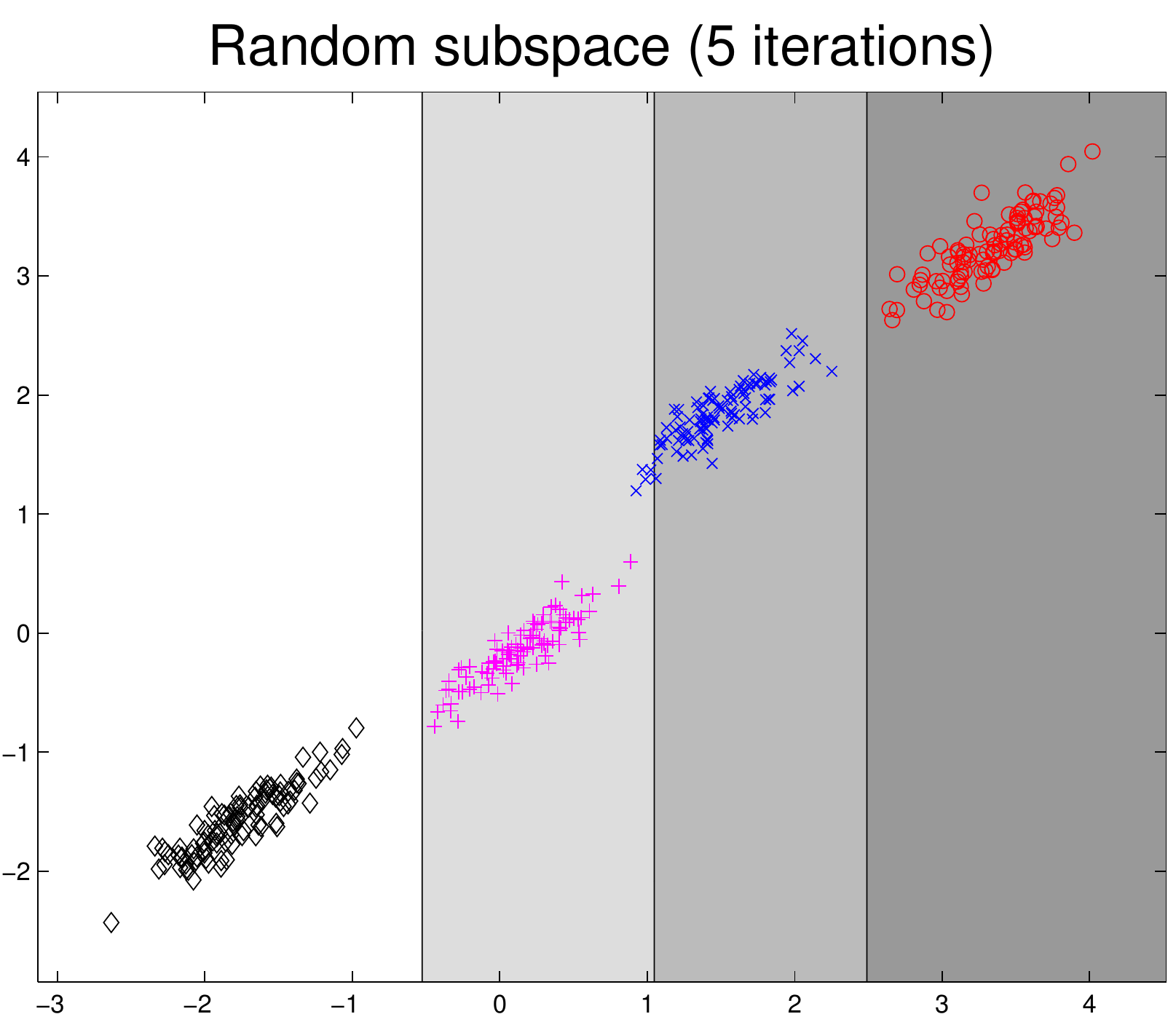}
    \end{center}
    \caption{
Decision boundaries on the artificial data set.
{\em First $3$ columns}: Classification of four diagonal distributions on 
the original two dimensional space
at $5$, $100$ and $1000$ boosting iterations.
{\em Last column}: Classification on randomly projected subspace 
(selectively chosen to
illustrate a better separation between classes).
    }
    \label{fig:toyRank}
\end{figure*}

\subsection{Discussion}
\paragraph{Advantage of applying random projections}
\revised{
One possible advantage of applying random projections is that
random projections may further increase class separation on some data sets.
We illustrate this in the following toy example.
We generate an artificial data set with four 
diagonal distributions.
Each diagonal distribution is randomly generated from
the multivariate normal distribution with covariance,
$\left[2.5, 1.5;1.5, 1 \right]$ and mean $\left[-3, 2 \right]$,
$\left[-3, -2 \right]$, $\left[3, 4 \right]$, $\left[3, 0 \right]$.
We train a one-versus-all boosting (with the decision stump as the weak learner)
and plot the decision boundary at $5$, $100$ and $1000$ boosting iterations.
We also randomly project the artificial data to the new $2$D space
and train the one-versus-all boosting classifier.
Decision boundaries of different examples are shown in Fig.~\ref{fig:toyRank}.
From the figure,
classification on the randomly projected subspace
(Fig.~\ref{fig:toyRank}: last column) clearly indicates its advantage
compared to classification on the original space.
}

\paragraph{Theoretical justification based on margin analysis}
In this section we justify the use of random projections on the proposed single-model multi-class classifier.
We begin by defining the margin on MultiBoost \cite{Shen2011Direct}
and its bound when the weak classifiers' response, $\bH$, is randomly
projected to the new space with a random projection matrix, $\bP$.
\begin{dfn}[Multi-class Margin for Boosting] Given a data set, $S = \{(\bxi \in \RR^d, y_i\in \Ycal = \{1, \cdots, k\})\}_{i=1}^m$,
the weak learners' response on the training data, $\bH$,
and weak learners' coefficients, $\bW = \left[ \bw_1^\T, \cdots \bw_k^\T \right]$ where $\bw_r \in \RR^T$ is weak learners' coefficients for class $r$.
The margin for boosting can be defined as,
\begin{align*}
\gamma = \min_{(\bx,y)\in S} \Big(\frac{\inner{\bw_y}{\bH(\bx)}}{\|\bw_y\|\|\bH(\bx)\|} -
\max_{y' \neq y}\frac{\inner{\bw_{y'}}{\bH(\bx)}}{\|\bw_{y'}\|\|\bH(\bx)\|}\Big).
\end{align*}
\end{dfn}
\begin{thm} [Margin Preservation] If the boosting has margin $\gamma$,
 then for any $\delta,\epsilon \in(0,1) $ and any \[n >
 \frac{12}{3\epsilon^2-2\epsilon^3} \ln\frac{6km}{\delta},\] with
 probability at least $1-\delta$, the boosting associated with
 projected weak learners' coefficients, $\bP\bw_r, \forany r$, and the
 projected weak learners' response, $\bP \bH$, has margin no less than
 \[ -\frac{1+3\epsilon}{1-\epsilon^2}+ \frac{\sqrt{1 - \epsilon^2} }{1 + \epsilon}+\frac{1 + \epsilon }{1 - \epsilon } \gamma.\]
\label{thm:multiclass}
\end{thm}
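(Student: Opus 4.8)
The plan is to treat this as a Johnson--Lindenstrauss (JL) argument, since the only property of the random Gaussian $\bP$ that is needed is that it approximately preserves squared norms of fixed vectors. Concretely, for $\bP\in\Real^{n\times T}$ with i.i.d.\ $\Ncal(0,1/n)$ entries and any fixed $\mathbf{a}$, the standard one-sided tails give $\Pr[\,\nbr{\bP\mathbf{a}}^2\ge(1+\epsilon)\nbr{\mathbf{a}}^2\,]\le\exp(-n(3\epsilon^2-2\epsilon^3)/12)$ together with the matching lower-tail bound. The exponent is exactly $\tfrac{n}{2}(\tfrac{\epsilon^2}{2}-\tfrac{\epsilon^3}{3})=\tfrac{n(3\epsilon^2-2\epsilon^3)}{12}$, which is precisely why the threshold $n>\tfrac{12}{3\epsilon^2-2\epsilon^3}\ln\tfrac{6km}{\delta}$ forces each such deviation to occur with probability strictly below $\delta/(6km)$.

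Next I would isolate the finitely many scalars on which the cosine-based margin actually depends. At a fixed point $(\bx,y)$ the margin compares, across the $k$ classes, the cosines $c_r:=\inner{\bw_r}{\bH(\bx)}/(\nbr{\bw_r}\,\nbr{\bH(\bx)})$. Hence for each (class, example) pair $(r,i)$ it suffices to control three quantities under projection: the two norms $\nbr{\bP\bw_r}^2$, $\nbr{\bP\bH(\bx_i)}^2$, and the inner product $\inner{\bP\bw_r}{\bP\bH(\bx_i)}$, the last of which I would handle through the polarization identity $2\inner{\mathbf a}{\mathbf b}=\nbr{\mathbf a}^2+\nbr{\mathbf b}^2-\nbr{\mathbf a-\mathbf b}^2$ so that it too reduces to norm preservation (of $\bw_r-\bH(\bx_i)$). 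Each quantity needs a two-sided guarantee, i.e.\ two one-sided JL events, giving $6$ events per pair and $6km$ in total. A union bound over these $6km$ events, each of probability below $\delta/(6km)$ by the choice of $n$, shows that with probability at least $1-\delta$ all the relevant norms and inner products are simultaneously preserved to within a $(1\pm\epsilon)$ factor. (The norms are shared across examples or classes, so the count $6km$ is a convenient over-count that still matches the stated constant.)

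Conditioning on this good event, the rest is deterministic, and here I would normalize $\bw_r$ and $\bH(\bx)$ to unit vectors, which changes neither $c_r$ nor $\tilde c_r$. Polarization plus the preserved norms then yields, for the correct class, a lower bound $\inner{\bP\bw_y}{\bP\bH(\bx)}\ge(1+\epsilon)c_y-2\epsilon$ and, for every incorrect class, the symmetric upper bound $\inner{\bP\bw_{y'}}{\bP\bH(\bx)}\le(1-\epsilon)c_{y'}+2\epsilon$, while the denominators satisfy $\nbr{\bP\mathbf a}\in[\sqrt{1-\epsilon},\sqrt{1+\epsilon}]$. Dividing through and assembling, $\tilde c_y$ is lower bounded and each $\tilde c_{y'}$ upper bounded by affine functions of $c_y,c_{y'}$ whose common slope is $\tfrac{1+\epsilon}{1-\epsilon}$; subtracting them and invoking the margin hypothesis $c_y-\max_{y'\neq y}c_{y'}\ge\gamma$ pulls out the term $\tfrac{1+\epsilon}{1-\epsilon}\gamma$, while the leftover constants collect into $-\tfrac{1+3\epsilon}{1-\epsilon^2}+\tfrac{\sqrt{1-\epsilon^2}}{1+\epsilon}$. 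Taking the minimum over the $m$ points gives the claimed bound.

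The main obstacle is this last, deterministic step: propagating multiplicative $(1\pm\epsilon)$ errors through a \emph{ratio} (the cosine) rather than through a linear functional. One must treat the sign of each projected inner product carefully when deciding whether to divide by $(1-\epsilon)$ or $(1+\epsilon)$, verify the monotonicity needed so that the class attaining $\max_{y'\neq y}c_{y'}$ before projection still controls the post-projection maximum, and track the constants tightly enough that the $\sqrt{1-\epsilon^2}$ and $\tfrac{1+\epsilon}{1-\epsilon}$ factors emerge exactly as stated rather than in a looser additive-$\epsilon$ form. On the probabilistic side the only subtlety is matching the $6km$ event count to the constant in the dimension threshold, which the over-counting above resolves.
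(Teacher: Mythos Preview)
Your probabilistic strategy matches the paper exactly: apply the JL norm bound to three vectors per (class, example) pair---$\bw_r$, $\bH(\bx_i)$, and their (normalized) difference---and union-bound over the resulting $6km$ one-sided events, which is precisely where the threshold $n>\tfrac{12}{3\epsilon^2-2\epsilon^3}\ln\tfrac{6km}{\delta}$ comes from.

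The deterministic step is where your proposal diverges from the paper, and your ``common slope $\tfrac{1+\epsilon}{1-\epsilon}$'' claim is not right. If you bound numerator and denominator separately via polarization as you describe, then for unit $\bw_r,\bH(\bx)$ you get $\inner{\bP\bw_y}{\bP\bH(\bx)}\ge(1+\epsilon)c_y-2\epsilon$ and $\inner{\bP\bw_{y'}}{\bP\bH(\bx)}\le(1-\epsilon)c_{y'}+2\epsilon$, while each projected norm lies in $[\sqrt{1-\epsilon},\sqrt{1+\epsilon}]$. To lower-bound a positive ratio you must divide the numerator lower bound by the \emph{upper} denominator bound, which yields slope~$1$, not $\tfrac{1+\epsilon}{1-\epsilon}$; similarly for the upper bound. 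So this route gives a bound of the form $\gamma - O(\epsilon)$, not the stated expression with the $\sqrt{1-\epsilon^2}$ and $\tfrac{1+\epsilon}{1-\epsilon}$ factors.

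The paper avoids this by never separating numerator and denominator. Its key lemma works directly with the \emph{unit} projected vectors and uses the geometric fact that, once $\nbr{\bP\bw_r}\ge\sqrt{1-\epsilon}$ and $\nbr{\bP\bH(\bx)}\ge\sqrt{1-\epsilon}$, one has
\[
\Bigl\lVert \tfrac{\bP\bw_r}{\nbr{\bP\bw_r}}-\tfrac{\bP\bH(\bx)}{\nbr{\bP\bH(\bx)}}\Bigr\rVert^2
\;\le\;
\tfrac{1}{1-\epsilon}\,\Bigl\lVert \bP\bigl(\tfrac{\bw_r}{\nbr{\bw_r}}-\tfrac{\bH(\bx)}{\nbr{\bH(\bx)}}\bigr)\Bigr\rVert^2
\;\le\;
\tfrac{1+\epsilon}{1-\epsilon}\,\Bigl\lVert \tfrac{\bw_r}{\nbr{\bw_r}}-\tfrac{\bH(\bx)}{\nbr{\bH(\bx)}}\Bigr\rVert^2,
\]
the first inequality because lengthening two unit vectors (keeping directions) can only increase the norm of their difference. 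Writing $\tilde c=1-\tfrac12\nbr{\,\cdot\,}^2$ immediately gives $\tilde c_y\ge 1-\tfrac{1+\epsilon}{1-\epsilon}(1-c_y)$, which is where the slope $\tfrac{1+\epsilon}{1-\epsilon}$ actually originates. A companion (and more delicate) inequality of the same flavor produces the upper bound on $\tilde c_{y'}$ containing the $\sqrt{1-\epsilon^2}$ term; note that even in the paper the two slopes are \emph{not} equal (the upper bound has slope $\tfrac{1-\epsilon}{1+\epsilon}$), and the final $\tfrac{1+\epsilon}{1-\epsilon}\gamma$ appears only after a further loosening. Your polarization-then-divide route will not reproduce these constants without this normalized-difference trick, so that is the missing ingredient.
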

The above theorem shows that the multi-class margin can be well preserved
after both weak learner's coefficients, $\bW$, and weak learners' responses, $\bH$,
are randomly projected.
This theorem justifies the use of random projection on MultiBoost \cite{Shen2011Direct}.
The next theorem defines margin separability for the proposed single-vector parameterized multi-class boosting.
\begin{thm}[Single-vector Multi-class Boosting]
Given any random Gaussian matrix $\bR \in \RR^{n \times kT}$, whose
entry $\bR(i,j) = $ $ \frac{1}{\sqrt{n}}{a_{ij}}$ where $a_{ij}$ is i.i.d.
random variables from $\Ncal(0,1)$.  Denote $\bP_y \in \RR^{n,T}$ as
the $y$-th submatrix of $\bR$, that is $\bR =
        [\bP_1,$ $ \cdots,\bP_r,$ $\cdots, $ $ \bP_k]$.
If the boosting has margin
$\gamma$, then for any $\delta,\epsilon \in (0,1] $ and any \[n
>\frac{12}{ 3\epsilon^2-2\epsilon^3  } \ln{\frac{6m(k-1)}{\delta}} ,\]
there exists a single-vector $\bv \in \RR^n $, such that
\begin{align}
\Pr\Big( %
&\frac{\inner {\bv}{\bP_y  \bH(\bx) } -\inner {\bv}{\bP_{y'}\bH(\bx)}}{\|\bv\|\sqrt{\|\bP_y\bH(\bx) \|^2+\|\bP_{y'}\bH(\bx) \|^2}} \ge
\nonumber\\
&
\frac{-2\epsilon}{1-\epsilon}+\frac{1+\epsilon}{{\sqrt{2k}}(1-\epsilon)}\gamma \Big)\ge 1-\delta,
\quad \quad
            \forall y'\neq y.
\end{align}
\label{thm:single-param-trick}
\end{thm}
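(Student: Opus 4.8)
The plan is to exhibit the single vector explicitly by \emph{stacking and projecting} the per-class weights, and then to read off the claimed bound from Johnson--Lindenstrauss norm/inner-product preservation applied blockwise to $\bR$. First I would normalize, assuming without loss of generality that $\|\bw_r\|=1$ for every class $r$; this is harmless because both the multi-class margin of the Definition and the quantity inside the probability are invariant to the scaling of each $\bw_r$ and of $\bH(\bx)$. Writing $\bW=[\bw_1^\T,\cdots,\bw_k^\T]^\T\in\RR^{kT}$, so that $\|\bW\|=\sqrt{k}$, I would take the witness to be the projection of the stacked weight vector, $\bv:=\bR\bW=\sum_{r=1}^k\bP_r\bw_r$. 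This is a single vector in $\RR^n$ that does not depend on the query class, which is exactly the point of the construction. The block structure $\bR=[\bP_1,\cdots,\bP_k]$ is then exploited by introducing, for a fixed training point $\bx$ with response $\bH(\bx)$, the augmented vectors $\widetilde{\bH}_y\in\RR^{kT}$ that place $\bH(\bx)$ in block $y$ and zero elsewhere, so that $\bP_y\bH(\bx)=\bR\widetilde{\bH}_y$ and, crucially, $\inner{\bv}{\bP_y\bH(\bx)}=\inner{\bR\bW}{\bR\widetilde{\bH}_y}$ with $\inner{\bW}{\widetilde{\bH}_y}=\inner{\bw_y}{\bH(\bx)}$, since only block $y$ of $\widetilde{\bH}_y$ is nonzero. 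In this way every quantity appearing in the theorem becomes a projected norm or inner product of \emph{fixed} vectors.

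Next I would invoke the Achlioptas-type concentration: for a fixed $\bz$, $\Pr[\,\|\bR\bz\|^2\notin[(1-\epsilon)\|\bz\|^2,\,(1+\epsilon)\|\bz\|^2]\,]\le 2\exp(-\tfrac{n}{12}(3\epsilon^2-2\epsilon^3))$, and deduce inner-product preservation by polarization applied to $\bW\pm\widetilde{\bH}_y$. Applying this to $\widetilde{\bH}_y$, $\widetilde{\bH}_{y'}$ and $\bW$ (about three norm events per pairwise constraint) and union-bounding over the $m(k-1)$ triples $(\bx,y,y')$ gives total failure probability at most $6m(k-1)\exp(-\tfrac{n}{12}(3\epsilon^2-2\epsilon^3))$, which is $\le\delta$ precisely under the stated bound $n>\frac{12}{3\epsilon^2-2\epsilon^3}\ln\frac{6m(k-1)}{\delta}$; this is exactly where the constants $6$, $12$ and $3\epsilon^2-2\epsilon^3$ originate.

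On this good event I would lower-bound the numerator and upper-bound the denominator. Using $\inner{\bW}{\widetilde{\bH}_y}=\inner{\bw_y}{\bH(\bx)}$ and the margin hypothesis $\inner{\bw_y}{\bH(\bx)}-\inner{\bw_{y'}}{\bH(\bx)}\ge\gamma\|\bH(\bx)\|$, the preserved inner products give a numerator bounded below by $\gamma\|\bH(\bx)\|$ minus an $O(\epsilon\sqrt{k}\,\|\bH(\bx)\|)$ distortion term, while $\|\bv\|=\|\bR\bW\|\le\sqrt{(1+\epsilon)k}$ and $\|\bP_y\bH(\bx)\|^2+\|\bP_{y'}\bH(\bx)\|^2\le 2(1+\epsilon)\|\bH(\bx)\|^2$ bound the denominator by $\sqrt{2k}$ times a factor close to $1$. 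The $\sqrt{k}$ coming from $\|\bW\|=\sqrt{k}$ then cancels against the $\sqrt{2k}$ in the denominator, leaving a $k$-independent additive error together with the $\frac{1}{\sqrt{2k}}$ factor multiplying $\gamma$; collecting the surviving $(1\pm\epsilon)$ factors yields the claimed bound $\frac{-2\epsilon}{1-\epsilon}+\frac{1+\epsilon}{\sqrt{2k}(1-\epsilon)}\gamma$.

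I expect the main obstacle to be this final assembly rather than any individual inequality. The numerator lower bound can turn negative when the distortion term dominates, so one must track its sign carefully and decide, factor by factor, whether to route through the $(1+\epsilon)$ or the $(1-\epsilon)$ side of each preservation estimate so that the constants emerge exactly as stated. In particular, producing the precise coefficient $\frac{1+\epsilon}{1-\epsilon}$ on $\gamma$, rather than a cruder $\frac{1}{1+\epsilon}$, requires passing the distortion through the same cosine/inner-product preservation chain used in the proof of Theorem~\ref{thm:multiclass}, and checking that the chosen combination of $(1\pm\epsilon)$ bounds is internally consistent across the numerator and the two denominator factors.
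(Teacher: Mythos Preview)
Your construction is exactly the paper's: stack the unit-normalized per-class weights into $\bW=[\bw_1^\T,\cdots,\bw_k^\T]^\T\in\RR^{kT}$, take the witness $\bv=\bR\bW$, and use the block embeddings $\widetilde{\bH}_y$ so that $\bP_y\bH(\bx)=\bR\widetilde{\bH}_y$ and $\inner{\bv}{\bP_y\bH(\bx)}=\inner{\bR\bW}{\bR\widetilde{\bH}_y}$. The union-bound bookkeeping and the resulting lower bound on $n$ also match.

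The gap is in the middle step. Plain polarization on $\bW\pm\widetilde{\bH}_y$ (or on $\bW\pm(\widetilde{\bH}_y-\widetilde{\bH}_{y'})$) yields an additive inner-product distortion of order $\epsilon\bigl(\|\bW\|^2+\|\widetilde{\bH}_y\|^2\bigr)=\epsilon\bigl(k+\|\bH(\bx)\|^2\bigr)$. After dividing by a denominator of order $\sqrt{2k}\,\|\bH(\bx)\|$, the surviving additive error is of order $\epsilon\bigl(\sqrt{k}/\|\bH(\bx)\|+\|\bH(\bx)\|/\sqrt{k}\bigr)$, not the $k$-independent $\tfrac{-2\epsilon}{1-\epsilon}$ in the statement. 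So the claim that ``the $\sqrt{k}$ cancels, leaving a $k$-independent additive error'' does not go through via polarization. The event count is also off for that route: polarization requires JL on $\bW\pm\widetilde{\bH}_y$ and $\bW\pm\widetilde{\bH}_{y'}$ in addition to the norm events, more than three per constraint.

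The paper sidesteps this by working with the \emph{difference} vector $\bz:=\widetilde{\bH}_y-\widetilde{\bH}_{y'}$ and applying the angle-preservation Lemma~\ref{lem:inner} directly to the pair $(\bW,\bz)$. That lemma invokes JL on exactly three vectors --- $\bW$, $\bz$, and the \emph{normalized} difference $\bW/\|\bW\|-\bz/\|\bz\|$ --- and delivers the scale-free cosine bound
\[
\frac{\inner{\bR\bW}{\bR\bz}}{\|\bR\bW\|\,\|\bR\bz\|}\ \ge\ 1-\frac{1+\epsilon}{1-\epsilon}\Bigl(1-\frac{\inner{\bW}{\bz}}{\|\bW\|\,\|\bz\|}\Bigr).
\]
With $\|\bW\|=\sqrt{k}$, $\|\bz\|=\sqrt{2}\,\|\bH(\bx)\|$, and $\inner{\bW}{\bz}=\inner{\bw_y}{\bH(\bx)}-\inner{\bw_{y'}}{\bH(\bx)}\ge\gamma\|\bH(\bx)\|$, the right-hand side is immediately $\tfrac{-2\epsilon}{1-\epsilon}+\tfrac{1+\epsilon}{\sqrt{2k}(1-\epsilon)}\gamma$, with no sign-by-sign case analysis. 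This is also what makes ``three events'' and hence the constant $6$ in $6m(k-1)$ come out right. Your closing suspicion that the cosine-preservation chain from Theorem~\ref{thm:multiclass} is needed is correct, but it is not merely about sharpening the $\gamma$-coefficient: without it the additive term blows up with $\sqrt{k}$.
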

The above theorem reveals that there exists a single-vector
$\bv \in \RR^n $ under
which the margin is preserved up to an order of $O(\gamma/\sqrt{2k} )$.
In other words, the multi-class margin can be well preserved after random
projection as long as the newly projected dimension, $ n $,
satisfies some mild condition.
Not only the theorem justifies the use of random projection to learn
the single-model classifier, it also shows that
the projected dimensions, $n$, only grows logarithmically with
the number of classes, $k$.
This finding is important for problems where the number of classes is large.
Note that Theorem \ref{thm:single-param-trick} only applies to the
second approach presented in this work.

\section{Experiments}
\label{sec:exp}

We evaluate our approaches on artificial, machine learning and
visual recognition data sets and compare our approaches against
existing multi-class boosting algorithms.
For AdaBoost.ECC, we perform binary partitioning at each iteration using the random-half method \cite{Li2006Multiclass}.
Decision stumps are used as the weak classifier for all boosting algorithms.

  \begin{figure*}[t]
        \centering
        \begin{minipage}{1\textwidth}
               \scriptsize
                ~~~~~~~~~~~~~~~~ Data
                ~~~~~~~~~~~~~~~~~~ Ada.ECC
                ~~~~~~~~~~~~~~~~~ Ada.MH
                ~~~~~~~~~~~~~~~~~  Ada.MO
                ~~~~~~~~~~~~~ MultiBoost$^{\ell_1} $
                ~~~~~~~~~~~~~ \randomRank
                ~~~~~~~~~~~~~ \randomProj
        \end{minipage}
        \includegraphics[width=0.13\textwidth,clip]{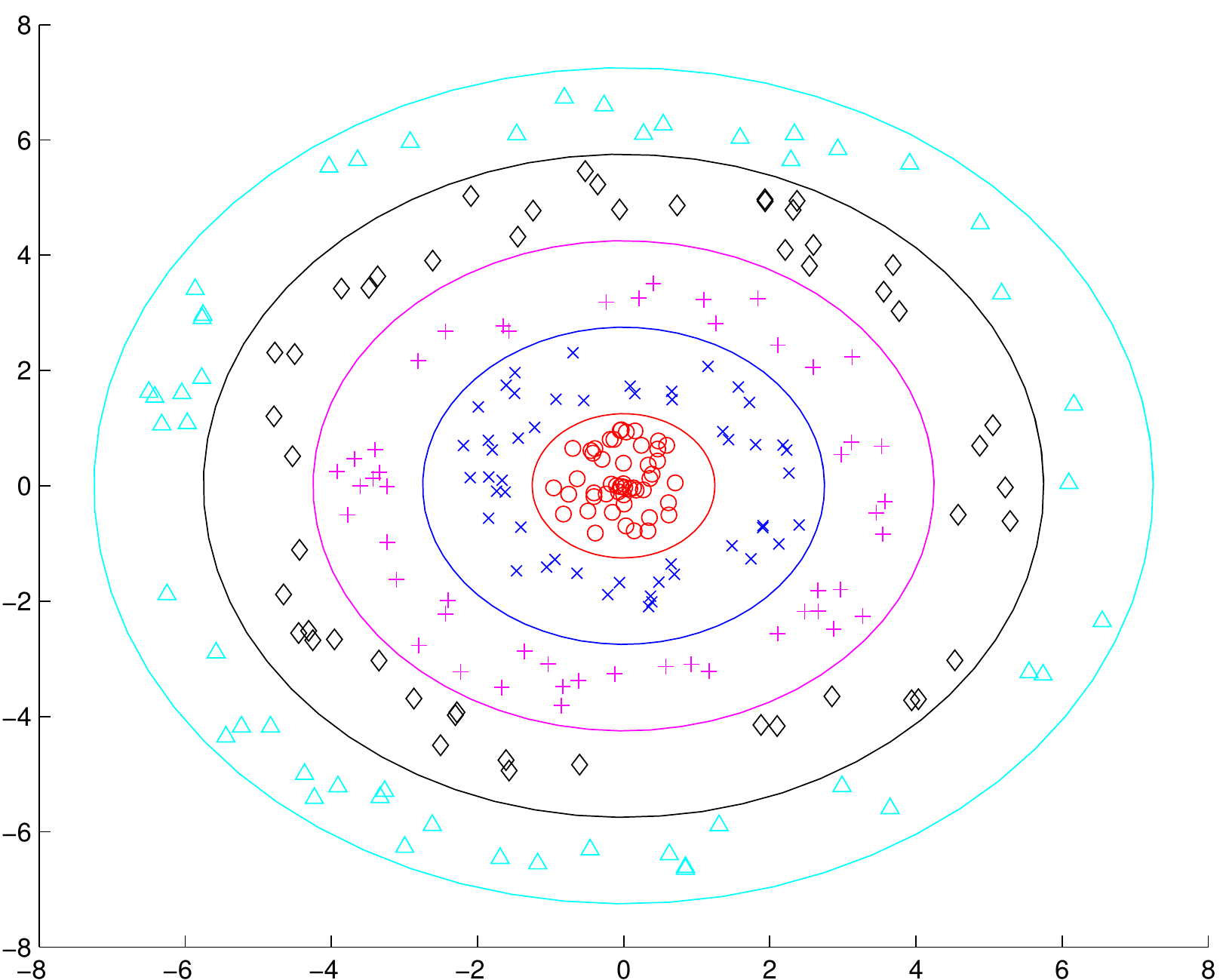}
        \includegraphics[width=0.13\textwidth,clip]{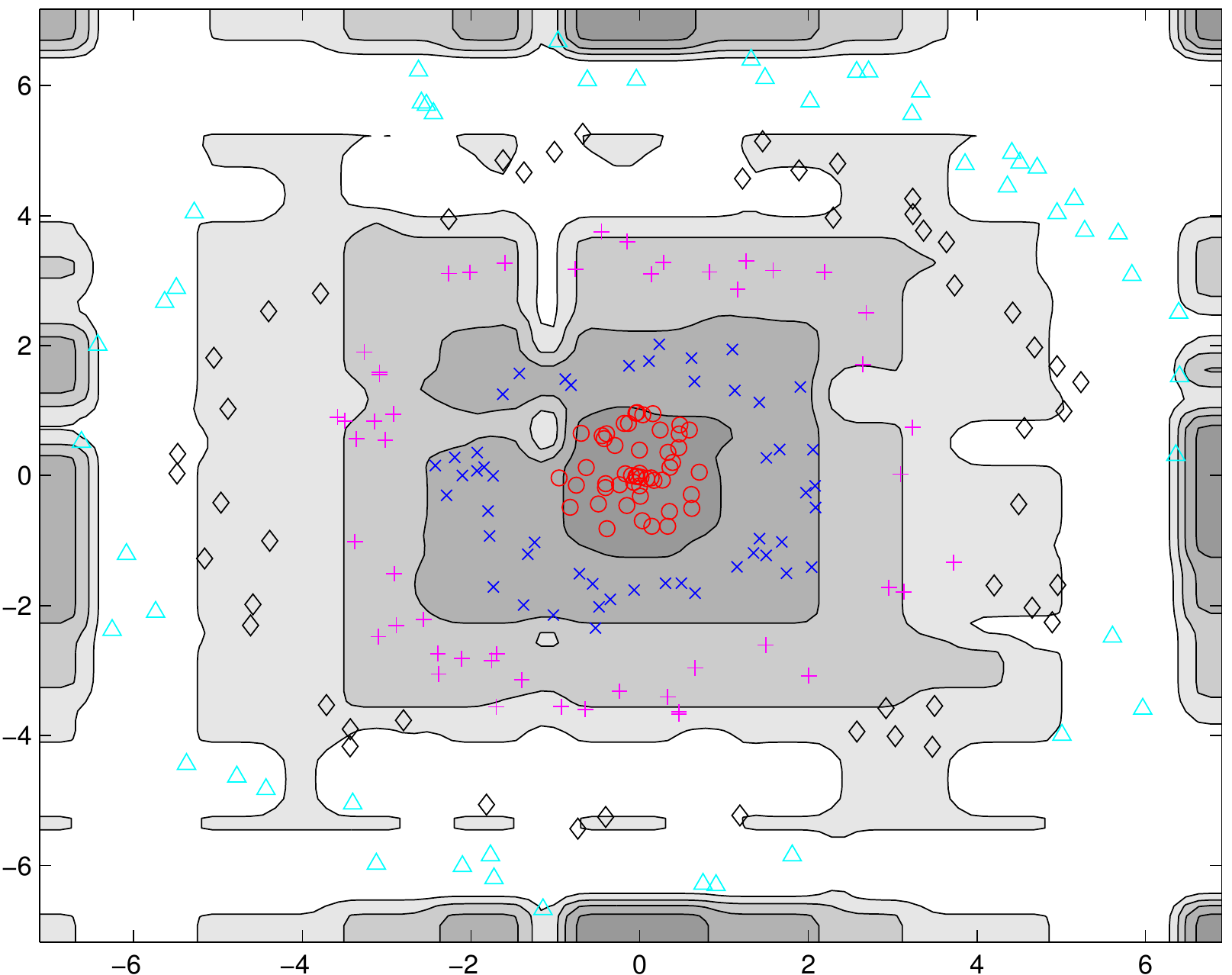}
        \includegraphics[width=0.13\textwidth,clip]{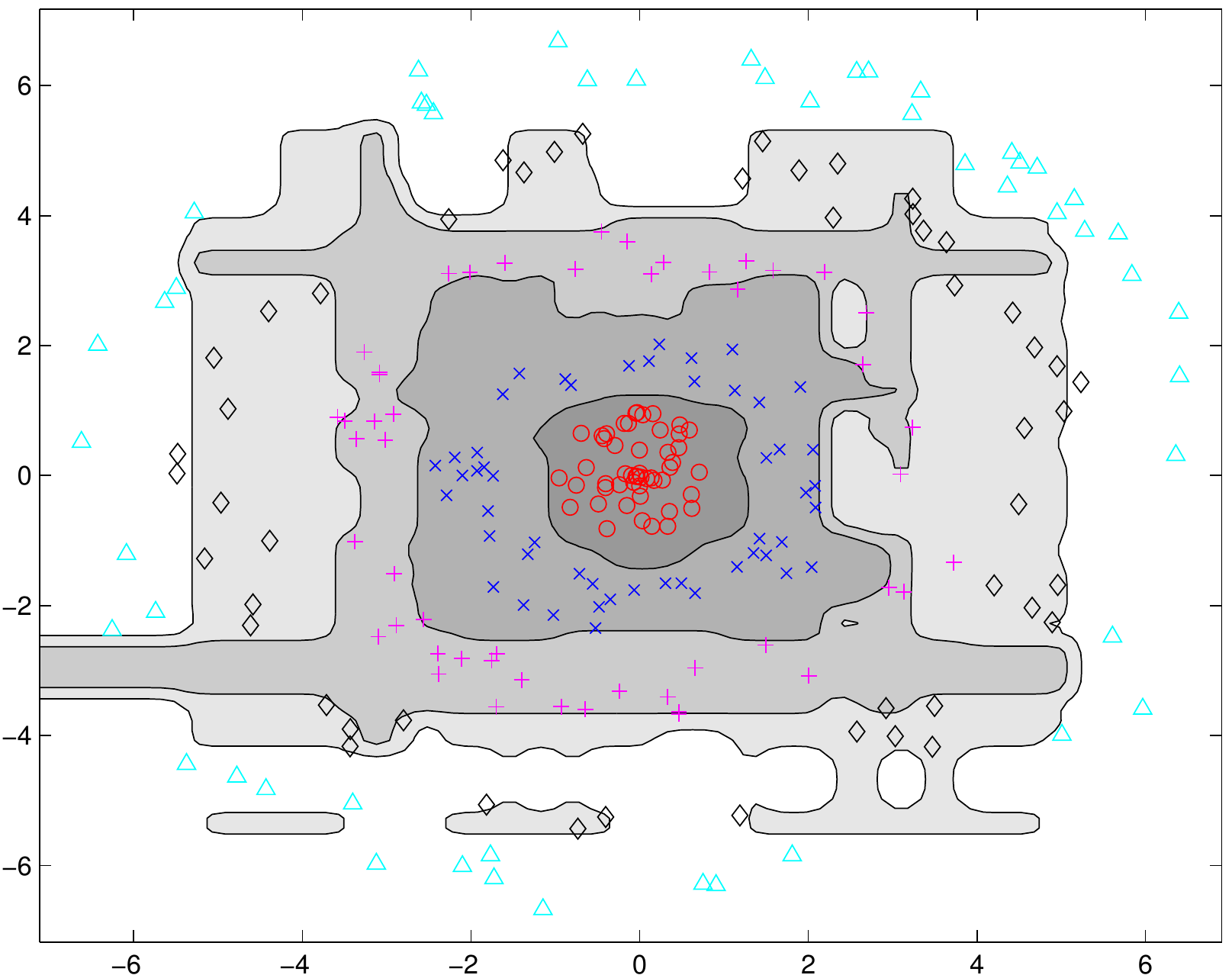}
        \includegraphics[width=0.13\textwidth,clip]{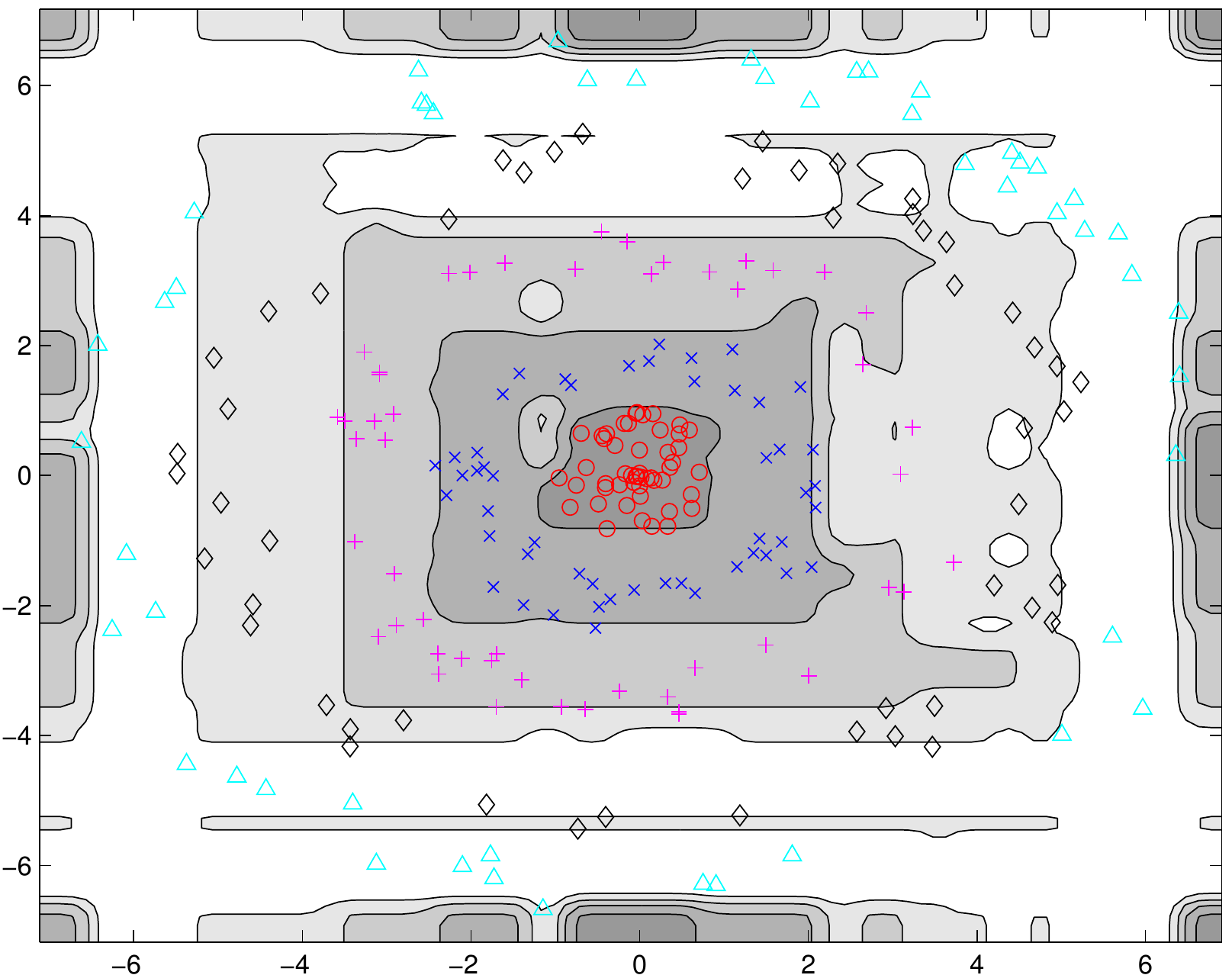}
        \includegraphics[width=0.13\textwidth,clip]{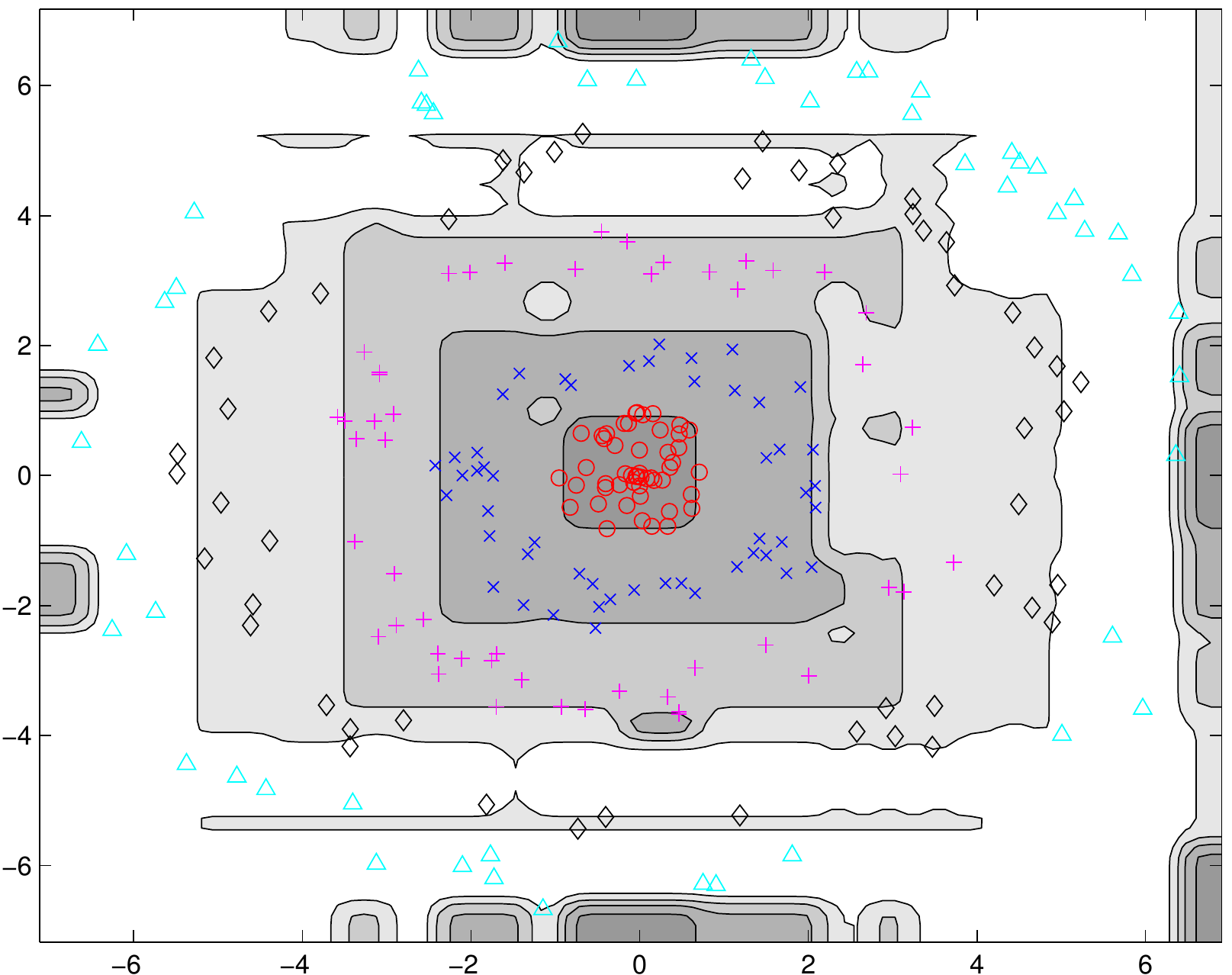}
        \includegraphics[width=0.13\textwidth,clip]{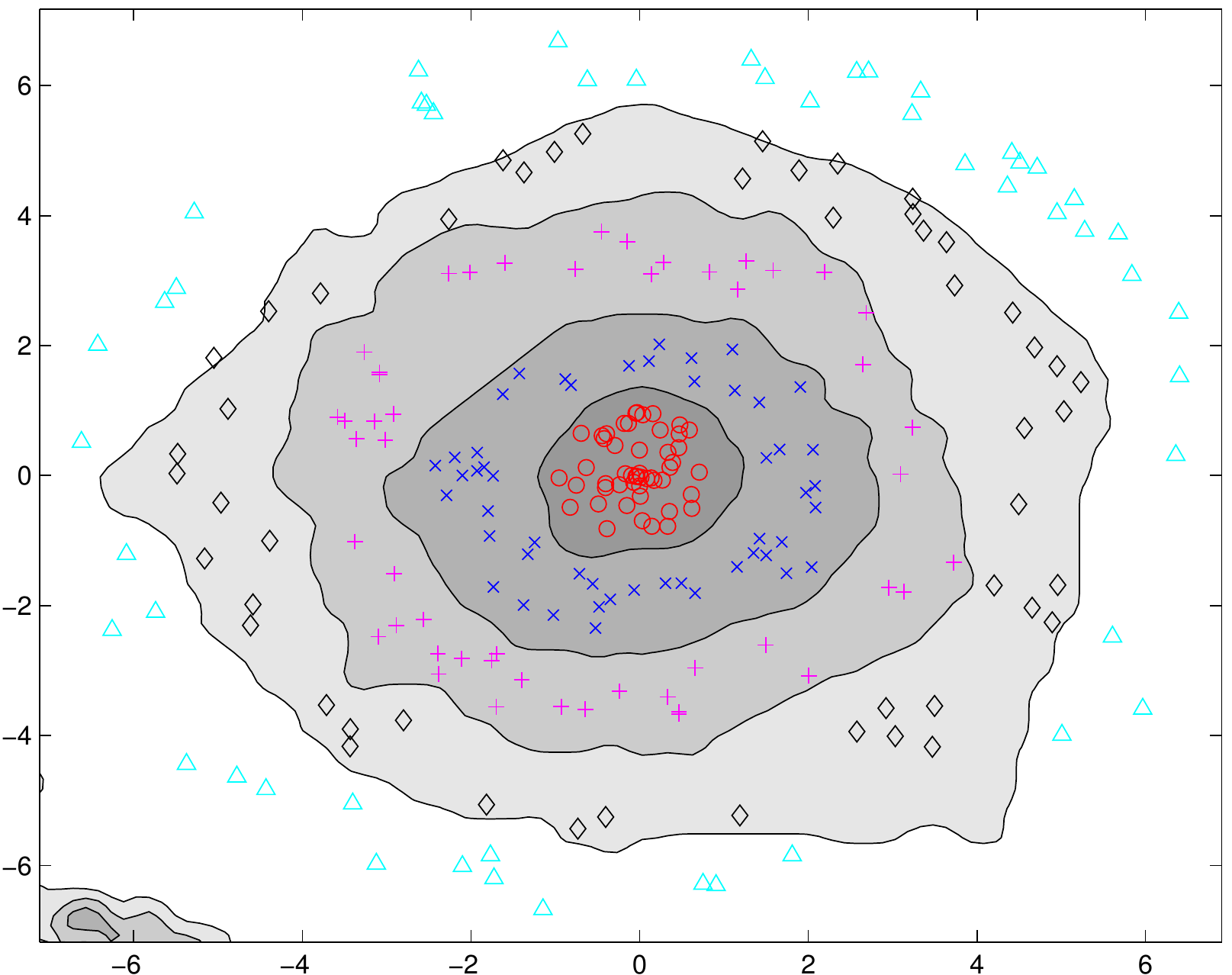}
        \includegraphics[width=0.13\textwidth,clip]{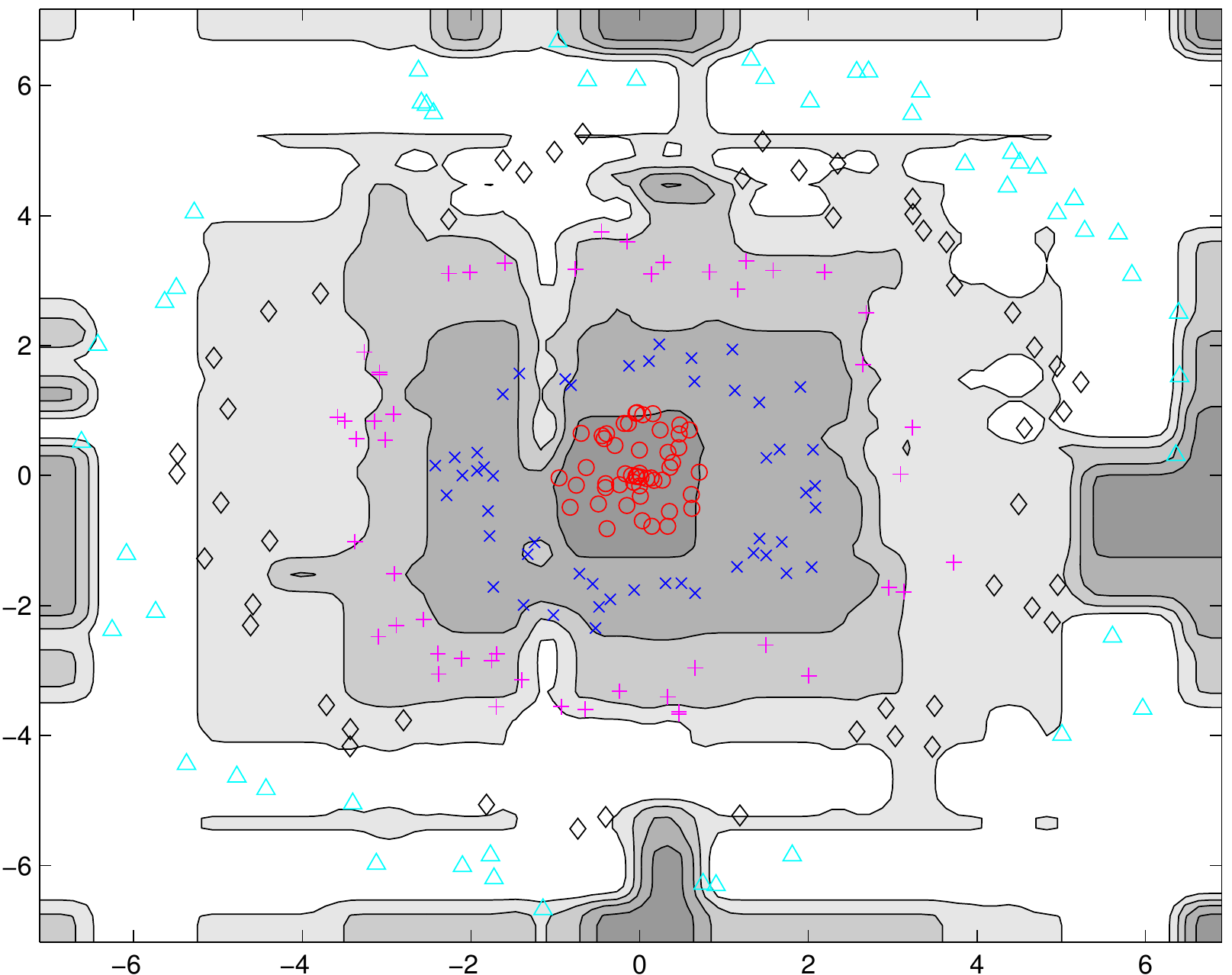}
        \includegraphics[width=0.13\textwidth,clip]{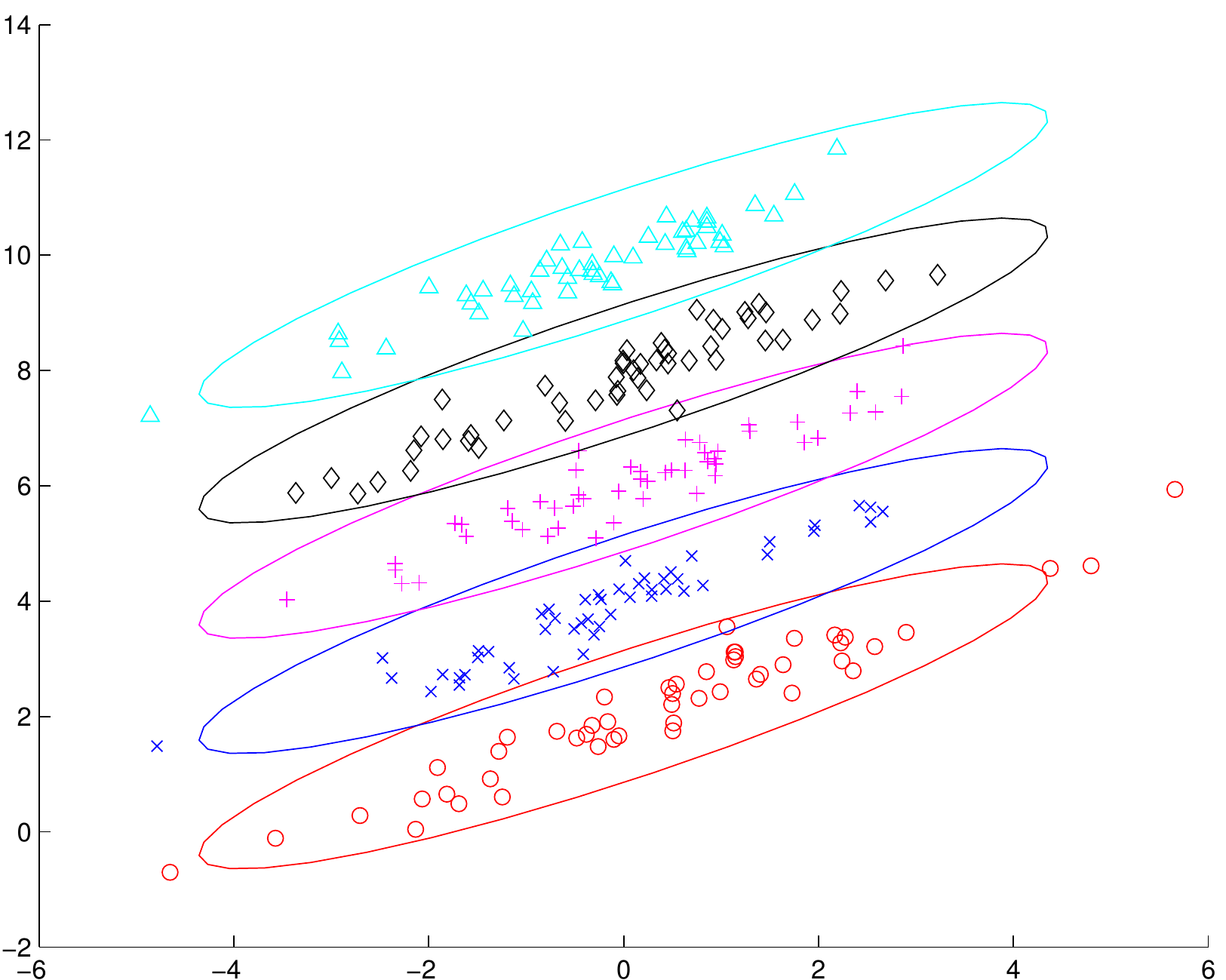}
        \includegraphics[width=0.13\textwidth,clip]{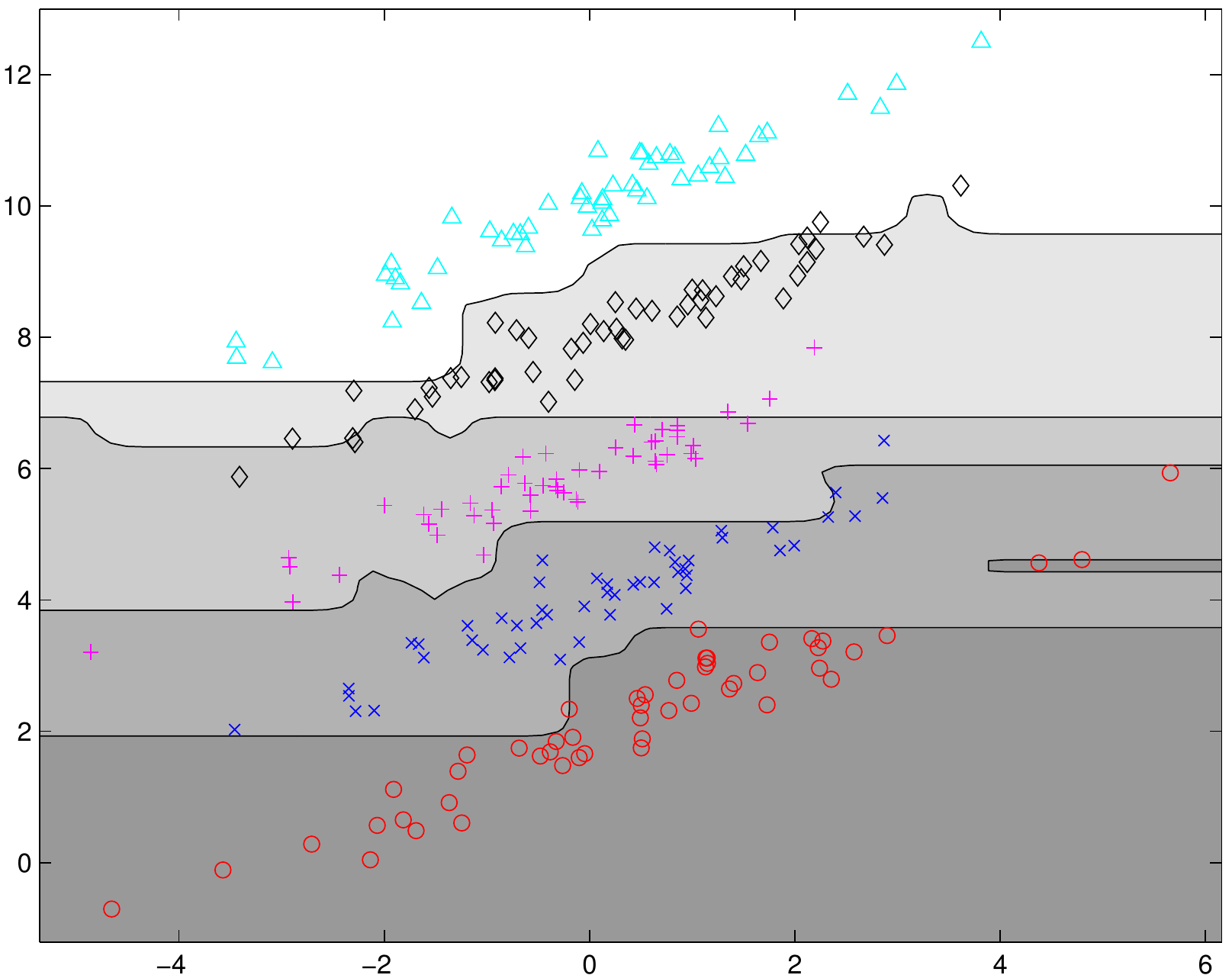}
        \includegraphics[width=0.13\textwidth,clip]{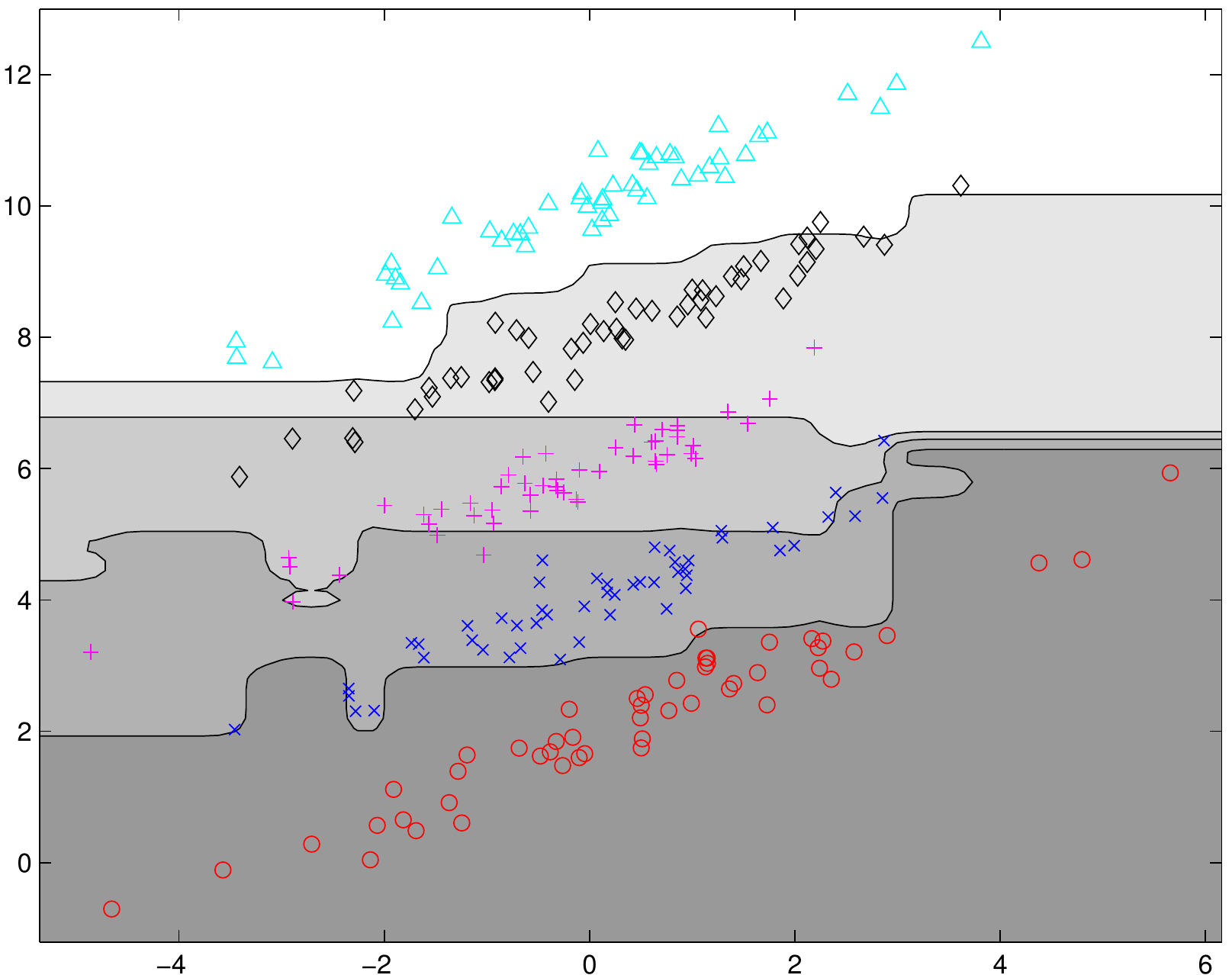}
        \includegraphics[width=0.13\textwidth,clip]{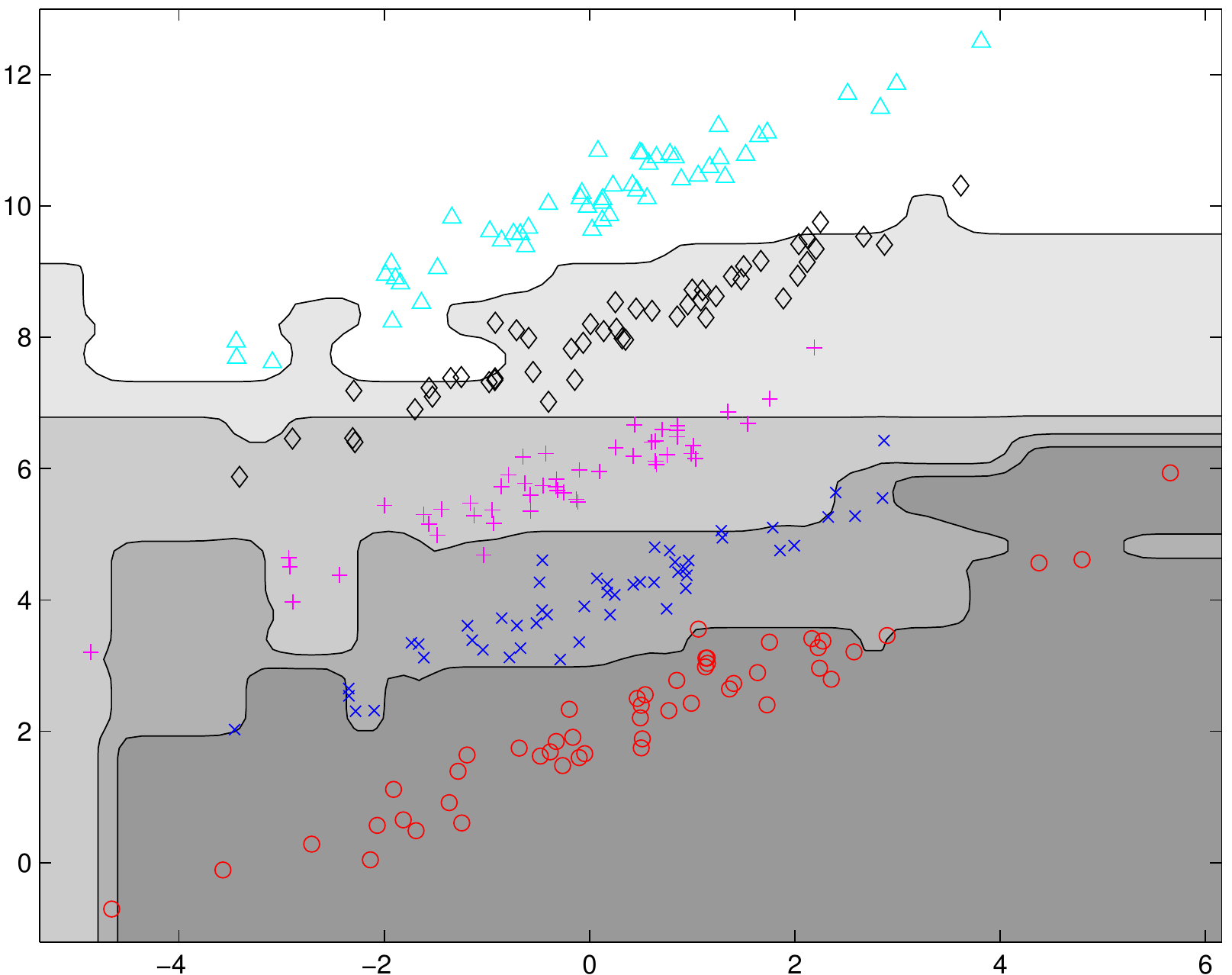}
        \includegraphics[width=0.13\textwidth,clip]{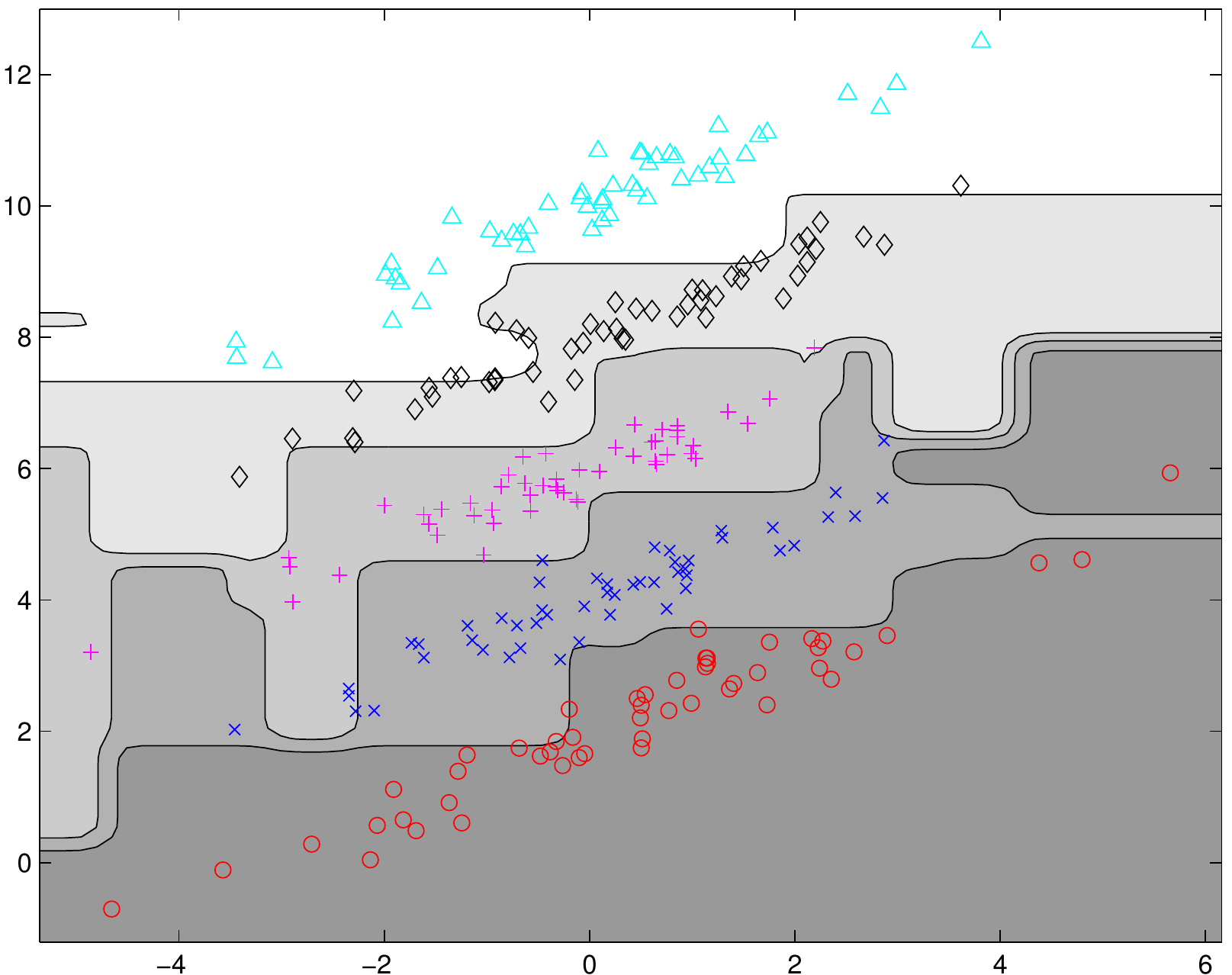}
        \includegraphics[width=0.13\textwidth,clip]{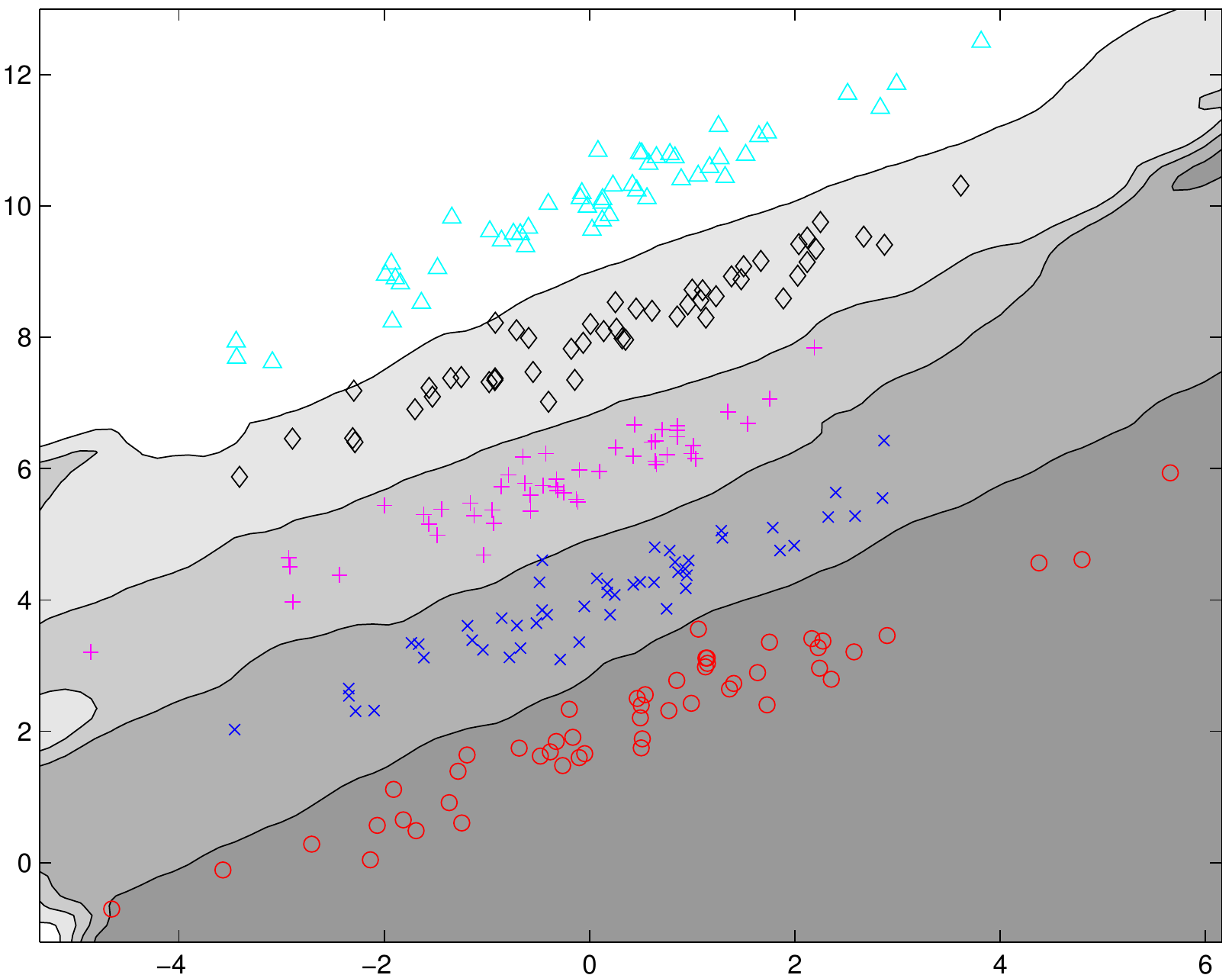}
        \includegraphics[width=0.13\textwidth,clip]{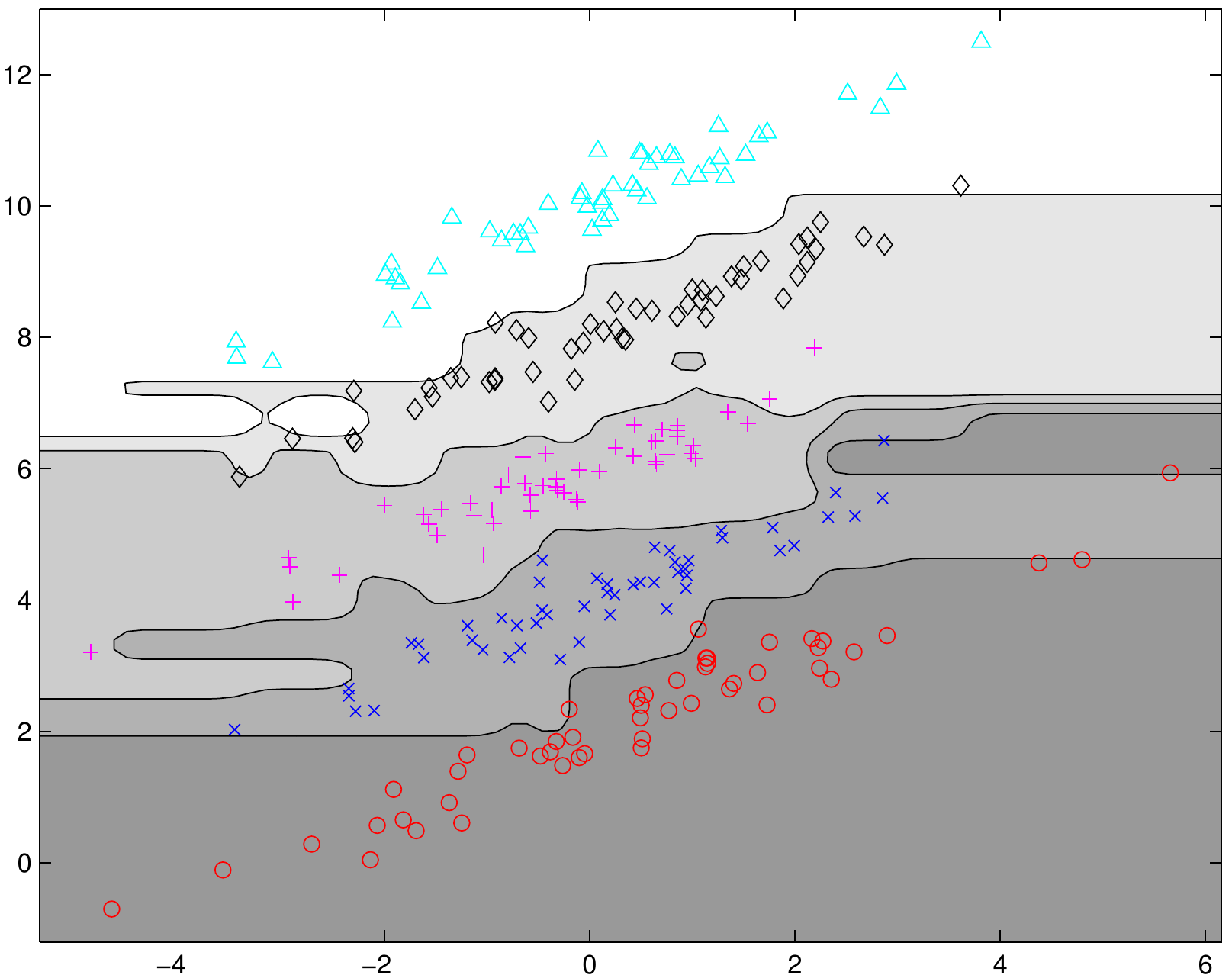}
        \includegraphics[width=0.13\textwidth,clip]{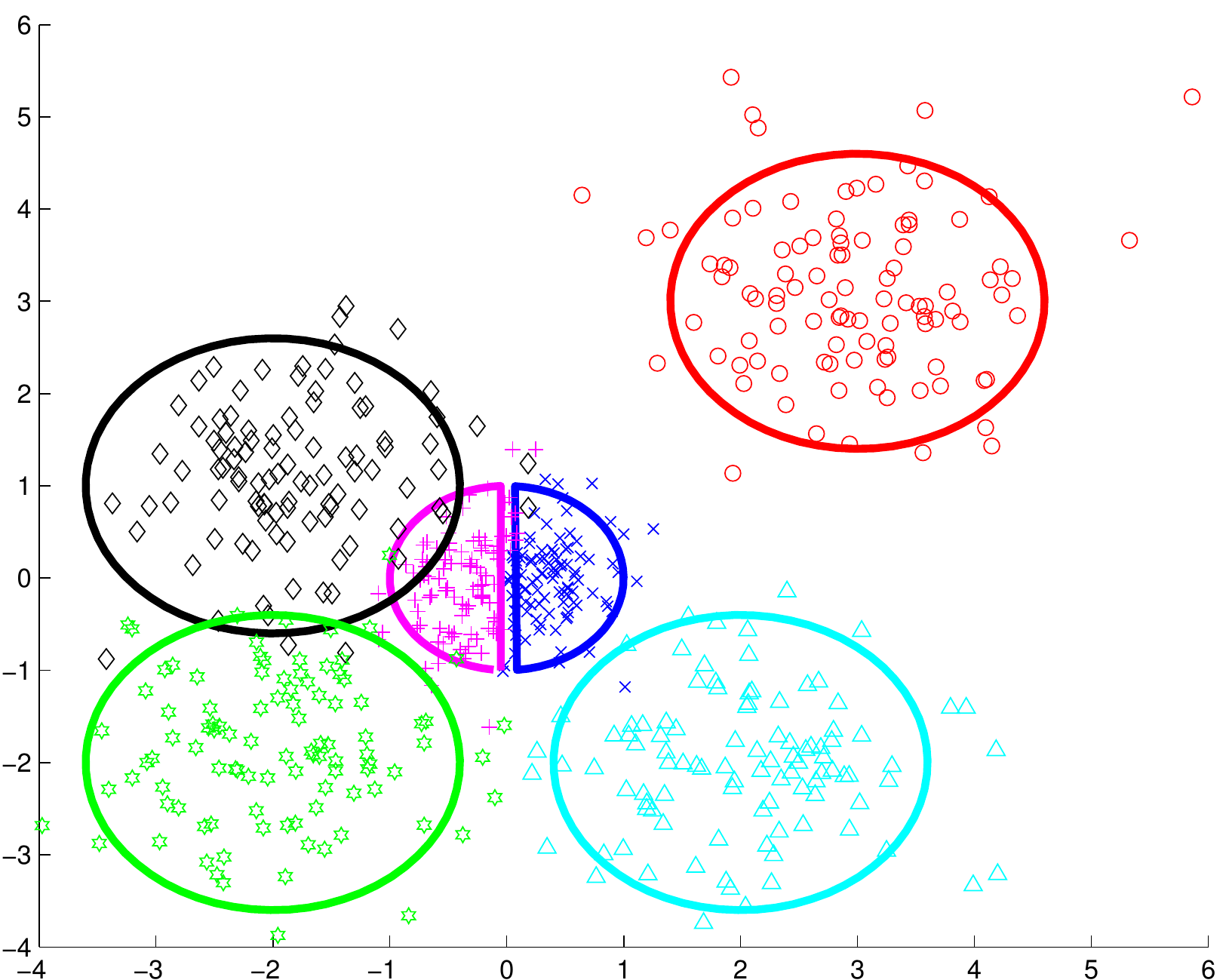}
        \includegraphics[width=0.13\textwidth,clip]{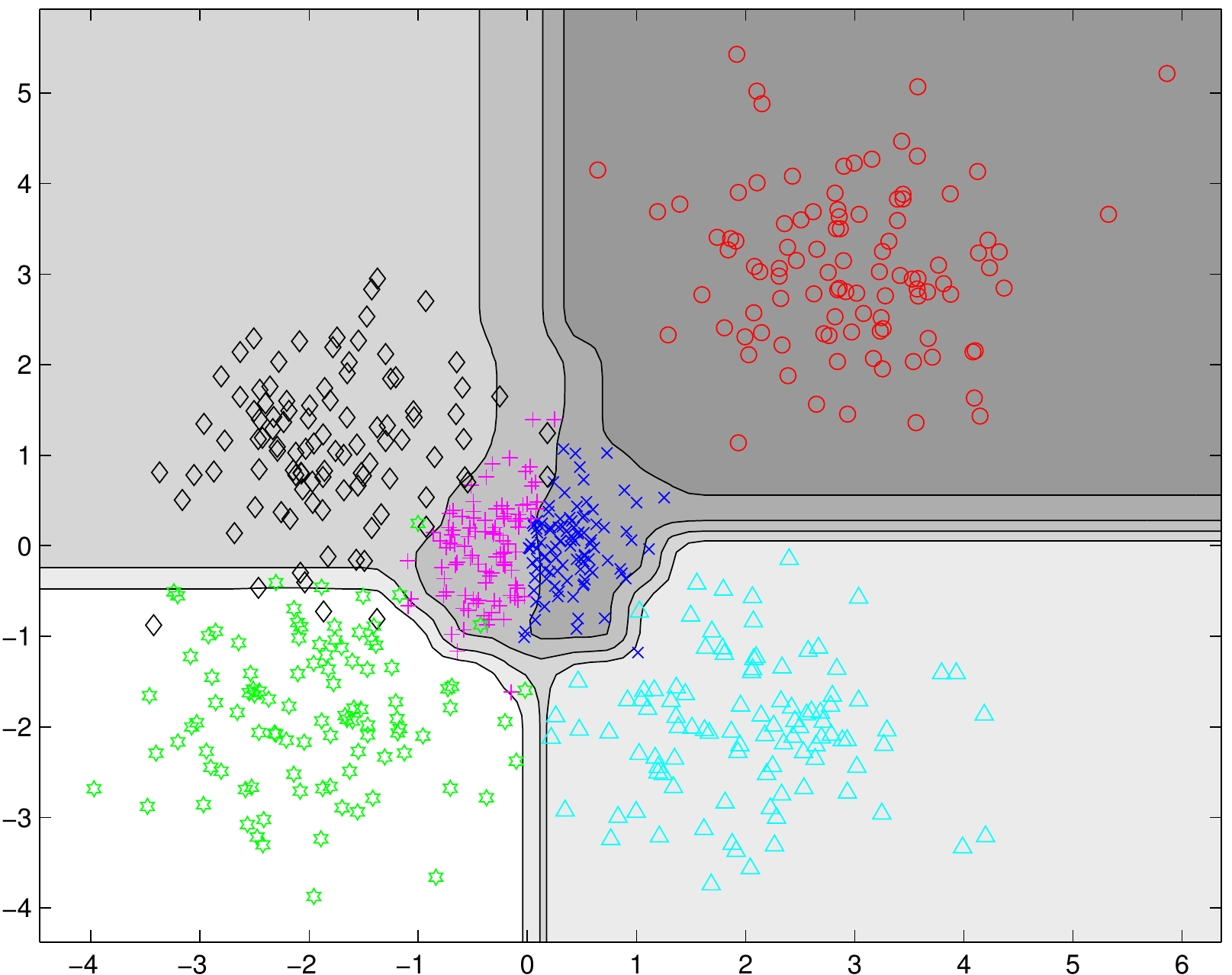}
        \includegraphics[width=0.13\textwidth,clip]{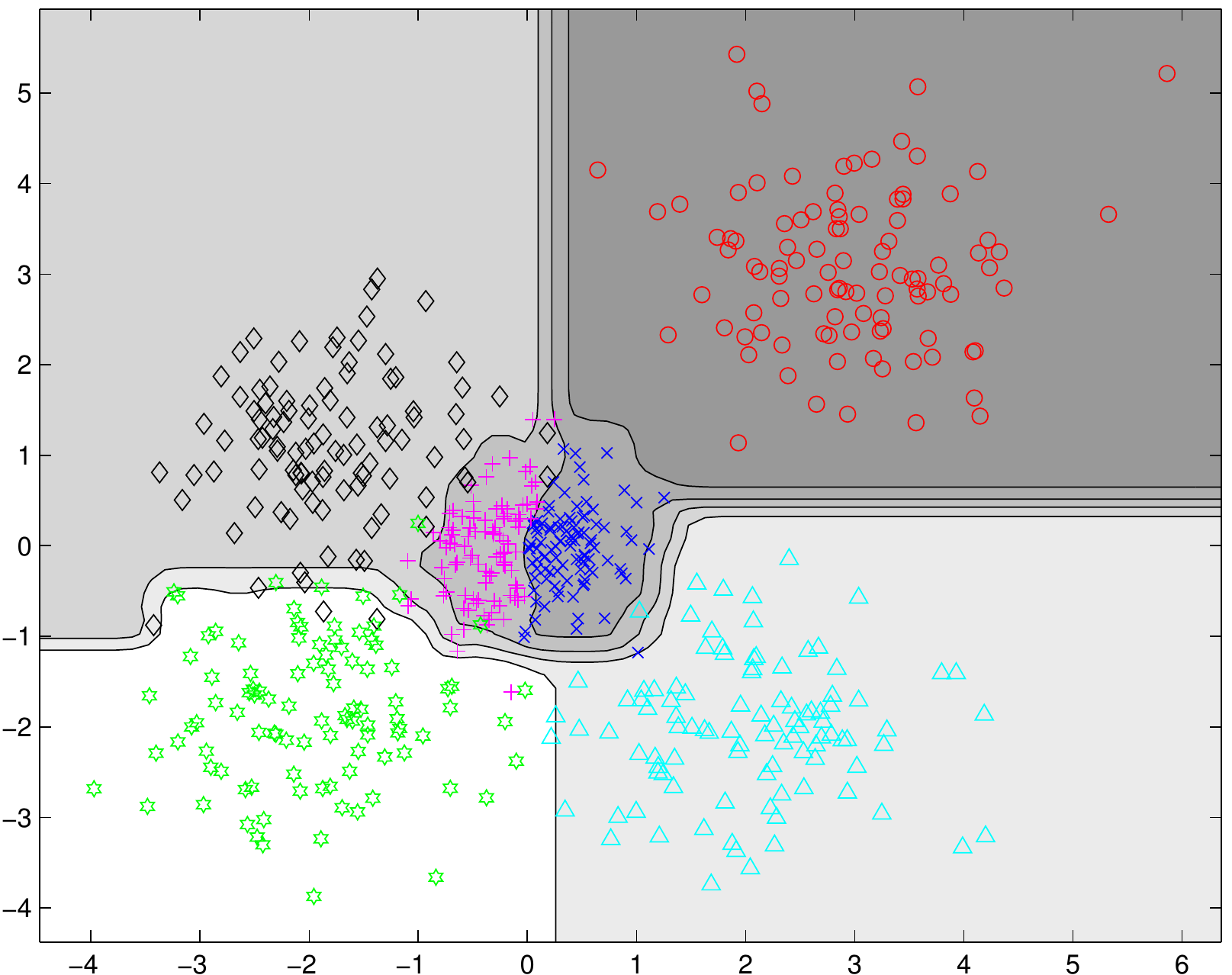}
        \includegraphics[width=0.13\textwidth,clip]{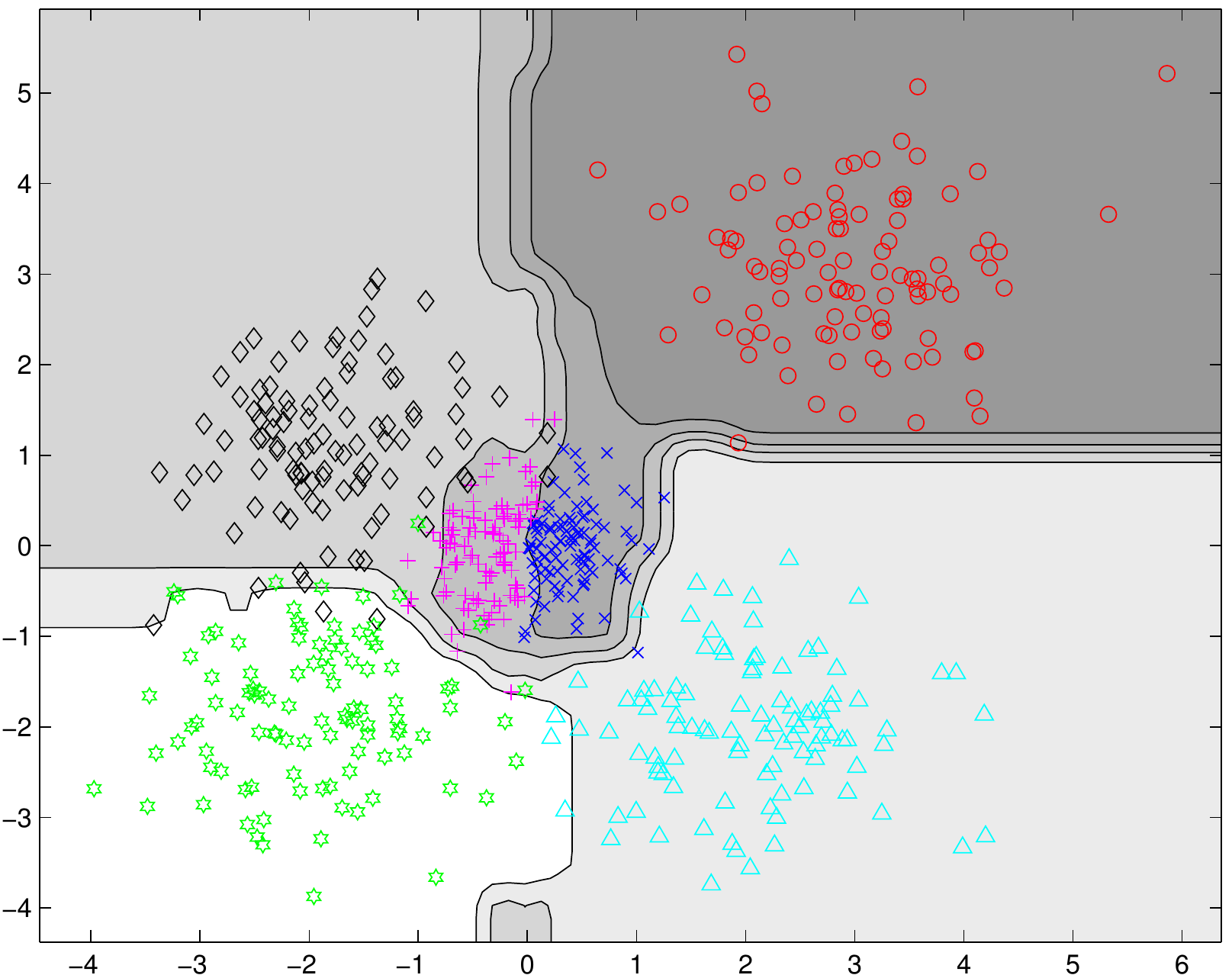}
        \includegraphics[width=0.13\textwidth,clip]{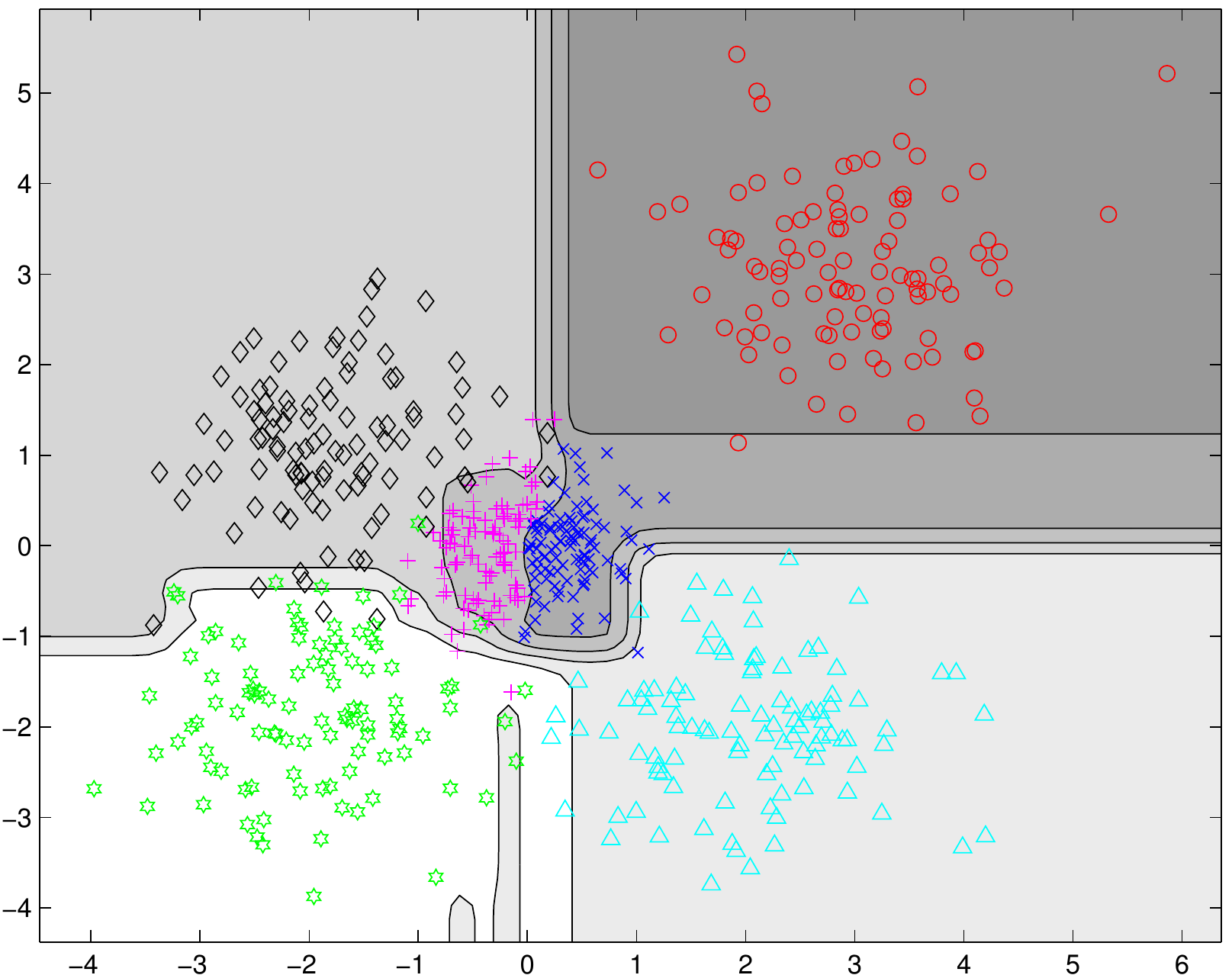}
        \includegraphics[width=0.13\textwidth,clip]{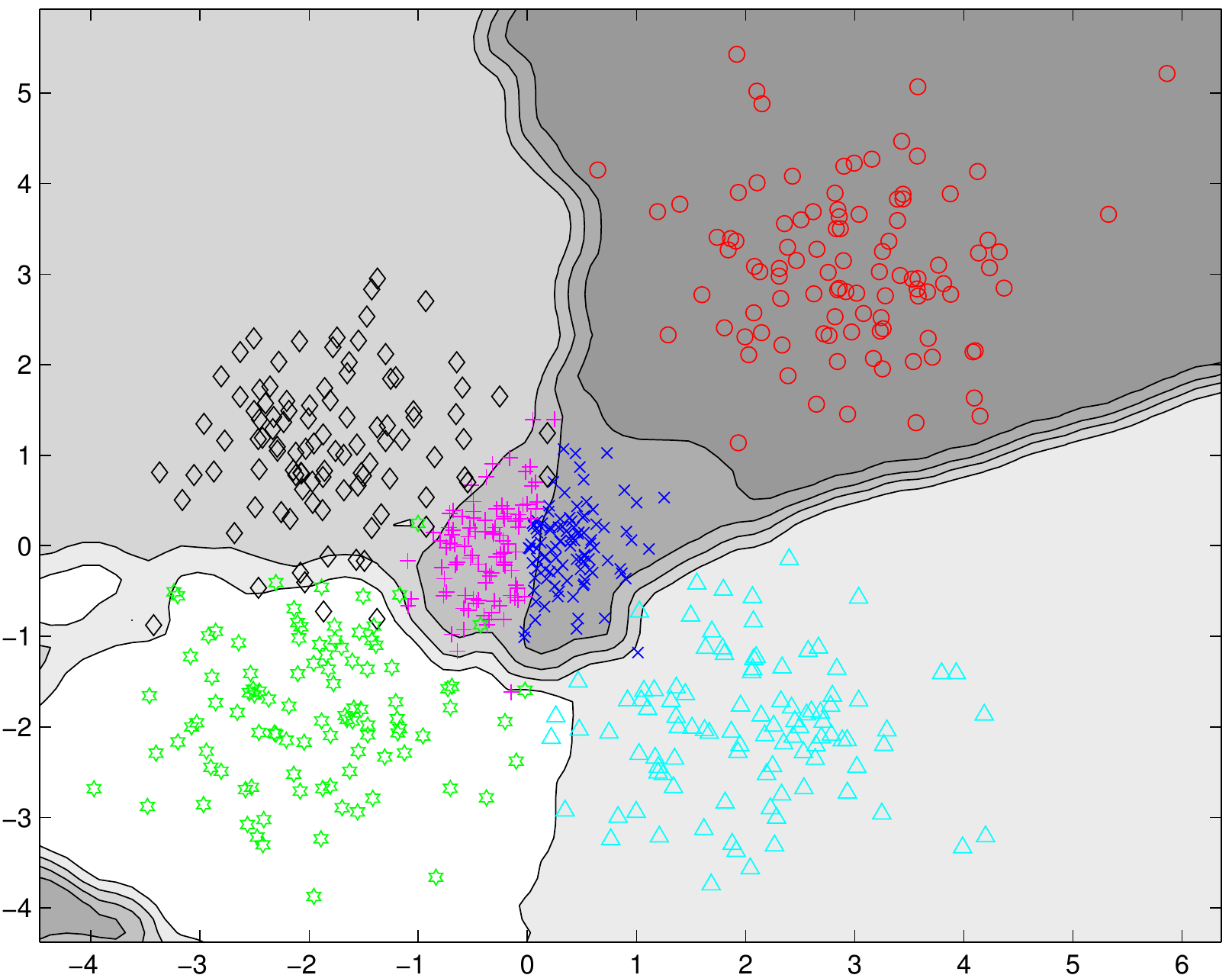}
        \includegraphics[width=0.13\textwidth,clip]{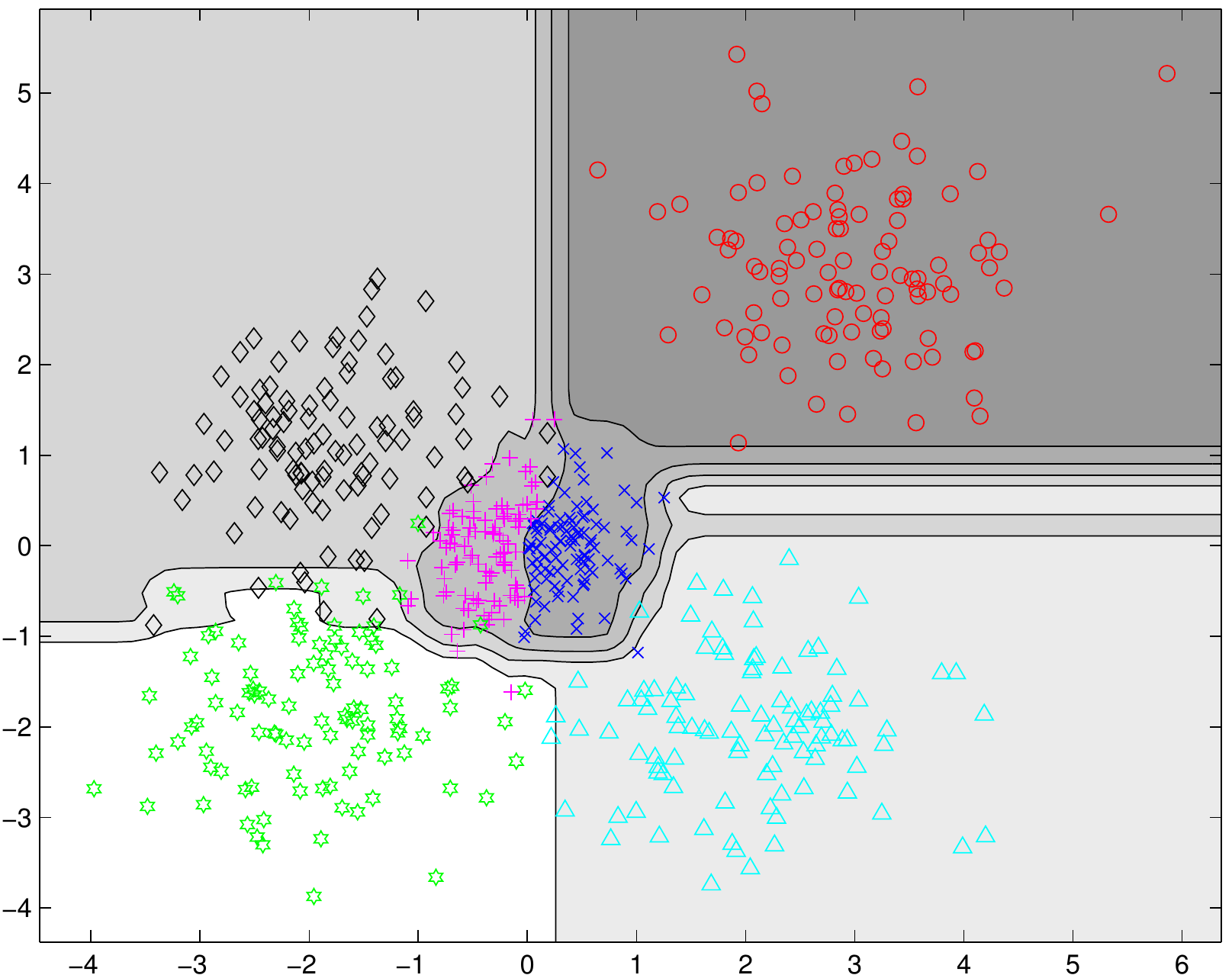}
    \caption{
    Decision boundaries of various
    multi-class boosting algorithms on artificial data sets.
    The data distribution is shown in the first column.
    The number of boosting iterations is set to $500$.
    }
    \label{fig:toy}
  \end{figure*}

\subsection{Toy data}

        We first illustrate the behavior of our algorithms on artificial multi-class
data sets.  We consider the problem of discriminating various object
classes on a $2$D plane.  For this experiment the feature vectors are
the $xy$-coordinates of the $2$D plane.  We train $6$ different
classifiers using AdaBoost.ECC \cite{Guruswami1999Multiclass},
AdaBoost.MH \cite{Schapire1999Improved}, AdaBoost.MO
\cite{Schapire1999Improved}, MultiBoost \cite{Shen2011Direct},
and our proposed
\randomRank and \randomProj.
For MultiBoost, we use hinge loss and
choose the regularization parameter from $\{ 10^{-4},$ $10^{-3},$ $10^{-2} \}$.
For both
\randomRank and \randomProj, we set $n$ to be equal to $500$.
For
\randomProj, we choose the regularization parameter, $\nu$,
from $\{ 10^{-7},$ $10^{-6},$
$10^{-5},$ $10^{-4},$ $10^{-3} \}$.
In this experiment, we set the number of boosting
iterations to $500$.
Fig.~\ref{fig:toy} plots decision boundaries of various methods.
On two dimensional toy data sets, we observe that
decision boundaries of \randomRank\ are very similar to the
true decision boundary.  This is not surprising since all toy
data sets are generated from the multivariate normal distribution.
Hence, \randomRank\ produces very accurate decision boundaries.

\begin{table*}[bt]
  \centering
  \begin{tabular}{l|cccc|cccc}
  \hline
    &\multicolumn{4}{c|}{\randomRank} & \multicolumn{4}{c}{\randomProj}\\
  \cline{2-9}
    Data set & $n=1000$ & $2500$ & $5000$ & $10000$ &  $n=250$ & $500$ & $1000$ & $2000$ \\
  \hline
  \hline
  Synthetic 1 &  $3.4$ ($1.5$) & $2.6$ ($0.9$) & $2.1$ ($0.9$) & $1.8$ ($1.5$) &
                $11.2$ ($1.9$) & $13.0$ ($1.6$) & $11.2$ ($1.3$) & $10.9$ ($2.1$) \\
  Synthetic 2 &  $0.6$ ($0.6$) & $0.6$ ($0.4$) & $0.2$ ($0.4$) & $0.5$ ($1.1$) &
                $7.0$ ($2.3$) & $8.2$ ($1.4$) & $6.6$ ($2.2$) & $7.0$ ($1.3$) \\
  Synthetic 3 & $3.7$ ($1.2$) & $3.3$ ($1.4$) & $4.0$ ($1.3$) & $3.5$ ($1.3$) &
                $6.5$ ($2.0$) & $6.9$ ($2.0$) & $6.4$ ($1.8$) & $7.1$ ($2.9$) \\
  \hline
  \end{tabular}
  \caption{
  Average test errors and standard deviations
  (shown in $\%$) for different values of $n$.
  All experiments are repeated $5$ times
  }
  \label{tab:D}
\end{table*}

\paragraph{Size of the projected space}
We use three previous artificial data sets and vary the size of the projected space, $n$.
Each data set is randomly split into two groups: $75\%$ for training and the rest for evaluation.
We set the maximum number of boosting iterations to $500$.
We vary $n$ from $1000$ to $10,000$ for \randomRank and $250$ to $2000$ for \randomProj.
For \randomRank, the larger the parameter $n$, the more features that the algorithm can choose during training.
From Theorem~\ref{thm:multiclass}, as long as 
$D$ is approximately larger than $\log(mk)$, 
the margin is preserved with high probability for \randomProj.
Table~\ref{tab:D} reports final classification errors of various $n$.
For \randomRank, we observe a slight increase in generalization performance when $n$ increases.
For \randomProj, as long as $n$ is sufficiently large ($> 250$ in this experiment), the final performance is almost not affected by the value of $n$.

\begin{table*}[t]
  \centering
  {
  \begin{tabular}{l|c|c|c|c|c|c}
  \hline
   &  \multicolumn{2}{|c|}{Exponential loss (TC)}  &
      \multicolumn{2}{|c|}{Logistic loss (TC)} &
      \multicolumn{2}{|c}{Exponential loss (SW)} \\
  \cline{2-7}
   Data set & Test error & CPU time & Test error & CPU time & Test error & CPU time  \\
  \hline
  \hline
australian & $\mathbf{14.9}$ ($\mathbf{2.5}$) & $11.7$ & $17.4$ ($2.6$) & $6.3$ &
    $15.8$ ($2.1$) & $\mathbf{0.03}$ \\
heart & $\mathbf{19.9}$ ($\mathbf{4.5}$) & $6.5$ & $22.7$ ($3.8$) & $1.9$ &
    $21.3$ ($4.2$) & $\mathbf{0.02}$ \\
wine & $3.0$ ($2.4$) & $13.7$ & $\mathbf{2.5}$ ($\mathbf{2.3}$) & $1.4$ &
    $2.5$ ($2.7$) & $\mathbf{0.03}$ \\
glass & $34.9$ ($6.1$) & $10.6$ & $\mathbf{30.4}$ ($\mathbf{5.2}$) & $4.8$ &
    $31.3$ ($6.2$) & $\mathbf{0.03}$ \\
segment & $\mathbf{3.0}$ ($\mathbf{0.6}$) & $145$ & $3.1$ ($0.7$) &
    $45.2$ & $3.3$ ($0.6$) & $\mathbf{0.09}$ \\
  \hline
  \end{tabular}
  }
  \caption{Average test errors (in \%) and CPU time (seconds) (time
  taken to solve the optimization problem in step \cfour\, Algorithm~\ref{ALG:alg1}).
  TC: totally-corrective \randomRank and
  SW: stage-wise \randomRank
  }
  \label{tab:randomRankTC}
\end{table*}

\subsection{Totally-corrective \randomRank and stage-wise \randomRank}
\revised{
In this experiment, we compare the performance of
totally-corrective \randomRank with stage-wise \randomRank.
We use UCI machine learning repository data sets and randomly split the data sets
into two groups: $75\%$ of samples for training and the rest for evaluation.
We set the maximum number of boosting iterations to $500$.
We conduct an experiment on two convex
losses: the exponential loss and the logistic loss.
For totally-corrective boosting, we solve the optimization problem,
step \cfour\ in Algorithm~\ref{ALG:alg1}, using L-BFGS-B.
For L-BFGS-B parameters, we set the maximum number of iterations to $100$,
the accuracy of the line search to $10^{-5}$,
the convergence parameter to terminate the program to $10^7 \cdot \epsilon$
(where $\epsilon$ is a machine precision) and
the number of corrections to approximate the inverse hessian matrix to $5$.
We use the same L-BFGS-B parameters for all experiments.
The regularization parameter in \eqref{EQ:App2}, $\nu$, 
is determined by $5$-fold cross validation.
We choose the best $\nu$ from $\{ 10^{-7}$, $10^{-6}$, 
$10^{-5}$, $10^{-4}$, $10^{-3}$, $10^{-2} \}$.
For stage-wise \randomRank, we set $n$ to be $100$ times
the dimension size of the original data.
All experiments are repeated $10$ times and the average
and standard deviation of test errors are reported in
Table~\ref{tab:randomRankTC}.
We observe that the performance of both convex losses are comparable
and stage-wise \randomRank produces
comparable test accuracy to totally-corrective \randomRank.
However, stage-wise \randomRank has a much lower CPU time.
Since both totally-corrective and stage-wise \randomRank are comparable,
we use stage-wise \randomRank in the rest of our experiments.
}

\subsection{UCI data sets}
\begin{table*}[t]
  \centering
  {
  \begin{tabular}{l|ccc|ccc|ccc}
  \hline
   &  \multicolumn{3}{|c|}{AdaBoost}  &  \multicolumn{3}{|c|}{\randomRank} & \multicolumn{3}{|c}{\randomProj} \\
   Data set & Test $50$ & Test $100$ & Test $1000$ & Test $50$ & Test $100$ & Test $1000$
   & Test $50$ & Test $100$ & Test $1000$ \\
  \hline
  \hline
australian & $14.8$ ($2.9$) & $14.8$ ($2.1$) & $16.6$ ($2.1$) & $15.3$ ($2.8$) & $15.7$ ($2.2$) & $16.9$ ($2.6$) & $\mathbf{14.2}$ ($\mathbf{2.4}$) & $\mathbf{14.2}$ ($\mathbf{2.4}$) & $\mathbf{14.2}$ ($\mathbf{2.4}$) \\
b-cancer & $4.3$ ($1.2$) & $4.4$ ($1.1$) & $4.6$ ($1.3$) & $4.6$ ($1.4$) & $4.2$ ($1.3$) & $4.3$ ($1.4$) & $\mathbf{3.9}$ ($\mathbf{1.0}$) & $\mathbf{4.0}$ ($\mathbf{1.0}$) & $\mathbf{4.1}$ ($\mathbf{1.0}$) \\
c-cancer & $20.0$ ($7.7$) & $18.7$ ($8.8$) & $16.0$ ($9.0$) & $\mathbf{16.7}$ ($\mathbf{7.9}$) & $\mathbf{15.3}$ ($\mathbf{8.3}$) & $\mathbf{16.0}$ ($\mathbf{7.8}$) & $23.3$ ($11.9$) & $23.3$ ($11.9$) & $23.3$ ($11.9$) \\
diabetes & $26.7$ ($2.1$) & $26.3$ ($3.0$) & $25.7$ ($2.9$) & $25.7$ ($1.5$) & $\mathbf{25.5}$ ($\mathbf{1.3}$) & $26.4$ ($2.6$) & $\mathbf{25.5}$ ($\mathbf{2.2}$) & $25.7$ ($2.1$) & $\mathbf{25.7}$ ($\mathbf{2.1}$) \\
german & $\mathbf{24.2}$ ($\mathbf{2.3}$) & $24.4$ ($2.3$) & $25.8$ ($3.0$) & $24.6$ ($2.5$) & $\mathbf{24.2}$ ($\mathbf{2.9}$) & $24.9$ ($3.1$) & $24.5$ ($1.6$) & $24.5$ ($1.6$) & $\mathbf{24.5}$ ($\mathbf{1.6}$) \\
heart & $\mathbf{16.7}$ ($\mathbf{3.1}$) & $17.6$ ($3.4$) & $20.9$ ($3.1$) & $16.9$ ($3.9$) & $\mathbf{16.7}$ ($\mathbf{4.0}$) & $\mathbf{17.6}$ ($\mathbf{3.5}$) & $19.6$ ($2.2$) & $19.9$ ($1.9$) & $19.9$ ($1.9$) \\
ionosphere & $11.7$ ($2.2$) & $11.6$ ($2.4$) & $10.0$ ($2.8$) & $\mathbf{9.4}$ ($\mathbf{3.1}$) & $\mathbf{7.5}$ ($\mathbf{2.9}$) & $\mathbf{7.4}$ ($\mathbf{3.6}$) & $12.2$ ($3.2$) & $11.9$ ($3.3$) & $12.0$ ($3.3$) \\
liver & $\mathbf{27.9}$ ($\mathbf{6.3}$) & $\mathbf{28.0}$ ($\mathbf{4.6}$) & $\mathbf{28.4}$ ($\mathbf{3.7}$) & $30.6$ ($4.7$) & $30.5$ ($4.6$) & $30.6$ ($3.6$) & $29.9$ ($5.6$) & $30.0$ ($5.4$) & $30.0$ ($5.4$) \\
mushrooms & $0.2$ ($0.1$) & $\mathbf{0.0}$ ($\mathbf{0.0}$) & $\mathbf{0.0}$ ($\mathbf{0.0}$) & $0.1$ ($0.1$) & $\mathbf{0.0}$ ($\mathbf{0.0}$) & $\mathbf{0.0}$ ($\mathbf{0.0}$) & $\mathbf{0.0}$ ($\mathbf{0.0}$) & $\mathbf{0.0}$ ($\mathbf{0.0}$) & $\mathbf{0.0} $ ($\mathbf{0.0}$) \\
sonar & $\mathbf{17.9}$ ($\mathbf{4.3}$) & $\mathbf{16.9}$ ($\mathbf{3.8}$) & $\mathbf{16.5}$ ($\mathbf{2.9}$) & $22.5$ ($5.4$) & $19.8$ ($6.1$) & $18.8$ ($4.7$) & $21.7$ ($4.9$) & $18.5$ ($4.0$) & $19.0$ ($4.5$) \\
splice & $8.7$ ($0.8$) & $8.4$ ($1.1$) & $8.7$ ($1.3$) & $16.5$ ($1.6$) & $15.1$ ($1.6$) & $11.3$ ($1.3$) & $\mathbf{8.3}$ ($\mathbf{1.2}$) & $\mathbf{8.2}$ ($\mathbf{1.2}$) & $\mathbf{8.2}$ ($\mathbf{1.2}$) \\
  \hline
  \end{tabular}
  }
  \caption{Average test errors and standard deviations
  (in $\%$) of the proposed algorithms on two-class UCI data sets.
  All experiments are repeated $10$ times.
  Test errors at $50$, $100$ and $1000$ boosting iterations are reported
  }
  \label{tab:UCIbinary}
\end{table*}

\begin{table*}[t]
  \centering
  {
  \begin{tabular}{l|cccccc}
  \hline
  Data set &  AdaBoost.ECC & AdaBoost.MH & AdaBoost.MO & MultiBoost & \randomRank & \randomProj  \\
  \hline
  \hline
dna ($3$ classes) & $6.8$ ($0.9$) & $\mathbf{5.6}$ ($\mathbf{1.2}$) & $6.9$ ($1.2$) & $7.0$ ($0.9$) & $6.7$ ($0.9$) & $6.7$ ($0.9$)  \\
svmguide2 ($3$ classes) & $23.2$ ($3.7$) & $21.7$ ($3.3$) & $22.9$ ($4.3$) & $22.1$ ($4.2$) & $\mathbf{19.8}$ ($\mathbf{3.0}$) & $21.1$ ($3.6$) \\
wine ($3$ classes) & $3.9$ ($3.0$) & $4.3$ ($3.8$) & $3.6$ ($3.7$) & $4.3$ ($3.5$) & $3.2$ ($2.9$) & $\mathbf{3.0}$ ($\mathbf{3.0}$)  \\
vehicle ($4$ classes) & $21.0$ ($3.6$) & $21.6$ ($3.4$) & $21.3$ ($3.0$) & $21.8$ ($3.0$) & $\mathbf{20.0}$ ($\mathbf{2.3}$) & $22.1$ ($2.3$) \\
glass ($6$ classes) & $23.0$ ($3.8$) & $27.0$ ($3.6$) & $26.2$ ($6.8$) & $26.2$ ($5.5$) & $26.8$ ($4.5$) & $\mathbf{22.5}$ ($\mathbf{4.2}$)  \\
satimage ($6$ classes) & $11.5$ ($0.7$) & $11.1$ ($1.1$) & $10.7$ ($1.0$) & $11.6$ ($0.9$) & $\mathbf{10.2}$ ($\mathbf{0.5}$) & $13.1$ ($0.8$) \\
svmguide4 ($6$ classes) & $\mathbf{15.9}$ ($\mathbf{2.7}$) & $17.5$ ($2.5$) & $17.9$ ($2.3$) & $19.0$ ($3.5$) & $17.6$ ($2.8$) & $17.4$ ($2.1$)  \\
segment ($7$ classes) & $2.1$ ($0.5$) & $3.0$ ($0.5$) & $2.3$ ($0.5$) & $2.4$ ($0.7$) & $3.2$ ($0.8$) & $\mathbf{2.1}$ ($\mathbf{0.3}$)  \\
usps ($10$ classes) & $9.2$ ($2.1$) & $9.2$ ($1.7$) & $\mathbf{8.8}$ ($\mathbf{2.5}$) & $10.0$ ($1.8$) & $8.8$ ($2.6$) & $9.1$ ($2.7$)  \\
pendigits ($10$ classes) & $5.2$ ($0.8$) & $5.8$ ($0.9$) & $6.3$ ($1.4$) & $7.0$ ($1.4$) & $\mathbf{2.8}$ ($\mathbf{0.9}$) & $5.2$ ($0.9$)  \\
vowel ($11$ classes) & $8.7$ ($2.5$) & $11.2$ ($2.3$) & $12.1$ ($3.0$) & $9.3$ ($2.8$) & $\mathbf{3.1}$ ($\mathbf{1.3}$) & $8.1$ ($2.2$) \\
  \hline
  \end{tabular}
  }
  \caption{Average test errors (in $\%$) of different algorithms on multi-class UCI data sets.
  All experiments are repeated $10$ times and the number of boosting iterations
  is set to $1000$
  }
  \label{tab:UCImulti}
\end{table*}

\begin{figure*}[t]
    \begin{center}
        \includegraphics[width=0.19\textwidth,clip]{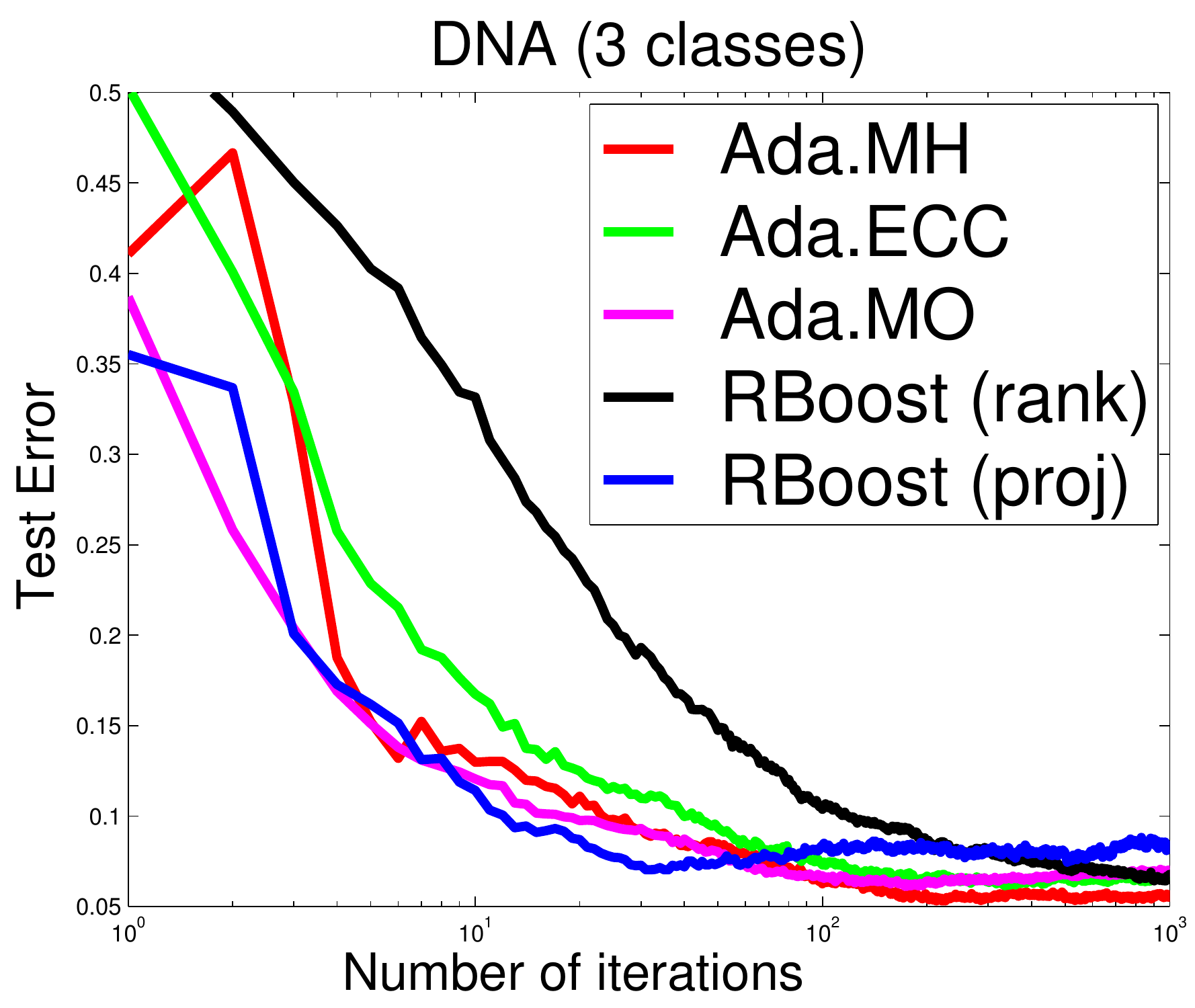}
        \includegraphics[width=0.19\textwidth,clip]{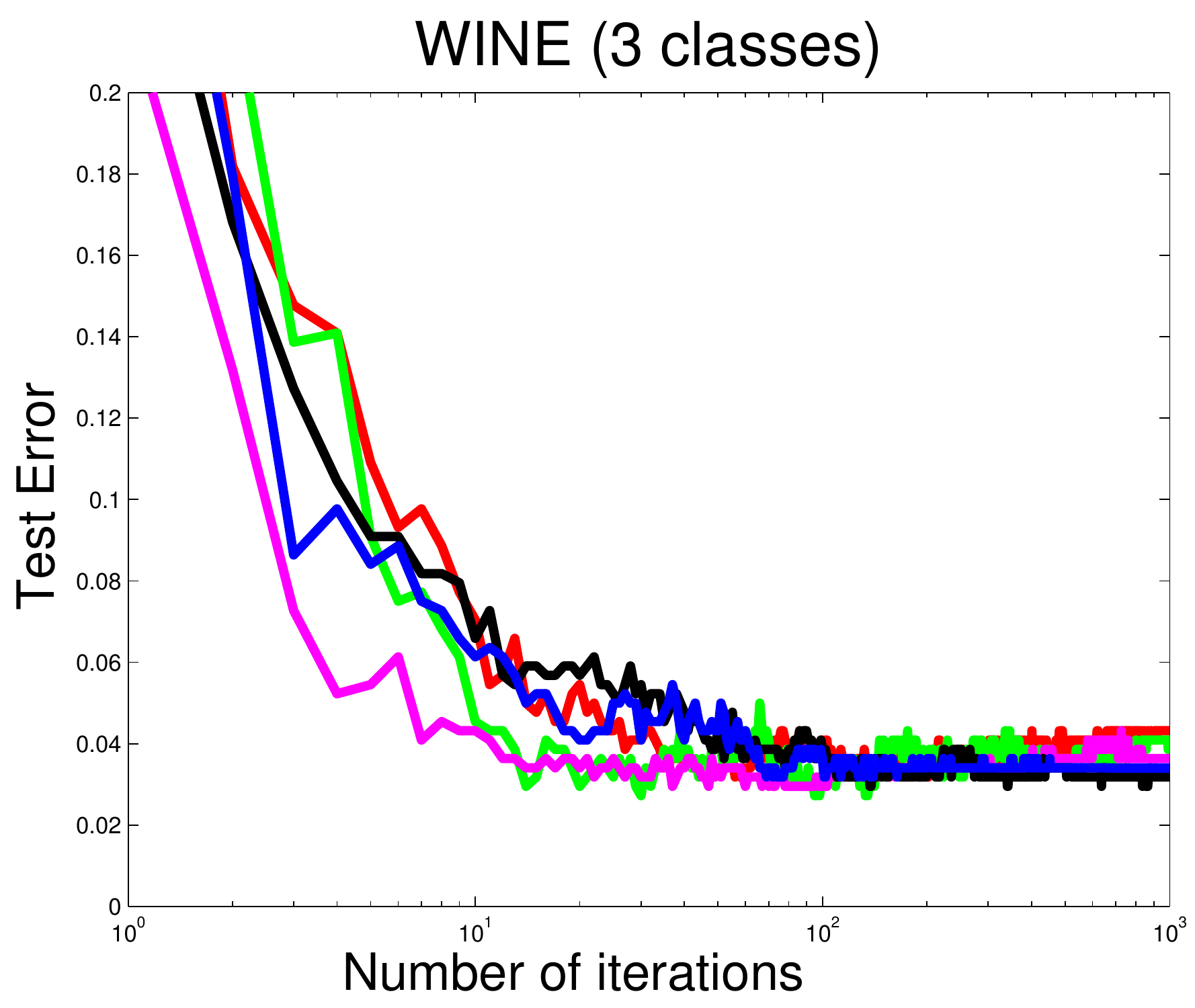}
        \includegraphics[width=0.19\textwidth,clip]{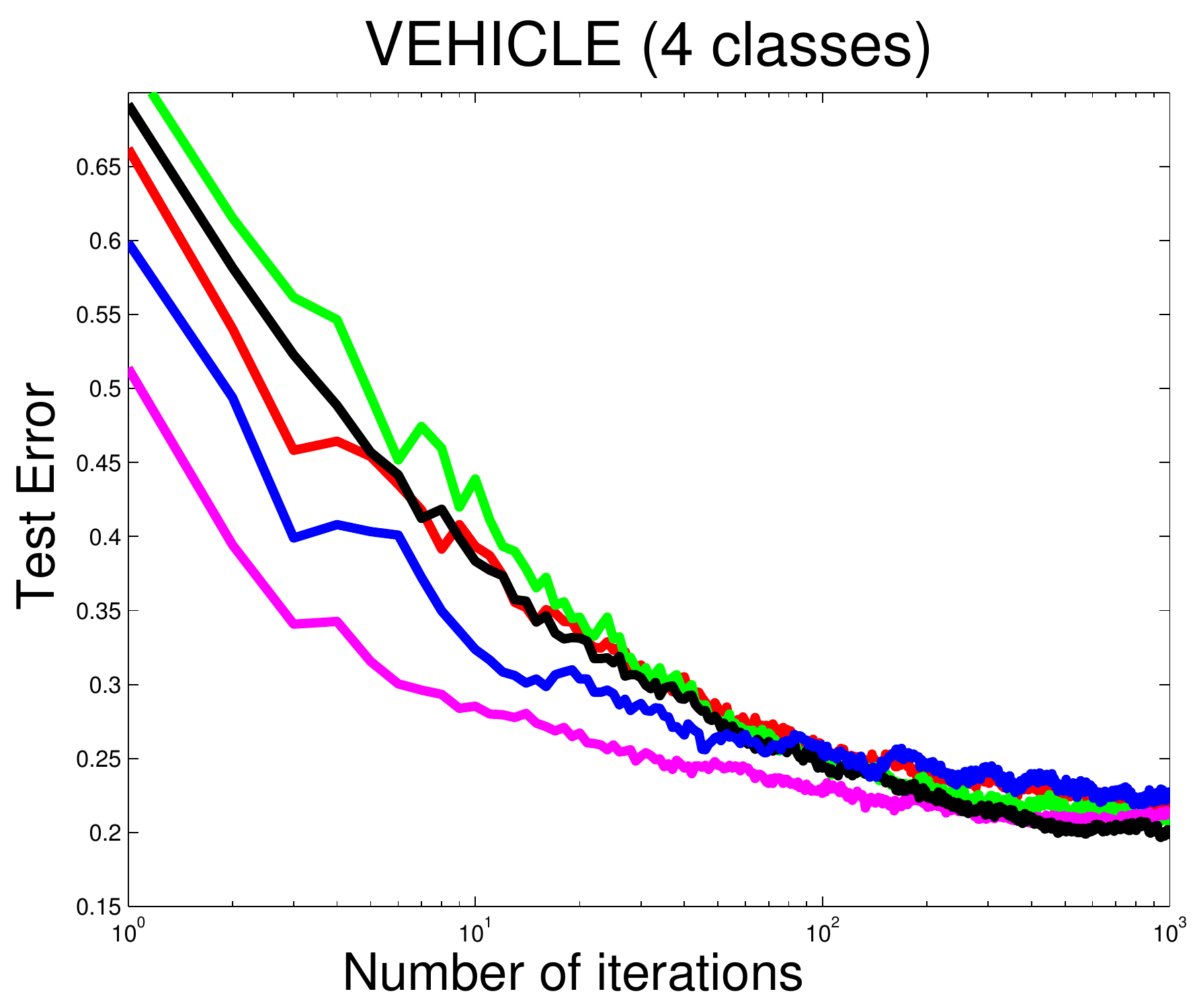}
        \includegraphics[width=0.19\textwidth,clip]{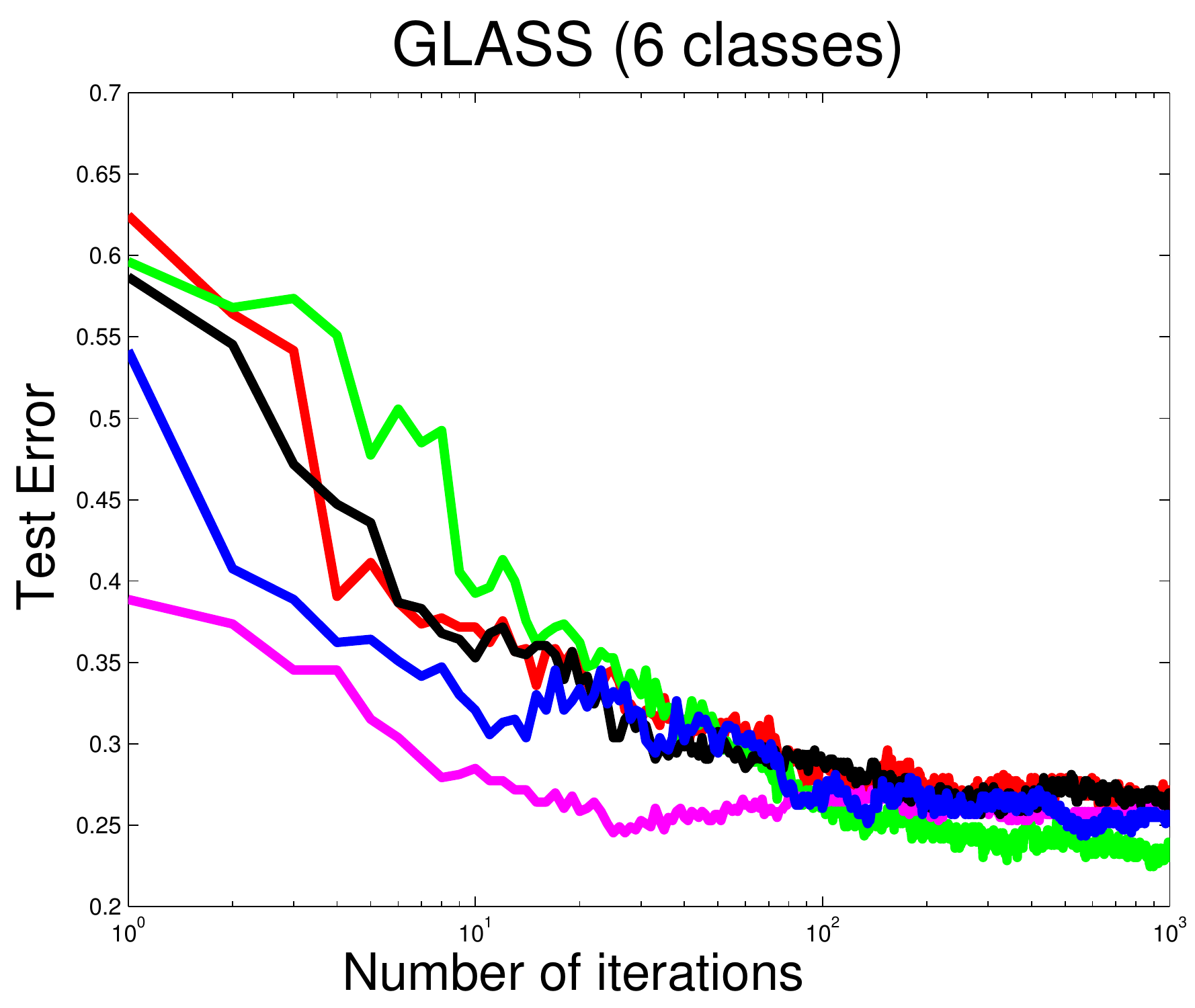}
        \includegraphics[width=0.19\textwidth,clip]{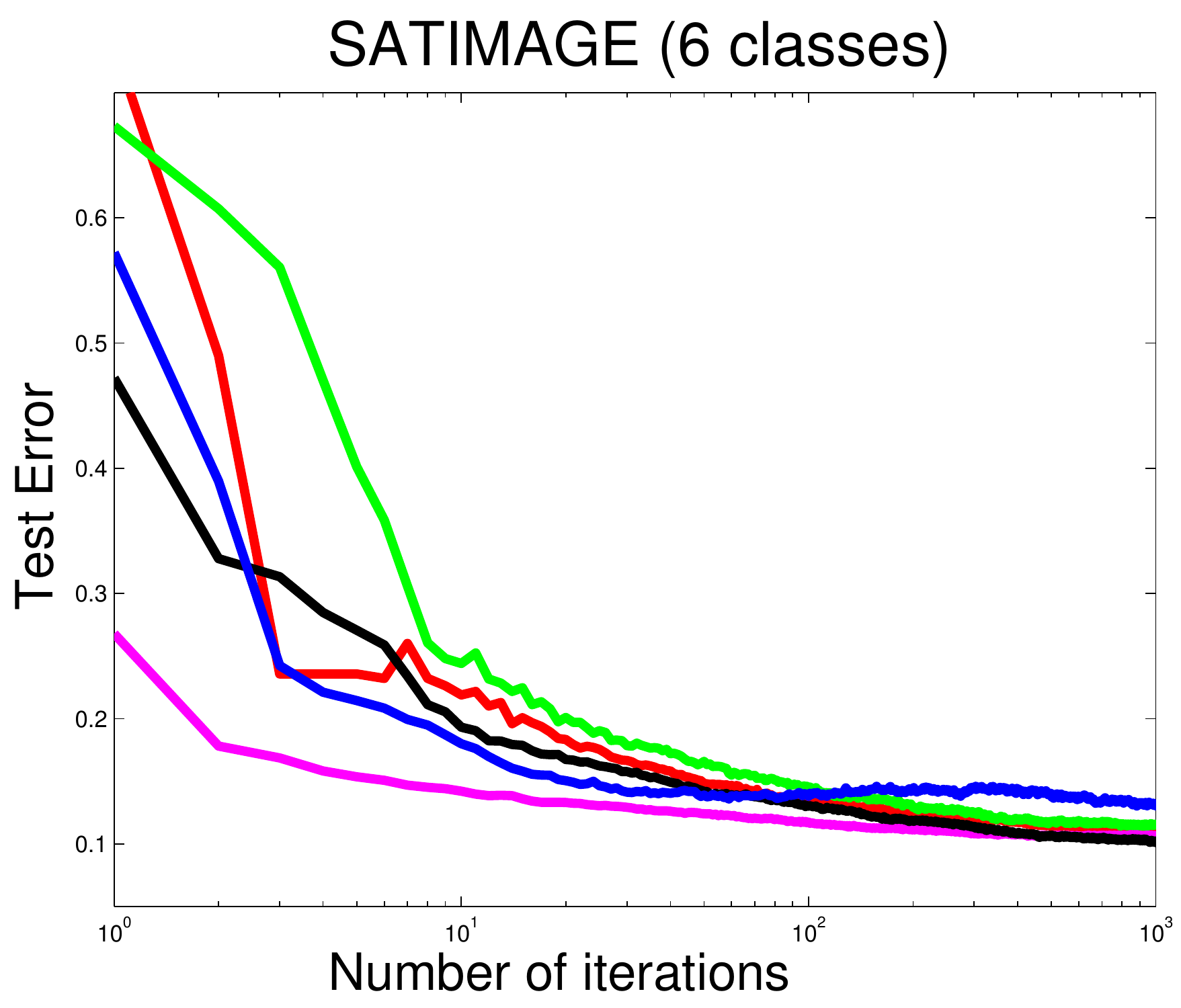}
        \includegraphics[width=0.19\textwidth,clip]{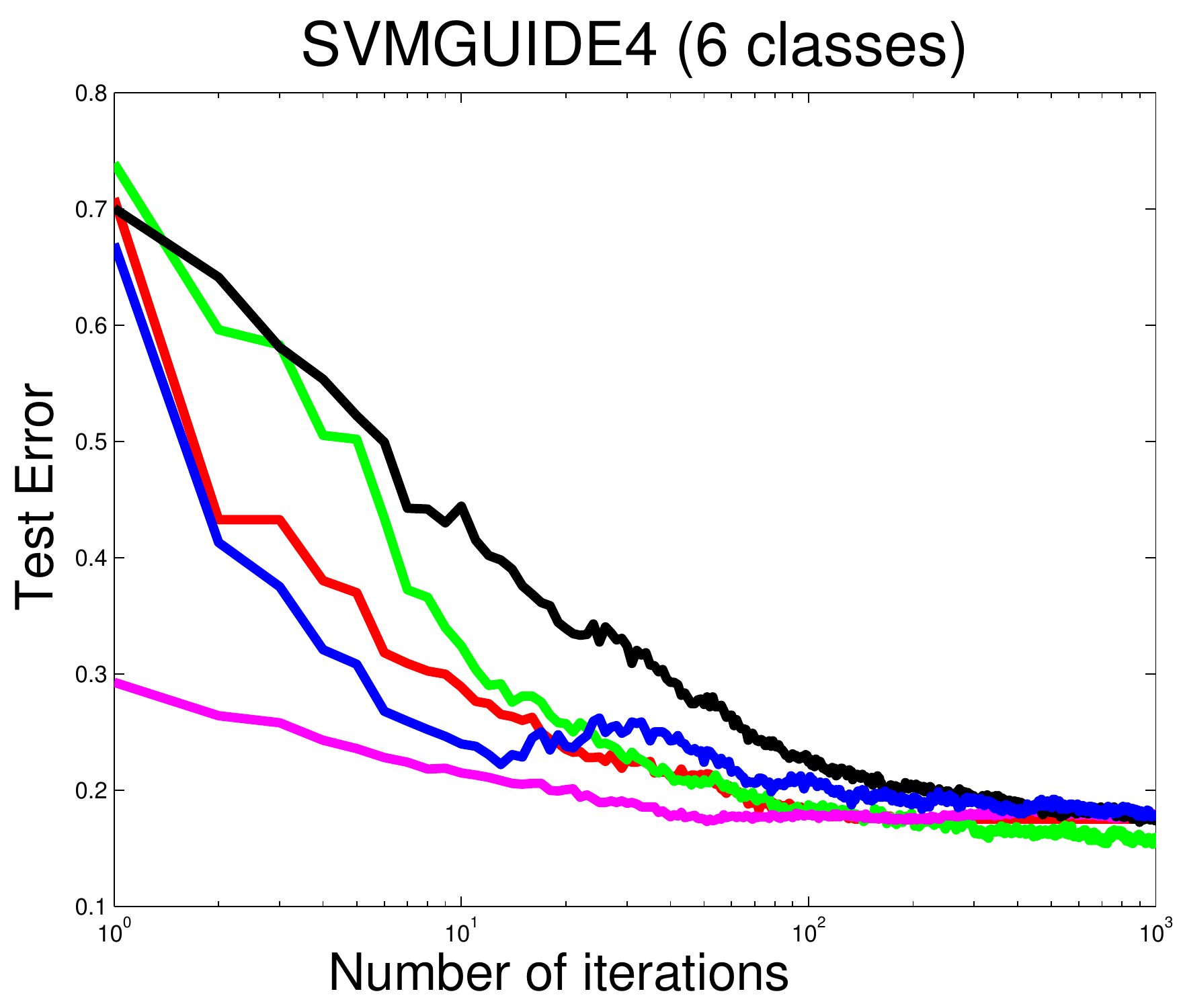}
        \includegraphics[width=0.19\textwidth,clip]{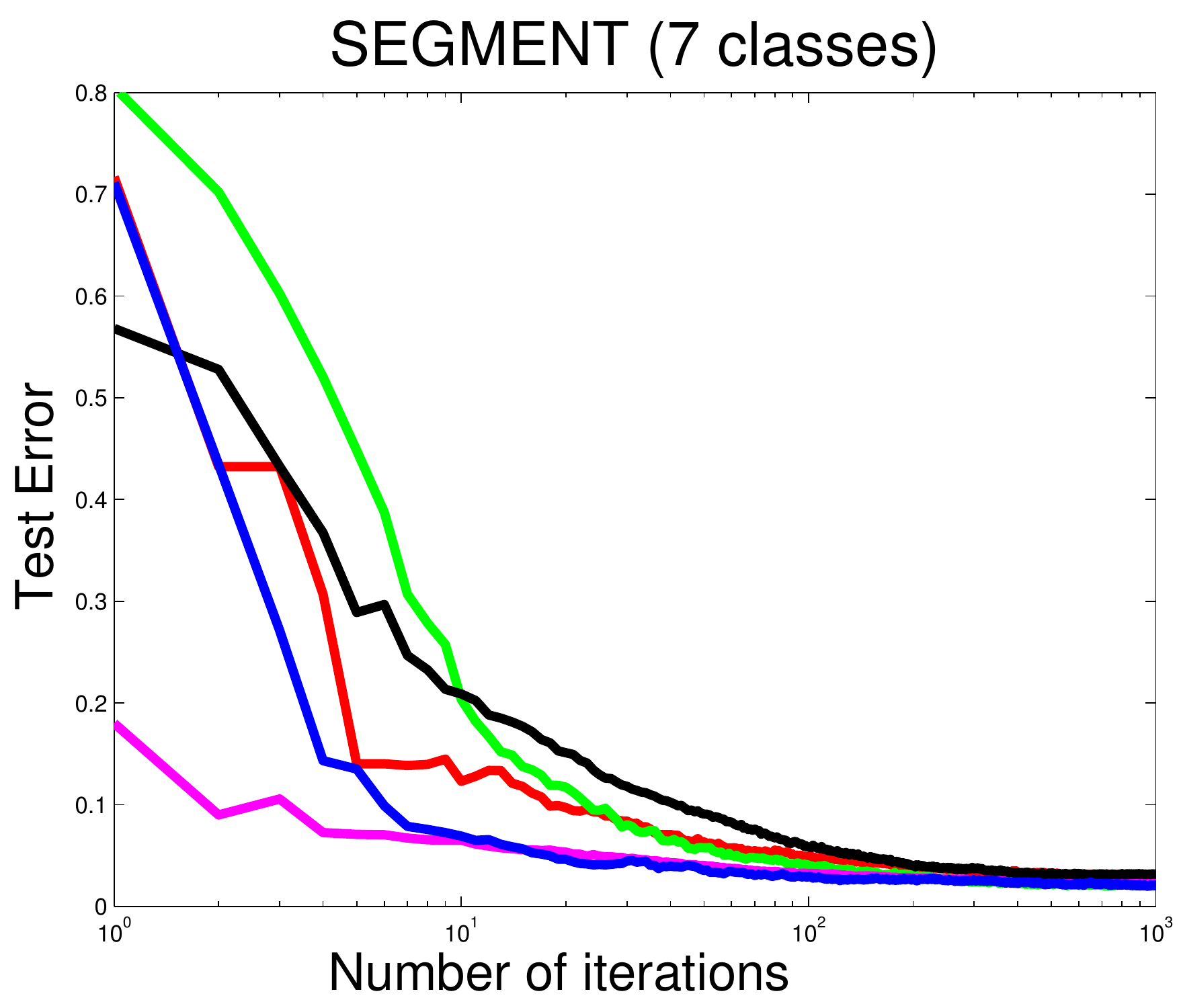}
        \includegraphics[width=0.19\textwidth,clip]{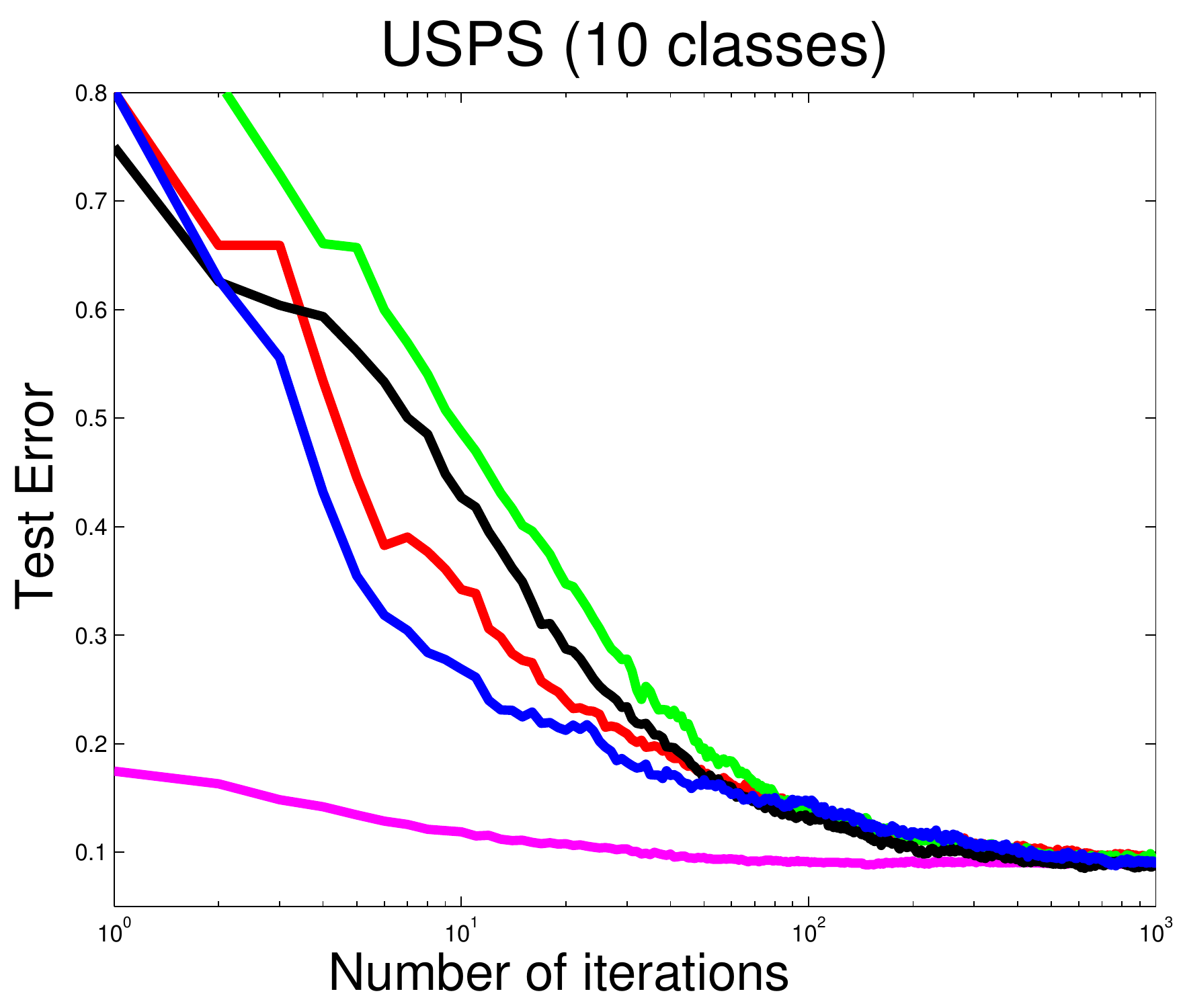}
        \includegraphics[width=0.19\textwidth,clip]{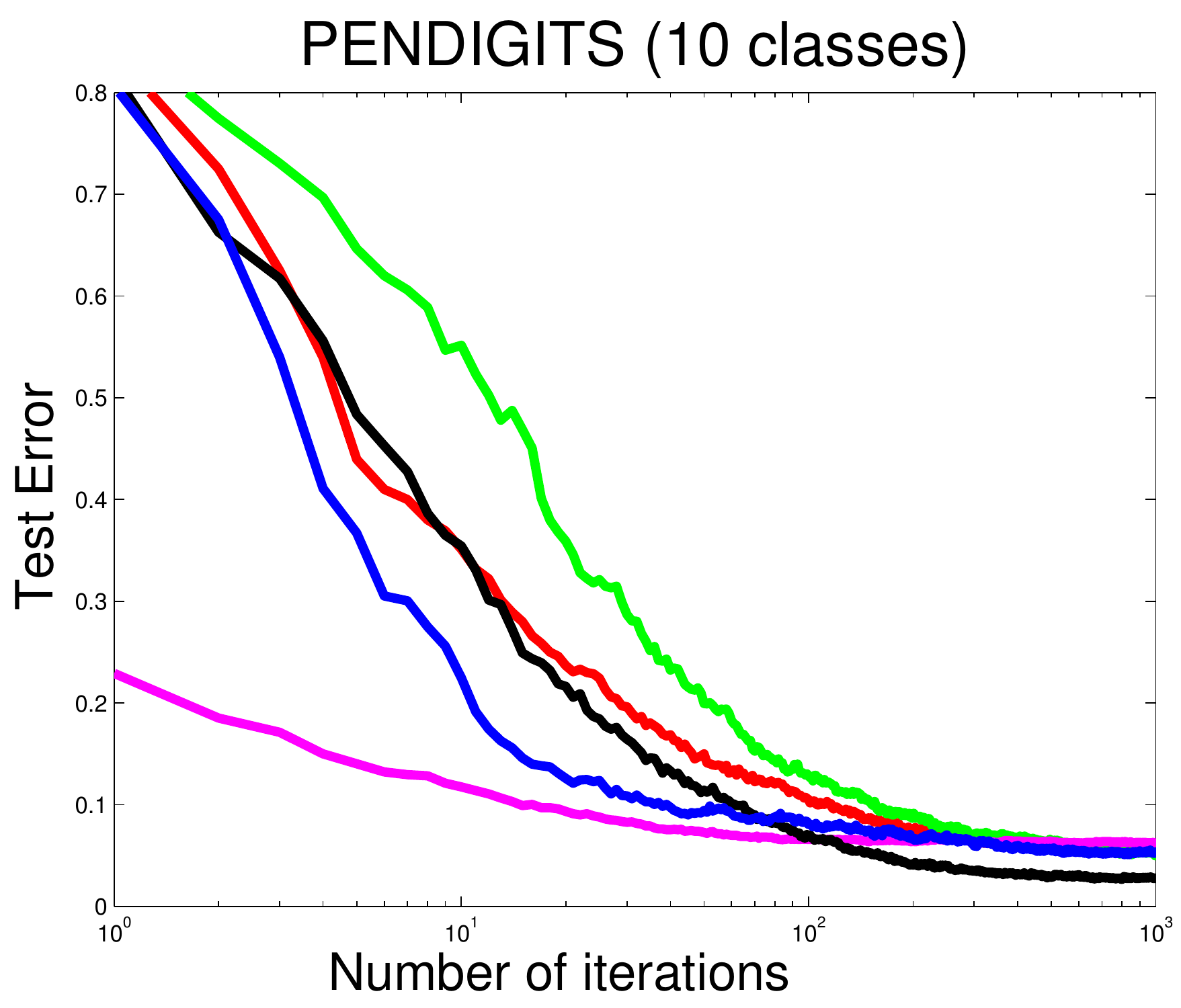}
        \includegraphics[width=0.19\textwidth,clip]{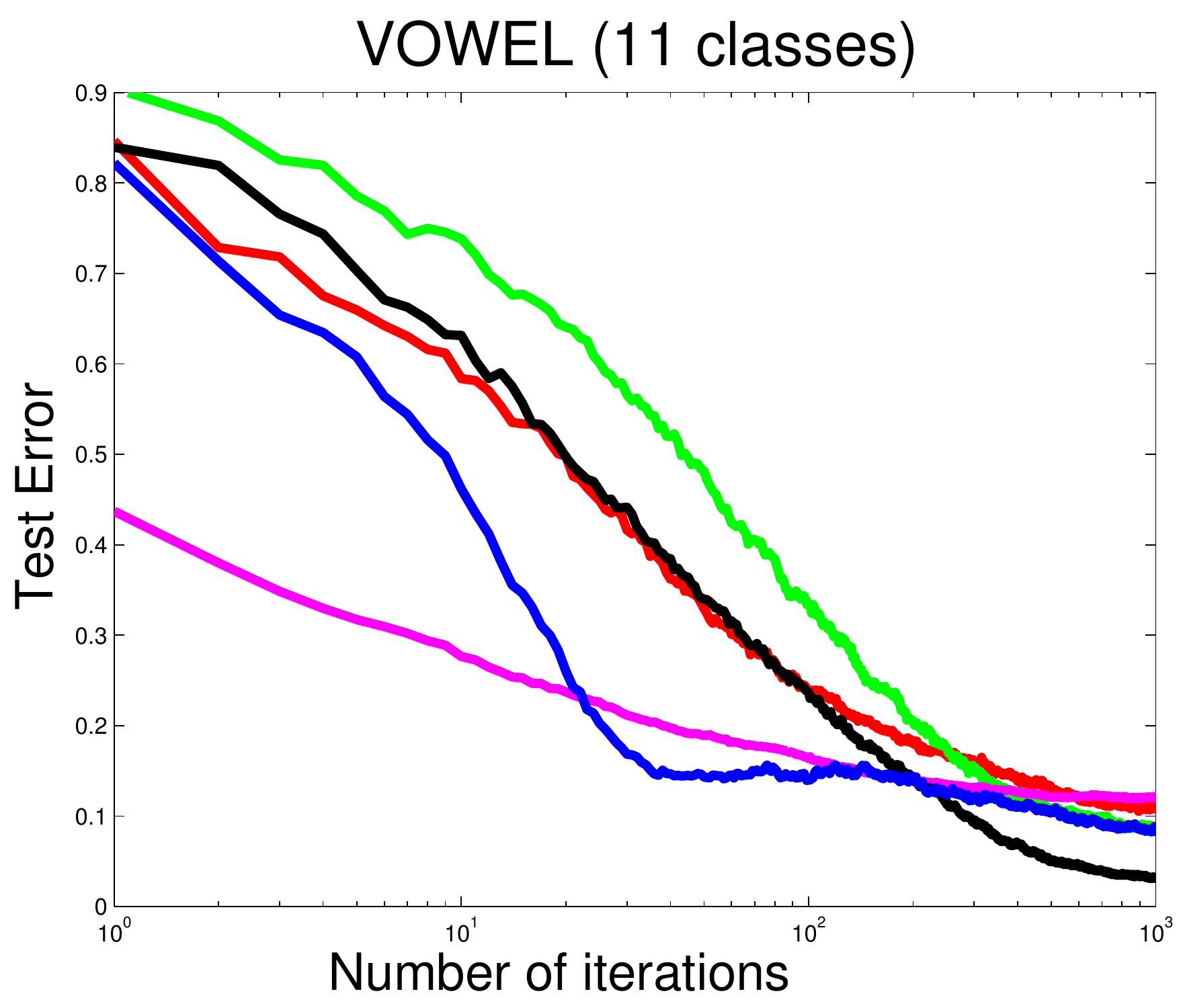}
    \end{center}
    \caption{
    Average test error curves on multi-class UCI data sets.
    The vertical axis denotes the averaged test error rate and
    the horizontal axis denotes the number of boosting iterations.
    Best viewed in color.
    }
    \label{fig:UCImulti}
  \end{figure*}

The next experiment is conducted on both binary and multi-class UCI machine learning repository
and Statlog data sets\footnote{For USPS and pendigits, we use $100$ samples from each class.}.
For binary classification problems, we compare our approaches with AdaBoost \cite{Freund1997Decision} while for multi-class problems, we compare our approaches with AdaBoost.MH \cite{Schapire1999Improved}, AdaBoost.MO \cite{Schapire1999Improved}, AdaBoost.ECC \cite{Guruswami1999Multiclass} and MultiBoost \cite{Shen2011Direct}.
Each data set is then randomly split into two groups: $75\%$ of samples for training and the rest for evaluation.
We set the maximum number of boosting iterations to $1000$.
For AdaBoost.MH, AdaBoost.MO and AdaBoost.ECC, the training stops when the algorithm converges, \eg, when the weighted error of weak classifiers is greater than $0.5$.
For MultiBoost, we use the logistic loss and choose the regularization parameter from $\{ 10^{-8}, 10^{-7}, 10^{-6}, 10^{-5}, 10^{-4} \}$.
For \randomRank, we set $n$ to be $2\cdot 10^4$.
For \randomProj, we set $n$ to be equal to the number of boosting iterations, \ie, $1000$.
Note that we have not carefully tuned $n$ in this experiment.
The regularization parameter, $\nu$, is determined by $5$-fold cross validation.
We choose the best $\nu$ from $\{ 10^{-5}, 10^{-4}, 10^{-3}, 10^{-2} \}$ for binary problems and from 
$\{ 10^{-8},$ $2.5 \times 10^{-8},$ $5 \times 10^{-8},$
$7.5 \times 10^{-8},$ $10^{-7}, \cdots, 10^{-2} \}$
for multi-class problems.
The training stops when adding more weak classifiers does not further decrease the objective function of \eqref{EQ:logPrimal}.
All experiments are repeated $10$ times and the mean and standard
deviation of test errors are reported in Tables~\ref{tab:UCIbinary} and \ref{tab:UCImulti}.
For binary classification problems, we observe that all methods perform similarly.
This indicates that random projection based classifiers work well in practice.
This is not surprising since it can be shown easily that, for two-class problems, \randomRank simply performs AdaBoost on the randomly projected data \cite{Rudin2009Margin}.
By the theory of random projections  one would expect the performance of AdaBoost trained using the data in the original space to be similar to that of AdaBoost trained using the randomly projected data.
For multi-class problems, we observe that most methods perform very similarly.
However, \randomRank has a slightly better generalization performance 
than other multi-class boosting algorithms on $5$ out of $11$ data sets while
\randomProj performs slightly better than other algorithms on 
$3$ out of $11$ data sets.

\revised{
We then statistically compare both proposed approaches using 
the nonparametric Wilcoxon signed-rank test (WSRT) \cite{Demvsar2006Statistical}.
WSRT tests the median performance difference between 
\randomProj and \randomRank.
In this test, we set the significance level to be $5\%$.
The null-hypothesis declares that there is no difference between
the median performance of both algorithms at the $5\%$ significance level,
\ie, both algorithms perform equally well in a statistical sense.
According to the table of exact critical values for the Wilcoxon's test,
for a confidence level of $0.05$ and $11$ data sets,
the difference between the classifiers is significant if the smaller
of the rank sums is equal or less than $10$.
Since the signed rank statistic result ($16$) is not less than 
the critical value ($10$), 
WSRT indicates a failure to reject the null hypothesis 
at the $5\%$ significance level. 
In other words, the test statistics suggest that both 
\randomProj and \randomRank perform equally well.

We further conduct an additional experiment on \randomRank and \randomProj 
using a different weak classifier.
An alternative choice of weak classifiers for training boosting classifiers
is weighted Fisher linear discriminant analysis (WLDA) \cite{Shen08ICIP}.
WLDA learns a linear projection function which 
ensures good class separation between
normally distributed samples of two classes.
The linear projection function is
defined as $ (\Sigma_1 + \Sigma_2)^{-1} (\mu_1 - \mu_2) $
where $\mu_1$ and $\mu_2$ are weighted class mean, and 
$\Sigma_1$ and $\Sigma_2$ are weighted class covariance 
matrices of the first and second class, respectively.
In our experiment, we project the weighted input data 
to a line using WLDA and
train the decision stump on the new $1$D data \cite{Shen08ICIP}.
Although WLDA has a closed-form solution, computing the inverse
of the covariance matrix can be computationally expensive when the size 
of covariance matrices is large,
\ie, the time complexity is cubic in 
the size of covariance matrices which is $O ( [\min (n, (m-1 ) k ) ]^3 )$.
For \randomRank, it is computationally infeasible to
find the inverse of the covariance matrices when 
$n$ ($n = 20,000$) and $  (m-1) k  $ is large.
The is one of the advantages for 
\randomRank, compared with \randomProj. 

So instead we randomly select $1000$ dimensions from $n$
at each boosting iteration and then apply WLDA.
We concatenate the new WLDA feature to $n$ randomly
projected features and train \randomRank.
The average classification error of both approaches
is shown in Table~\ref{tab:exp_proj_wlda}.
1) We observe that the performance of both approaches
often improves when we apply a more discriminative WLDA
as the weak learner, compared with decision stumps.
2) Again, we statistically compare the performance of both 
proposed approaches (with WLDA as the weak learner).
Since the signed rank statistic result ($10.5$) is not less than 
the critical value ($0$), 
WSRT indicates a failure to reject the null hypothesis 
at the $5\%$ significance level. 
In summary, both algorithms also perform equally well when
WLDA is used as the weak learner.
Note that other weak learners, \eg, LIBLINEAR and
radial basis function (RBF),
may also be applied here.

We plot average test error curves of multi-class UCI data sets in
Fig.~\ref{fig:UCImulti}.
Again, we can see that both of the proposed methods perform similarly. 

}

\begin{table}[t]
  \centering
  {
  \begin{tabular}{l|cc}
  \hline
  Data set &  \randomRank (WLDA) & \randomProj (WLDA) \\
  \hline
  \hline
svmguide2 ($3$ classes) & $\mathbf{18.9}$ ($\mathbf{3.1}$) & $19.7$ ($2.6$) \\
wine ($3$ classes) & $3.2$ ($2.9$) & $\mathbf{2.5}$ ($\mathbf{3.1}$) \\
vehicle ($4$ classes) & $19.6$ ($2.3$) & $\mathbf{18.4}$ ($\mathbf{2.6}$) \\
glass ($6$ classes) & $25.9$ ($4.0$) & $\mathbf{22.5}$ ($\mathbf{4.2}$) \\
pendigits ($10$ classes) & $\mathbf{2.8}$ ($\mathbf{0.9}$) & $4.0$ ($1.2$) \\
vowel ($11$ classes) & $\mathbf{2.6}$ ($\mathbf{1.1}$) & $4.5$ ($1.8$) \\
  \hline
  \end{tabular}
  }
  \caption{Average test errors (shown in $\%$) with linear perceptron
  classifiers trained by weighted linear
  discriminant analysis (WLDA) as the weak learner. 
  }
  \label{tab:exp_proj_wlda}
\end{table}

  \begin{figure}[t]
    \begin{center}
        \includegraphics[width=0.24\textwidth,clip]{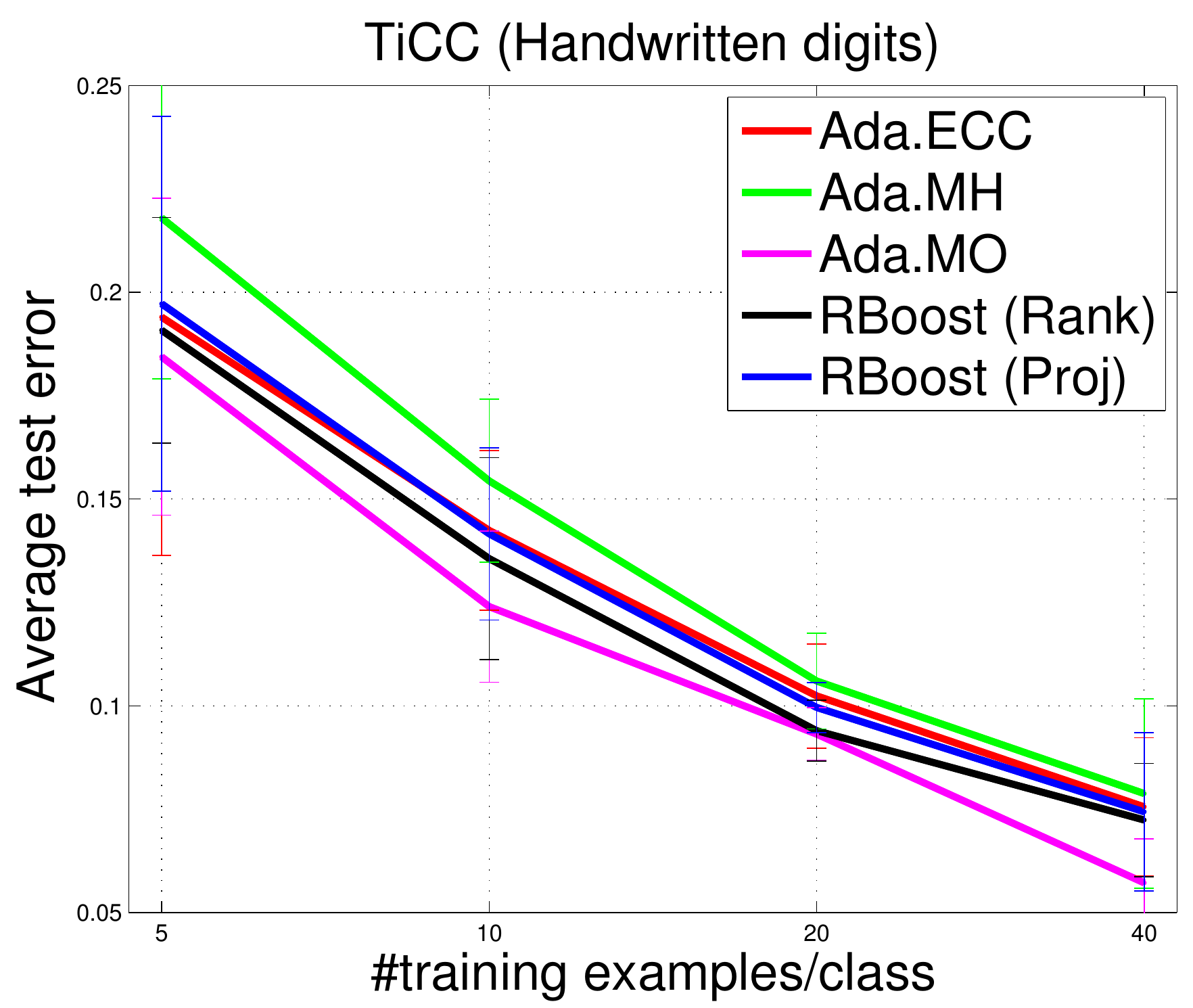}
        \includegraphics[width=0.24\textwidth,clip]{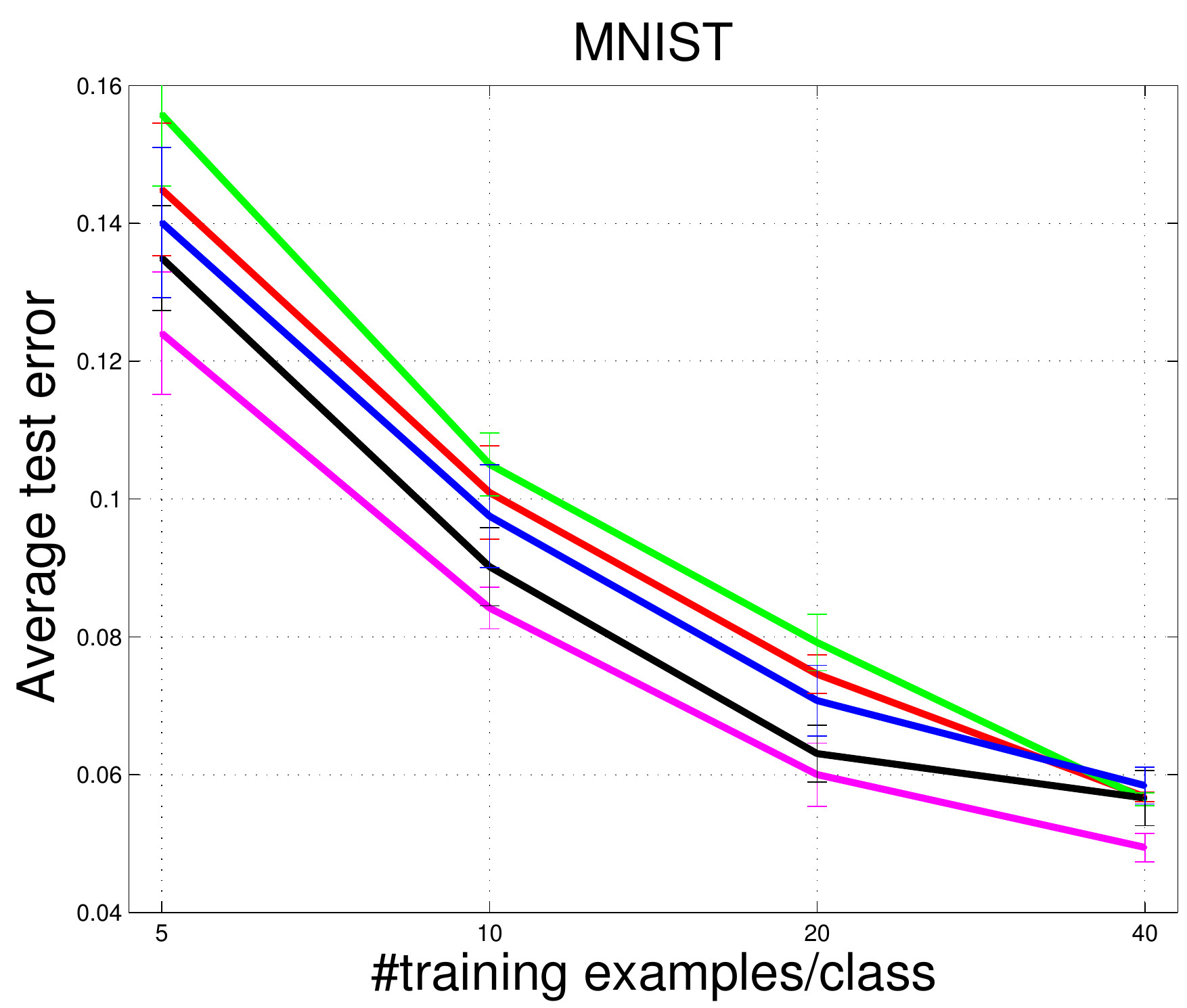}
    \end{center}
    \caption{
    Average test errors on handwritten digits data sets by varying the amount of training samples per class.
    \textbf{left:} TiCC. \textbf{right:} MNIST.
    Best viewed in color.
    }
    \label{fig:HandWritten}
  \end{figure}

\subsection{Handwritten digits data sets}
        In this experiment, we vary the number of training samples and
        compare the performance of different boosting algorithms.  We
        evaluate our algorithms on popular handwritten digits data
        sets (MNIST) and a more difficult handwritten character data
        sets (TiCC) \cite{Maaten2009New}.  We first resize the
        original image to a resolution of $28 \times 28$ pixels and
        apply a deslant technique, similar to the one applied in
        \cite{Meier2011Better}.  We then extract $3$ levels of HOG
        features with $50\%$ block overlapping (spatial pyramid
        scheme) \cite{Lazebnik2006Beyond}.  The block size in each
        level is $4 \times 4$, $7 \times 7$ and $14 \times 14$ pixels,
        respectively.  Extracted HOG features from all levels are
        concatenated.  In total, there are $2,172$ HOG features.  We
        perform dimensionality reduction using Principal Component
        Analysis (PCA) on training samples (similar to PCA-SIFT
        \cite{Ke2004PCASIFT}).  Our PCA projected data captures $90\%$
        of the original data variance.  For \randomRank, we set $n$ to
        be $100K$.  For \randomProj, we choose the best parameter from
        $\{ 5 \times 10^{-8}, 10^{-7}, 5 \times 10^{-7}, 10^{-6}, 5
        \times 10^{-6}, 10^{-5} \}$.  For MNIST, we randomly select
        $5$, $10$, $20$ and $40$ samples as training sets and use the
        original test sets of $10,000$ samples.  For TiCC, we randomly
        select $5$, $10$, $20$ and $40$ samples from each class as
        training sets and use $50$ unseen samples from each class as
        test sets.  All experiments are repeated $5$ times ($1000$
        boosting iterations) and the results are summarized in
        Fig.~\ref{fig:HandWritten}.  For handwritten digits,
        we observe that our algorithms and AdaBoost.MO
        perform slightly better than
        AdaBoost.ECC and AdaBoost.MH.

\revised{\emph{Note that AdaBoost.MO trains $2^{k-1} - 1$ weak classifiers
at each iteration, while both of our algorithms train $1$ weak classifier
at each iteration}.
For example, on MNIST
digit data sets, the AdaBoost.MO model would have a total of $511,000$
weak classifiers ($1000$ boosting iteration)
while our multi-class classifier would only consist of $1000$
weak classifiers.
In other words, AdaBoost.MO is
$511$ times slower during performance evaluation.
We suspect that these additional weak classifiers improve the
generalization performance of AdaBoost.MO for handwritten digits
data sets, where there is a large variation within the same class label.
}

\begin{figure}[t]
    \centering
        \includegraphics[width=0.24\textwidth,clip]{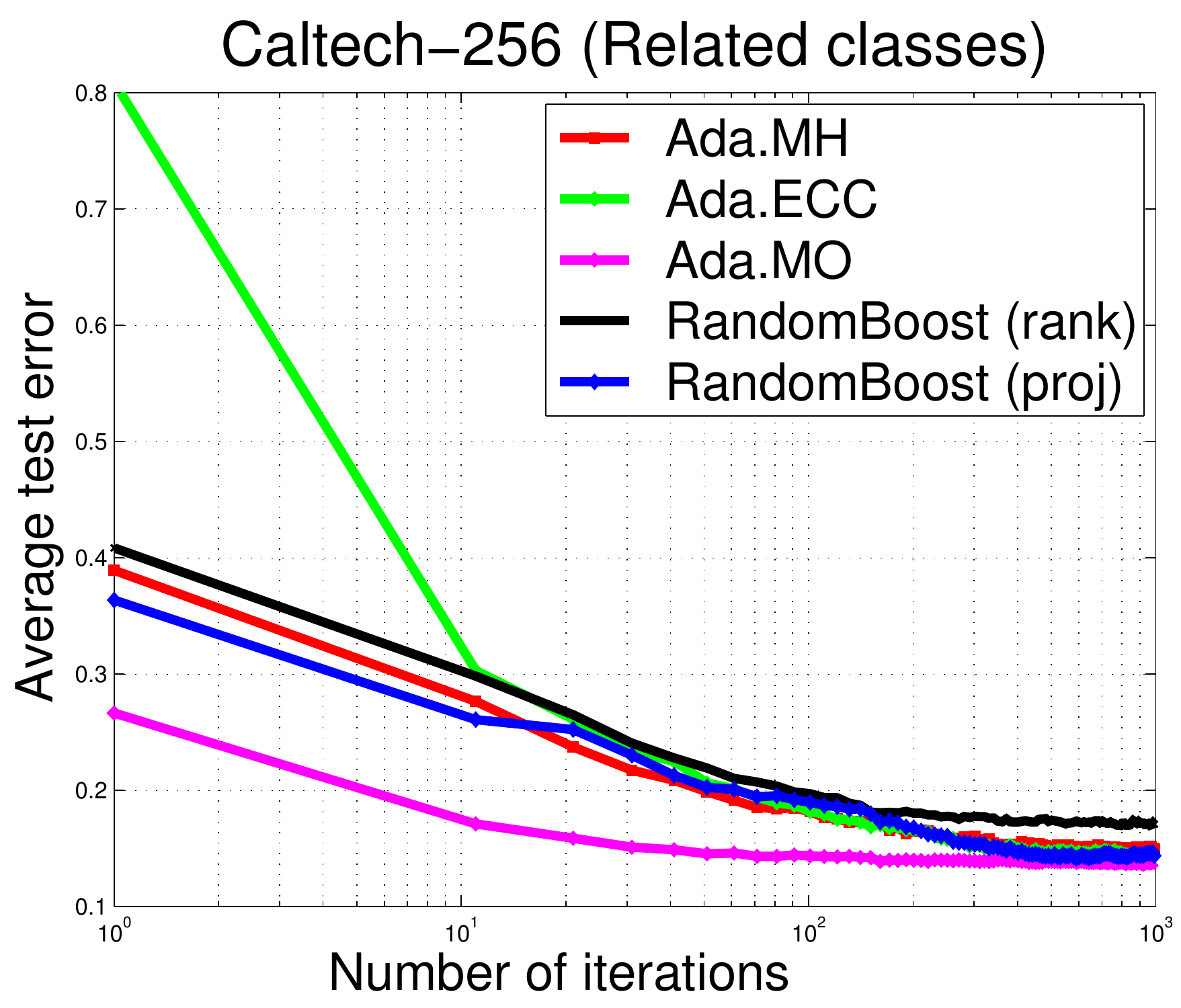}
        \includegraphics[width=0.24\textwidth,clip]{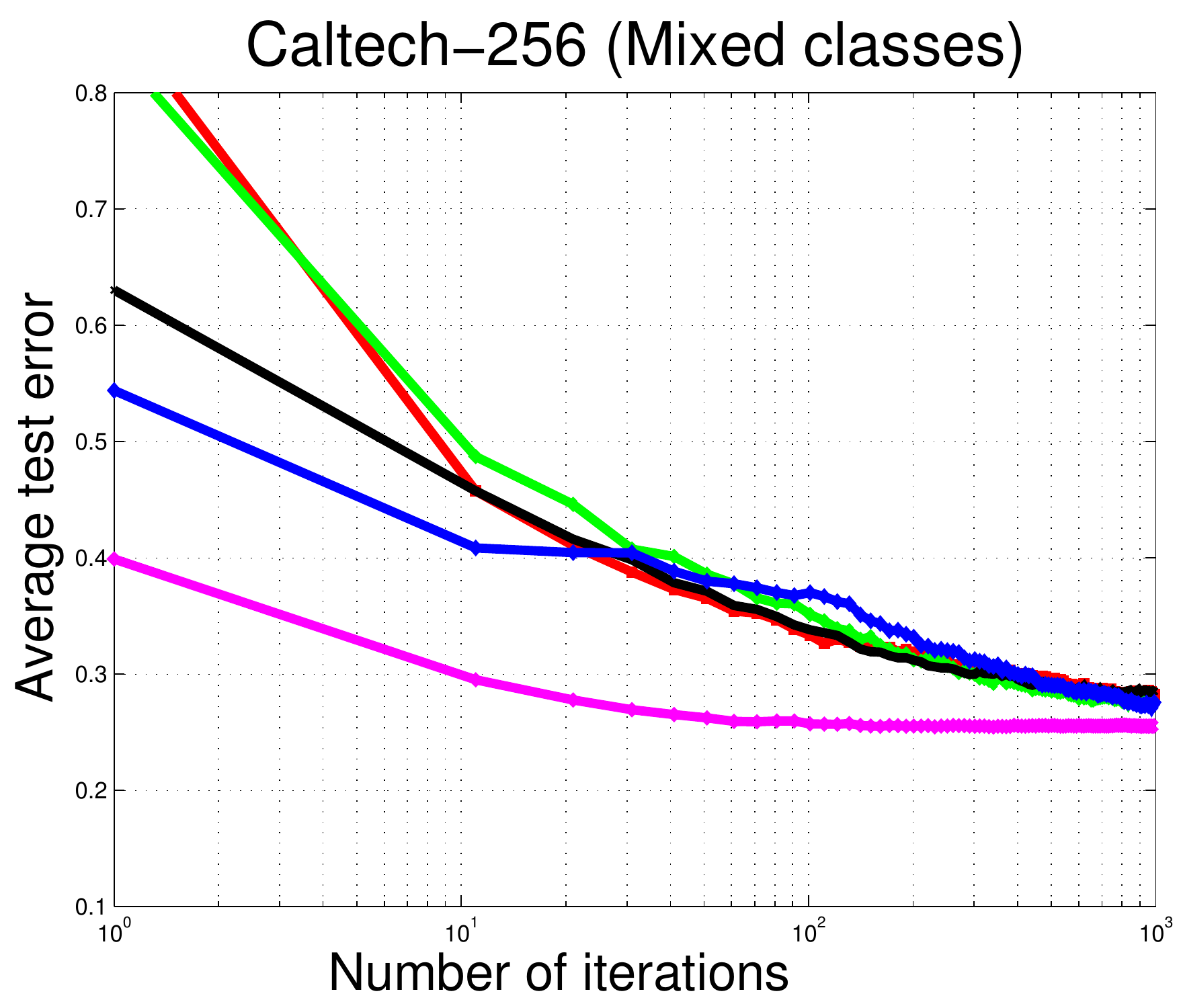}
    \caption{
    Average test error curves on Caltech-$256$ data sets.
    \textbf{left:} Related classes. \textbf{right:} Mixed classes.
    Best viewed in color.
    }
    \label{fig:Caltech}
  \end{figure}

\subsection{Caltech-$256$ data sets}
We also evaluate our algorithms on a subset of Caltech-$256$.
We consider two types of classes as experimented in
\cite{Tommasi2010Safety}: related classes\footnote{bulldozer,
firetruck, motorbikes, schoolbus, snowmobile and car-side.}
and mixed classes\footnote{dog,
horse, zebra, helicopter, fighter-jet, motorbikes, car-side, dolphin,
goose and cactus.}.
We use the same pre-computed features used in
\cite{Gehler2009Feature}, \ie, PHOG, appearance, region covariance and LBP.
The data set is randomly split into two groups: $25\%$ for training and the rest for evaluation.
On average, there are $56$ training samples per class for related classes and $48$ training samples per class for mixed classes.
We use the same setting as in the handwritten digits experiment.
        The average test accuracies of $5$ runs are reported in
        Fig.~\ref{fig:Caltech}.
\revised{
        We see again that AdaBoost.MO converges faster.
        This is not surprising as we previously mentioned that
        AdaBoost.MO trains $2^{k-1} - 1$ weak classifiers
        at each iteration.
        As AdaBoost.MO is not scalable on a large
        number of classes, it is extremely slow during 
        performance evaluation.
        Based on our experiments, 
        AdaBoost.MO requires approximately $2^{k-1}$ times  
        as much execution time as other algorithms
        during test time.
        The second observation is that 
        our proposed methods usually converge slightly faster than
        AdaBoost.MH and AdaBoost.MH. 
     
}

\section{Conclusion}
\label{sec:con}

We have shown that, by exploiting random projections, it is possible
to devise a single-vector parameterized boosting-based classifier,  which is
capable of performing multi-class classification.  This approach
represents a significant divergence from existing multi-class
classification approaches, as neither the number of classifiers, nor
the number of parameters will grow as the number of classes increases.
We have demonstrated two examples of the proposed approach, in the
form of multi-class boosting algorithms,  which solve the pairwise
ranking problem and pairwise loss in the large margin framework.
These algorithms are effective and can cope with both binary and
multi-class classification problems as demonstrated on both synthetic
and real world data sets.

Our goal thus far has been to formulate a single-vector  multi-class
boosting classifier, which demonstrates promising results and
alleviate the proliferation of parameters typically faced in
large-scale problems.
Reducing the training time required by both methods for large-scale
problems is yet another challenging issue.
Techniques, such as
approximating the weak classifiers' threshold \cite{Pham2007Fast},
approximating the weak classifiers using FilterBoost \cite{Bradley2008FilterBoost}
or incremental weak classifier learning \cite{Pang2012Incremental}, offer
an interesting approach towards this goal.
An exploration on the effect of the size of random projection matrices
on convergence and scaling could also be carried out.

\appendices
\label{app:theorem}
\def\lVert{\left\Vert}
\def\rVert{\right\Vert}

\def\xb{ {\boldsymbol x} }

\def\ub{ {\boldsymbol w} }

\def\wb{ {\boldsymbol w} }

\def\eb{ {\bf e} }

\def\zb{ {\boldsymbol z} }

\def\vb{ {\boldsymbol q} }

\def\Hb{ {\bf H} }

\def\Rb{ {\bf P} }

\section{Proof of Theorem \ref{thm:multiclass} }

\paragraph{Margin Preservation} 
If the boosting has margin $\gamma$,
 then for any $\delta,\epsilon \in(0,1) $ and any \[n >
 \frac{12}{3\epsilon^2-2\epsilon^3} \ln\frac{6km}{\delta},\] with
 probability at least $1-\delta$, the boosting associated with
 projected weak learners' coefficients, $\bP\bw_r, \forany r$, and the
 projected weak learners' response, $\bP \bH$, has margin no less than
 \[ -\frac{1+3\epsilon}{1-\epsilon^2}+ \frac{\sqrt{1 - \epsilon^2} }{1 + \epsilon}+\frac{1 + \epsilon }{1 - \epsilon } \gamma.\]

\begin{proof}
By margin definition, for all $(\xb,y)\in S$
\begin{align*}
\frac{\inner{\ub_y}{\Hb(\xb)}}{{\|\ub_y\|}{\|\Hb(\xb)\|} } - \max_{y' \neq y} \frac{\inner{\ub_{y'}}{\Hb(\xb)}}{{\|\ub_{y'}\|}{\|\Hb(\xb)\|}}\ge \gamma.
\end{align*}
Take any single $(\xb,y) \in S$, 
and let $\hat{y} = \argmax_{y'}
\frac{\inner{\ub_{y'}}{\Hb(\xb)}}{{\|\ub_{y'}\|}{\|\Hb(\xb)\|}}$ and $\hat{\hat{y}} =
\argmax_{y''}
\frac{\inner{\Rb\ub_{y''}}{\Rb\Hb(\xb)}}{{\|\Rb\ub_{y''}\|}{\|\Rb\Hb(\xb)\|}}$, we have by
Lemma~\ref{lem:inner} (substituting $\xb$ in Lemma~\ref{lem:inner} by $\Hb(\xb)$ ) and union bound over all $y' \neq y$
 \begin{align*}
\Pr\Big(\frac{ \inner{\Rb\ub_y}{\Rb\Hb(\xb)}}{ \|{\Rb\ub_y}\|\|{\Rb\Hb(\xb)}\|} &
\ge 1-\frac{1 + \epsilon }{ 1 - \epsilon    } \big(1-\frac{ \inner{\ub_y}{\Hb(\xb)}}{
\|{\ub_y}\|\|{\Hb(\xb)}\|}\big)\Big) 
\\ &\ge
1- 6\exp{(-\frac{n}{2} (\frac{\epsilon^2}{2} -\frac{\epsilon^3}{3} ))},
\\
\Pr\Big( \forall y' \neq y, \\
\frac{\inner{\Rb\ub_{y' }}{\Rb\Hb(\xb)}}{\|{\Rb\ub_{y'
}}\|\|{\Rb\Hb(\xb)}\|} & \leq 1-\frac{\sqrt{1 - \epsilon^2} }{1 + \epsilon }
+\frac{\epsilon}{1+\epsilon}\\ &+\frac{1-\epsilon}{1+\epsilon}  \cdot   \frac{
\inner{\ub_{y' }}{\Hb(\xb)}}{ \|{\ub_{y' }}\|\|{\Hb(\xb)}\|}  \Big)
\\&\geq 1- 6(k-1)\exp{ \left(-\frac{n}{2} \cdot (\frac{\epsilon^2}{2} -\frac{\epsilon^3}{3} )
\right)}.
\end{align*} 
By the union bound again, 
with probability at least 
\[
 1 - 6km \cdot \exp{   \left( -\frac{n}{2} (\frac{\epsilon^2}{2} -\frac{\epsilon^3}{3} )
 \right)}
 \] for all $( \xb,y) \in S$, we have
\begin{align*} 
&\frac{ \inner{\Rb\ub_y}{\Rb\Hb(\xb)}}{ \|{\Rb\ub_y}\|\|{\Rb\Hb(\xb)}\|} - \max_{y' \neq y}\frac{ \inner{\Rb\ub_{y'}}{\Rb\Hb(\xb)}}{ \|{\Rb\ub_{y'}}\|\|{\Rb\Hb(\xb)}\|}\\
 \ge& -\frac{ 1+3\epsilon }{ 1-\epsilon^2 }+ \frac{\sqrt{ 1 - \epsilon^2 } }{ 1 + \epsilon }+\frac{ 1 + \epsilon }{ 1 - \epsilon } \gamma
\end{align*} 
Let $\delta =  6km\exp{(-\frac{n}{2} (\frac{\epsilon^2}{2} -\frac{\epsilon^3}{3} ))}$, we
have the desirable lower bound on $n$.
\end{proof}

\def\ub{ {\bf u} }

\section{Proof of Theorem \ref{thm:single-param-trick} }

\paragraph{Single-vector Multi-class Boosting}
Given any random Gaussian matrix $\bR \in \RR^{n \times kT}$, whose
entry $\bR(i,j) = $ $ \frac{1}{\sqrt{n}}{a_{ij}}$ where $a_{ij}$ is i.i.d.
random variables from $\Ncal(0,1)$.  Denote $\bP_y \in \RR^{n,T}$ as
the $y$-th submatrix of $\bR$, that is $\bR =
        [\bP_1,$ $ \cdots,\bP_r,$ $\cdots, $ $ \bP_k]$.
If the boosting has margin
$\gamma$, then for any $\delta,\epsilon \in (0,1] $ and any \[n
>\frac{12}{ 3\epsilon^2-2\epsilon^3  } \ln{\frac{6m(k-1)}{\delta}} ,\]
there exists a single-vector $\bv \in \RR^n $, such that
\begin{align}
\Pr\Big( %
&\frac{\inner {\bv}{\bP_y  \bH(\bx) } -\inner {\bv}{\bP_{y'}\bH(\bx)}}{\|\bv\|\sqrt{\|\bP_y\bH(\bx) \|^2+\|\bP_{y'}\bH(\bx) \|^2}} \ge
\nonumber\\
&
\frac{-2\epsilon}{1-\epsilon}+\frac{1+\epsilon}{{\sqrt{2k}}(1-\epsilon)}\gamma \Big)\ge 1-\delta,
\quad \quad
            \forall y'\neq y.
\end{align}

\begin{proof}
By the margin definition, there exists $\wb$, such that for all $(\Hb(\xb),y)\in S$,
\begin{align*}
\frac{\inner{\wb_y}{\Hb(\xb)}}{\|{\wb_y}\|\|{\Hb(\xb)}\|} - \frac{\inner{\wb_{y'}}{\Hb(\xb)}}{\|{\wb_{y'}}\|\|{\Hb(\xb)}\|}\ge \gamma, \forall y' \neq y.
\end{align*}
Without losing generality, we assume $\wb_y$ has unit length, which can always be achieved
by normalization,  for all $y$. So now 
\begin{align*}
{\inner{\wb_y}{\Hb(\xb)}} - {\inner{\wb_{y'}}{\Hb(\xb)}}\ge \gamma{\|{\Hb(\xb)}\|}, \forall y' \neq y.
\end{align*}

 This can be rewritten as 
\begin{align*}
\inner{\ub}{\Hb(\xb) \otimes \eb_ y } - \inner{\ub}{\Hb(\xb)  \otimes {\eb_{y'}}}
=
\\
\inner{\Hb(\xb) \otimes \eb_ y -\Hb(\xb) \otimes \eb_{y'} }{\ub} \ge \gamma{\|{\Hb(\xb)}\|},
\end{align*} 
where $\ub $ is the concatenation of all $\wb_y$, \ie $\ub = [\wb_1^\T, \cdots,
\wb_y^\T, \cdots \wb_k ^\T]^\T$; the vector $\eb_y \in \RR^k $
with 1 at the $y$-th dimension and zeros in others, and $\otimes $ is the tensor product.
Define $\zb_{\xb,y'} = \Hb(\xb) \otimes \eb_ y -\Hb(\xb) \otimes \eb_{y'} $.

Applying  Lemma~\ref{lem:inner} to $\ub$ and $\zb_{\xb,y'} $, 
we have for a given $(\xb,y)$ and a fixed $y' \neq y$, 
with probability at least $1-6\exp{(-\frac{n}{2} (\frac{\epsilon^2}{2}
-\frac{\epsilon^3}{3} ))}$,
the following holds,
\begin{align*}
 &\frac{\inner{\Rb\ub}{\Rb\zb_{\xb,y'}}}{\|\Rb\ub\|\|\Rb\zb_{\xb,y'}\|} \\
 &\ge 1-\frac{1+\epsilon}{1-\epsilon} \left(
 1-\frac{\inner{\Hb(\xb) \otimes \eb_ y -\Hb(\xb) \otimes \eb_{y'}
 }{\ub}}{\sqrt{2}\|\ub\|\|\Hb(\xb)\|}\right)
 \\
 &=1-\frac{1+\epsilon}{1-\epsilon}+\\&\frac{1+\epsilon}{{\sqrt{2}}
 ( 1 - \epsilon)}
 \left(\frac{\inner{\wb_y}{\Hb(\xb)}}{\|\ub\|\|\Hb(\xb)\|}-\frac{\inner{\wb_{y'}}{\Hb(\xb)}}{\|\ub\|\|\Hb(\xb)\|}
 \right)\\
 & \ge 1-\frac{1+\epsilon}{1-\epsilon}+\frac{1+\epsilon}{{\sqrt{2 k }}(1-\epsilon)}\gamma\\
 & = \frac{-2\epsilon}{1-\epsilon}+\frac{1+\epsilon}{{\sqrt{2 k}}(1-\epsilon)}\gamma.
  \end{align*}
By the union bound over $m$ samples and $k-1$ many $y'$,
\begin{align*}
 \Pr \Big( &\exists (\xb,y) \in S,\exists y' \neq y,\\& \frac{\inner{\Rb\ub}{\Rb\zb_{\xb,y'}}}{\|\Rb\ub\|\|\Rb\zb_{\xb,y'}\|} <  \frac{-2\epsilon}{1-\epsilon}+\frac{1+\epsilon}{{\sqrt{2L}}(1-\epsilon)}\gamma\Big) \\
& \leq 6m (  k-1  )\exp{  \left( -\frac{n}{2}  \cdot (\frac{\epsilon^2}{2} -\frac{\epsilon^3}{3}
)\right)}. 
 \end{align*}
Let $\vb = \Rb\ub$, we have
 \begin{align*} 
     \inner{\Rb\ub}{\Rb\zb_{\xb,y'}} = \inner{\vb}{\Rb_y\xb - \Rb_{y'}\xb}.
 \end{align*}
 Setting $\delta = 6m (k-1) \exp{(-\frac{n}{2} (\frac{\epsilon^2}{2} -\frac{\epsilon^3}{3} ))}$ gives the bound on $n$. Thus 
 \begin{align*}
\Pr\Big( &\forall (\xb,y) \in S, \forall y'\neq y, 
 \\ &
\frac{\inner {\vb}{\Rb_y 
\Hb(\xb)} -\inner
{\vb}{\Rb_{y'}\Hb(\xb)}}{\|\vb\|\sqrt{\|\Rb_y\Hb(\xb)\|^2+\|\Rb_{y'}\Hb(\xb)\|^2}} \ge\\
&\frac{-2\epsilon}{1-\epsilon}+\frac{1+\epsilon}{{\sqrt{2k}}(1-\epsilon)}\gamma \Big)\ge 1-\delta,
\end{align*} which concludes the proof.
\end{proof}

We have used the following two lemmas for proving the above two theorems. 

\begin{lem}
\label{lem:inner}
For any $\wb,\xb \in \RR^d$, any random Gaussian matrix $\Rb \in \RR^{n,d}$ whose entry $\Rb(i,j) = \frac{1}{\sqrt{n}}{a_{ij}}$ where $a_{ij}$s are i.i.d. random variables from $\Ncal(0,1)$, \[ \gamma = \frac{\inner{\wb}{\xb}}{\|\wb\|\|\xb\|}, \] for any $\epsilon \in (0,1) $, if  $\gamma \in (0,1]$,
 then with probability at least \[1-6\exp{(-\frac{n}{2} (\frac{\epsilon^2}{2} -\frac{\epsilon^3}{3} ))},\] the following holds
 \begin{align}
  1-\frac{(1 + \epsilon) }{(1 - \epsilon )}{(1-\gamma)} \leq \frac{\inner{\Rb\wb}{\Rb\xb}}{\|{\Rb\wb}\|\|{\Rb\xb}\|}  \nonumber\\ \leq 1-\frac{\sqrt{(1 - \epsilon^2)} }{(1 + \epsilon )} +\frac{\epsilon}{(1+\epsilon)} +\frac{(1-\epsilon)}{(1+\epsilon)} {\gamma}.
  \label{eq:inner}
  \end{align}
\end{lem}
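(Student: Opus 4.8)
The plan is to reduce the statement to the classical Johnson--Lindenstrauss norm-preservation inequality applied to three carefully chosen vectors, and then to convert the resulting norm bounds into a cosine bound by way of a polarization identity. First I would observe that both $\gamma$ and the projected ratio $\frac{\inner{\Rb\wb}{\Rb\xb}}{\|\Rb\wb\|\|\Rb\xb\|}$ are invariant under rescaling of $\wb$ and $\xb$, so without loss of generality I assume $\|\wb\|=\|\xb\|=1$, whence $\gamma=\inner{\wb}{\xb}$ and $\|\wb-\xb\|^2=2(1-\gamma)$. Because $\gamma\in(0,1]$ we have $1-\gamma\ge 0$, which keeps the subsequent two-sided bounds correctly ordered.

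The engine is the standard concentration bound for a Gaussian random projection: for any fixed $\zb\in\RR^d$, $\Pr[\,|\|\Rb\zb\|^2-\|\zb\|^2|>\epsilon\|\zb\|^2\,]\le 2\exp(-\frac{n}{2}(\frac{\epsilon^2}{2}-\frac{\epsilon^3}{3}))$. I would apply this to the three vectors $\wb$, $\xb$, and $\wb-\xb$ and take a union bound, so that with probability at least $1-6\exp(-\frac{n}{2}(\frac{\epsilon^2}{2}-\frac{\epsilon^3}{3}))$ all three norms are simultaneously preserved: $1-\epsilon\le\|\Rb\wb\|^2\le 1+\epsilon$, the same for $\|\Rb\xb\|^2$, and $(1-\epsilon)\cdot 2(1-\gamma)\le\|\Rb(\wb-\xb)\|^2\le(1+\epsilon)\cdot 2(1-\gamma)$. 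The factor of six in the stated probability is exactly three vectors times the two-sided bound.

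Next I would express the projected cosine through the polarization identity $\|\Rb(\wb-\xb)\|^2=\|\Rb\wb\|^2+\|\Rb\xb\|^2-2\inner{\Rb\wb}{\Rb\xb}$, rearranged as
\[
1-\frac{\inner{\Rb\wb}{\Rb\xb}}{\|\Rb\wb\|\|\Rb\xb\|}=\frac{\|\Rb(\wb-\xb)\|^2-(\|\Rb\wb\|-\|\Rb\xb\|)^2}{2\|\Rb\wb\|\|\Rb\xb\|}.
\]
For the lower bound on the ratio (the left inequality in \eqref{eq:inner}) I would simply discard the nonnegative term $(\|\Rb\wb\|-\|\Rb\xb\|)^2\ge 0$, then insert the upper JL bound on $\|\Rb(\wb-\xb)\|^2$ and the lower bound $\|\Rb\wb\|\|\Rb\xb\|\ge 1-\epsilon$ on the denominator; this yields $1-\frac{\inner{\Rb\wb}{\Rb\xb}}{\|\Rb\wb\|\|\Rb\xb\|}\le\frac{1+\epsilon}{1-\epsilon}(1-\gamma)$ at once. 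For the upper bound on the ratio I would instead lower-bound the numerator, combining the lower JL bound on $\|\Rb(\wb-\xb)\|^2$ with $(\|\Rb\wb\|-\|\Rb\xb\|)^2\le(\sqrt{1+\epsilon}-\sqrt{1-\epsilon})^2=2-2\sqrt{1-\epsilon^2}$, and upper-bound the denominator by $2(1+\epsilon)$. Substituting $\|\wb-\xb\|^2=2(1-\gamma)$ and simplifying produces the right-hand side of \eqref{eq:inner}.

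I expect the main obstacle to be the upper-bound half, specifically controlling the cross term $(\|\Rb\wb\|-\|\Rb\xb\|)^2$: this is where the factor $\sqrt{1-\epsilon^2}$ originates, since the two projected norms can differ by as much as $\sqrt{1+\epsilon}-\sqrt{1-\epsilon}$. The remaining algebra, collapsing $1-\frac{1-\epsilon}{1+\epsilon}+\frac{1}{1+\epsilon}$ into $1+\frac{\epsilon}{1+\epsilon}$ and matching the stated form term by term, is routine but must be carried out carefully. Finally, setting the failure probability equal to $\delta$ and solving for $n$ recovers the threshold claimed in the two theorems, so the same lemma powers both margin-preservation arguments.
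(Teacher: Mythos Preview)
Your proposal is correct and follows the same high-level strategy as the paper: apply the Johnson--Lindenstrauss norm-preservation bound to the three vectors $\wb$, $\xb$, and $\wb-\xb$, take a union bound (giving the factor~$6$), and convert the resulting norm inequalities into a two-sided cosine bound. The difference is in the conversion step. The paper works with the normalized projected vectors and uses the half-angle identity $\cos\alpha = 1-\tfrac12\bigl\|\tfrac{\Rb\xb}{\|\Rb\xb\|}-\tfrac{\Rb\wb}{\|\Rb\wb\|}\bigr\|^2$, then invokes two geometric facts about how the distance between rescaled unit vectors behaves (its inequalities (r1) and (r2)); these are correct but require a short side argument about acute angles. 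Your route via the polarization identity
\[
1-\frac{\inner{\Rb\wb}{\Rb\xb}}{\|\Rb\wb\|\|\Rb\xb\|}
=\frac{\|\Rb(\wb-\xb)\|^2-(\|\Rb\wb\|-\|\Rb\xb\|)^2}{2\|\Rb\wb\|\|\Rb\xb\|}
\]
is more direct: dropping the nonnegative cross term gives the lower bound immediately, and bounding $(\|\Rb\wb\|-\|\Rb\xb\|)^2\le(\sqrt{1+\epsilon}-\sqrt{1-\epsilon})^2=2-2\sqrt{1-\epsilon^2}$ produces the $\sqrt{1-\epsilon^2}$ term without any appeal to geometry. Both arguments land on exactly the same constants; yours just gets there with less scaffolding.
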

\begin{proof}
From Lemma \ref{lem:JL} and union bound, we know 
  \begin{align}
 (1 - \epsilon) \leq \frac{\|{\Rb\xb}\|^2}{\|\xb\|^2} \leq (1 + \epsilon), (1 - \epsilon) \leq \frac{\|{\Rb\wb}\|^2}{\|\wb\|^2} \leq (1 + \epsilon)
 \label{eq:JL1}
  \end{align} holds with probability at least $1-4\exp{(-\frac{n}{2} (\frac{\epsilon^2}{2} -\frac{\epsilon^3}{3} ))}$.
  When \eqref{eq:JL1} holds, due to the fact that increasing the length of two unit length
  vectors (\ie from $\frac{\Rb\xb}{\|\Rb\xb\|}$ and $\frac{\Rb\wb}{\|\Rb\wb\|}$ to
  $\frac{\Rb\xb}{\sqrt{(1-\epsilon)}\|\xb\|}$ and
  $\frac{\Rb\wb}{\sqrt{(1-\epsilon)}\|\wb\|}$) increases the norm of their
  difference\footnote{Note that the opposite does not hold in general.}, 
  we have
 \begin{align}
 \lVert \frac{\Rb\xb}{\|\Rb\xb\|} - \frac{\Rb\wb}{\|\Rb\wb\|} \rVert^2 \leq  \lVert \frac{\Rb\xb}{\sqrt{(1-\epsilon)}\|\xb\|} - \frac{\Rb\wb}{\sqrt{(1-\epsilon)}\|\wb\|} \rVert^2.
   \label{eq:r1}
 \end{align}
 It is easy to prove that
  \begin{align}
  \lVert \frac{\Rb\xb}{\|\xb\|} -\frac{\Rb\wb}{\|\wb\|} \rVert^2  \leq& \lVert \sqrt{(1-\epsilon)}\frac{\Rb\xb}{\|\Rb\xb\|} - \sqrt{(1+\epsilon)}\frac{\Rb\wb}{\|\Rb\wb\|} \rVert ^2 \nonumber\\ 
  \leq& \lVert \sqrt{(1+\epsilon)}(\frac{\Rb\xb}{\|\Rb\xb\|} -\frac{\Rb\wb}{\|\Rb\wb\|}) \rVert^2 \nonumber\\ &+ (\sqrt{(1+\epsilon)}-\sqrt{(1-\epsilon)})^2.
     \label{eq:r2}
  \end{align}
  The first inequality is due to \eqref{eq:JL1}, the second inequality is due to the property of an acute angle.

 Applying Lemma \ref{lem:JL} to the vector $\left( \frac{\xb}{\|\xb\|} -\frac{\wb}{\|\wb\|} \right)$, we have
 \begin{align}
  & {(1 - \epsilon) } \lVert \frac{\xb}{\|\xb\|} -\frac{\wb}{\|\wb\|} \rVert^2 \leq \lVert \frac{\Rb\xb}{\|\xb\|} - \frac{\Rb\wb}{\|\wb\|} \rVert^2 \nonumber \\
   &\leq {(1 + \epsilon) } \lVert \frac{\xb}{\|\xb\|} -\frac{\wb}{\|\wb\|} \rVert^2 %
   \label{eq:xw}
    \end{align}
    holds for certain probability.
    
    Let $\beta$ be the angle of $\wb$ and $ \xb$, and $\alpha$ be the angle of $\Rb\wb$ and $ \Rb\xb$, we have
    \begin{align}
     \gamma = \frac{\inner{\wb}{\xb}}{\|{\wb}\|\|{\xb}\|} = \cos(\beta) = 1-2\sin^2(\frac{\beta}{2}) \nonumber \\ = 1-\frac{1}{2} \lVert \frac{\xb}{\|\xb\|} - \frac{\wb}{\|\wb\|} \rVert^2.
     \label{eq:cos1}
     \end{align}
     Similarly
     \begin{align}
     \frac{\inner{\Rb\wb}{\Rb\xb}}{\|{\Rb\wb}\|\|{\Rb\xb}\|} = 1-\frac{1}{2} \lVert \frac{\Rb\xb}{\|\Rb\xb\|} - \frac{\Rb\wb}{\|\Rb\wb\|} \rVert^2.
     \label{eq:cos2}
     \end{align}
     Using \eqref{eq:xw}, \eqref{eq:r1} and \eqref{eq:r2} we get $\lVert
     \frac{\Rb\xb}{\|\Rb\xb\|} - \frac{\Rb\wb}{\|\Rb\wb\|} \rVert^2$ is bounded below and
     above by two terms involving $\lVert \frac{\xb}{\|\xb\|} - \frac{\wb}{\|\wb\|} \rVert^2$.
     Plugging \eqref{eq:cos1} and \eqref{eq:cos2} into the two side bounds, we get
     \eqref{eq:inner}.
  Here we applied Lemma \ref{lem:JL} to 3 vectors, namely $\xb$, $\wb$, and  $(\frac{\xb}{\|\xb\|} -\frac{\wb}{\|\wb\|})$, thus by union bound, the probability of the above holds is at least $1-6\exp{(-\frac{n}{2} (\frac{\epsilon^2}{2} -\frac{\epsilon^3}{3} ))}$.
 \end{proof}
 
\begin{lem} 
\label{lem:JL}
For any $\xb \in \RR^T$, any random Gaussian matrix $\Rb \in \RR^{n \times T}$ whose entry $\Rb(i,j) = \frac{1}{\sqrt{n}}{a_{ij}}$ where $a_{ij}$s are i.i.d. random variables from $\Ncal(0,1)$, for any $\epsilon \in (0,1) $,
   \begin{align*}
 &\Pr\Big((1 - \epsilon) \leq \frac{\|{\Rb\xb}\|^2}{\|\xb\|^2} \leq (1 + \epsilon)\Big)\\
 & \ge 1-2\exp{(-\frac{n}{2} (\frac{\epsilon^2}{2} -\frac{\epsilon^3}{3} ))}.
 \end{align*}
\end{lem}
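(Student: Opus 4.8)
The plan is to recognize Lemma~\ref{lem:JL} as the classical Johnson--Lindenstrauss concentration bound and to establish it through a Chernoff argument applied to a chi-squared random variable. First I would reduce to the unit-norm case: the ratio $\|\Rb\xb\|^2/\|\xb\|^2$ is invariant under rescaling $\xb$, so I may assume $\|\xb\|=1$ without loss of generality, which eliminates the denominators throughout.

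The key structural observation is that each coordinate of $\Rb\xb$ is Gaussian. Writing $(\Rb\xb)_i=\frac{1}{\sqrt n}\sum_{j}a_{ij}x_j$ and using that the $a_{ij}$ are i.i.d.\ $\Ncal(0,1)$, the inner sum is distributed as $\Ncal(0,\|\xb\|^2)$; hence $(\Rb\xb)_i=\frac{1}{\sqrt n}g_i$ with $g_1,\dots,g_n$ i.i.d.\ standard normal. Consequently $\|\Rb\xb\|^2=\frac1n\sum_{i=1}^n g_i^2$, so that $S:=n\,\|\Rb\xb\|^2=\sum_i g_i^2$ follows a $\chi^2_n$ distribution with mean $n$, and the target event is exactly $(1-\epsilon)n\le S\le(1+\epsilon)n$.

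Next I would bound the two tails separately by Chernoff's method, using the moment generating function of the $\chi^2_n$ law, $t\mapsto(1-2t)^{-n/2}$ for $t<1/2$. For the upper tail, minimizing $e^{-t(1+\epsilon)n}(1-2t)^{-n/2}$ over $t$ gives the optimal $t=\epsilon/(2(1+\epsilon))$ and yields $\Pr(S\ge(1+\epsilon)n)\le\bigl((1+\epsilon)e^{-\epsilon}\bigr)^{n/2}$; the symmetric computation for the lower tail, with optimal $t=\epsilon/(2(1-\epsilon))$, gives $\Pr(S\le(1-\epsilon)n)\le\bigl((1-\epsilon)e^{\epsilon}\bigr)^{n/2}$. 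A union bound over the two tails then produces the factor $2$ in the final bound.

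The main obstacle---and essentially the only place requiring care---is converting the two optimized exponents into the stated form $-\frac n2\bigl(\tfrac{\epsilon^2}{2}-\tfrac{\epsilon^3}{3}\bigr)$. This rests on the logarithmic inequalities $\log(1+\epsilon)\le\epsilon-\tfrac{\epsilon^2}{2}+\tfrac{\epsilon^3}{3}$ and $\log(1-\epsilon)+\epsilon=-\sum_{k\ge2}\epsilon^k/k\le-\tfrac{\epsilon^2}{2}$, which together show that both $\log\bigl((1+\epsilon)e^{-\epsilon}\bigr)$ and $\log\bigl((1-\epsilon)e^{\epsilon}\bigr)$ are at most $-\tfrac{\epsilon^2}{2}+\tfrac{\epsilon^3}{3}$ for $\epsilon\in(0,1)$. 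Multiplying by $n/2$, exponentiating, and applying the union bound over the two tails then delivers precisely the claimed probability $1-2\exp\bigl(-\frac n2(\tfrac{\epsilon^2}{2}-\tfrac{\epsilon^3}{3})\bigr)$, completing the proof.
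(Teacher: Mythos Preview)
Your proposal is correct and follows essentially the same route as the paper: identify $n\|\Rb\xb\|^2/\|\xb\|^2$ as a $\chi^2_n$ variable via the Gaussian stability of each coordinate, apply the chi-square tail bound (which the paper invokes as ``the standard tail bound'' and you derive explicitly via Chernoff/MGF optimization), and then reduce the two exponents to the common form $-\tfrac{n}{2}\bigl(\tfrac{\epsilon^2}{2}-\tfrac{\epsilon^3}{3}\bigr)$ using the same inequalities $\ln(1+\epsilon)\le\epsilon-\tfrac{\epsilon^2}{2}+\tfrac{\epsilon^3}{3}$ and $\ln(1-\epsilon)\le-\epsilon-\tfrac{\epsilon^2}{2}$ before taking the union bound. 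The only cosmetic difference is that the paper prefaces the argument with an expectation computation for $\inner{\Rb\wb}{\Rb\xb}$ (not actually needed for the lemma as stated), whereas you normalize $\|\xb\|=1$ up front; neither changes the substance.
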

\begin{proof}
Obviously, for any $\wb, \xb \in \RR^T$, the following holds:
\begin{align*}
    &{\mathbb E}( \inner{\Rb\wb}{\Rb\xb} ) \nonumber\\
    =&\frac{1}{n}  {\mathbb E}\Big[\sum_{\ell=1}^n\Big( \sum_{j=1}^d a_{\ell j}w_j \sum_{i=1}^d a_{\ell i}x_i \Big)\Big] \nonumber\\
=&\frac{1}{n} \sum_{\ell=1}^n\Big( \sum_{j=1}^d {\mathbb E} (a_{\ell j}^2)w_jx_j \nonumber\\&+ \sum_{j=1}^d 
{\mathbb E} (a_{\ell j})w_j \sum_{i\neq j: i=1}^d
{\mathbb E} (a_{\ell i})x_i \Big). \nonumber\\
= & \inner{\wb}{\xb}. 
\end{align*}
To obtain above, we only used the fact that $\{a_{ij}\}$ are independent with zero mean and unit variance. 

Due to 2-stability of Gaussian distribution, we know $\sum_{j=1}^d a_{\ell j}w_j  =  \|\wb\| z_\ell$ and $ \sum_{j=1}^d a_{\ell j}x_j =  \|\xb\| z'_\ell$, where ${z_\ell}$ and ${z'_\ell} \sim \Ncal(0,1)$.  we have $\inner{\Rb\wb}{\Rb\xb} = \frac{1}{n}\|\wb\| \|\xb\|\sum_{\ell=1}^n z_{\ell}z'_{\ell} $. If $\wb = \xb$, $\sum_{\ell=1}^n z^2_{\ell} $ is chi-square distributed with $n$-degree freedom. Applying the standard tail bound of  chi-square distribution, we have
\begin{align*}
 &\Pr\Big( \inner{\Rb\wb}{\Rb\xb} \leq (1 - \epsilon) \inner{\wb}{\xb} \Big)\\
 & \leq \exp{\Big(\frac{n}{2}(1-(1-\epsilon) + \ln(1-\epsilon) ) \Big)} \leq  \exp{(-\frac{n}{4} \epsilon^2)}.
 \end{align*}
 Here we used the inequality $\ln(1-\epsilon) \leq -\epsilon -\epsilon^2/2$. Similarly, we have
 \begin{align*}
 &\Pr\Big( \inner{\Rb\wb}{\Rb\xb} \leq (1 + \epsilon)\inner{\wb}{\xb} \Big)\\
 & \leq \exp{\Big(\frac{n}{2}(1-(1+\epsilon) + \ln(1+\epsilon) ) \Big)} \leq \exp{(-\frac{n}{2} (\frac{\epsilon^2}{2} -\frac{\epsilon^3}{3} ))}.
 \end{align*}
Here we used the inequality $\ln(1+\epsilon) \leq \epsilon -\epsilon^2/2+\epsilon^3/3$. 
\end{proof}

\bibliographystyle{ieee}

\end{document}